\documentclass{article} %
\usepackage{iclr2026_conference,times}

\usepackage{hyperref}
\usepackage{url}
\usepackage[utf8]{inputenc} %
\usepackage[T1]{fontenc}    %
\usepackage{hyperref}       %
\usepackage{url}            %
\usepackage{booktabs}       %
\usepackage{amsfonts}       %
\usepackage{nicefrac}       %
\usepackage{microtype}      %
\usepackage{xcolor}         %
\usepackage{graphicx}
\usepackage{subcaption}
\graphicspath{{figures/}}
\usepackage{diagbox}
\usepackage{sidecap}
\usepackage[dvipsnames]{xcolor}
\usepackage[normalem]{ulem}

\usepackage{tikz}
\usetikzlibrary{shapes, arrows, positioning, calc, arrows.meta, arrows, intersections}

\newcommand{\semitick}{\textcolor{brown}{\ding{51}}\textsuperscript{\textcolor{brown}{\kern-0.5em\tiny\ding{55}}}}

\usepackage{amsmath}
\usepackage{amssymb}
\usepackage{mathtools}
\usepackage{amsthm}
\usepackage{mathrsfs,amsfonts,bm,bbm}
\usepackage{comment}
\usepackage{makecell}

\usepackage{math_symbols}

\usepackage[capitalize,noabbrev]{cleveref}

\theoremstyle{plain}
\newtheorem{theorem}{Theorem}[section]
\newtheorem{proposition}[theorem]{Proposition}
\newtheorem{lemma}[theorem]{Lemma}

\theoremstyle{definition}

\theoremstyle{remark}

\usepackage{adjustbox} 
\newenvironment{sketchproof}{\noindent\textit{Proof idea:}}{\hfill$\square$\\}

\usepackage[disable,textsize=tiny]{todonotes}

\title{Optimizer Choice Matters For The Emergence of Neural Collapse}

\author{Jim Zhao\thanks{First two authors share equal contribution.},\, Tin Sum Cheng \\
University of Basel\\
\texttt{\{jim.zhao, tinsum.cheng\}@unibas.ch}
\And
Wojciech Masarczyk \\
Warsaw University of Technology \\ IDEAS Research Institute \\
\texttt{wojciech.masarczyk@gmail.com} 
\And
Aurelien Lucchi\\
University of Basel \\
\texttt{aurelien.lucchi@unibas.ch} \\
}

\iclrfinalcopy %
\begin{document}
\definecolor{darkgreen}{rgb}{0.0,0.0,0.0} %
\definecolor{blue}{rgb}{0.0,0.0,0.0} %

\maketitle

\begin{abstract}

Neural Collapse (NC) refers to the emergence of highly symmetric geometric structures in the representations of deep neural networks during the terminal phase of training. Despite its prevalence, the theoretical understanding of NC remains limited. Existing analyses largely ignore the role of the optimizer, thereby suggesting that NC is universal across optimization methods.
In this work, we challenge this assumption and demonstrate that the choice of optimizer plays a critical role in the emergence of NC. The phenomenon is typically quantified through NC metrics, which, however, are difficult to track and analyze theoretically. To overcome this limitation, we introduce a novel diagnostic metric, NC0, whose convergence to zero is a necessary condition for NC. Using NC0, we provide theoretical evidence that NC cannot emerge under decoupled weight decay in adaptive optimizers, as implemented in AdamW. Concretely, we prove that SGD, SignGD with coupled weight decay (a special case of Adam), and SignGD with decoupled weight decay (a special case of AdamW) exhibit qualitatively different NC0 dynamics.
Also, we show the accelerating effect of momentum on NC \textcolor{blue}{(beyond convergence of train loss)} when trained with SGD, being the first result concerning momentum in the context of NC.
Finally, we conduct extensive empirical experiments consisting of 3,900 training runs across various datasets, architectures, optimizers, and hyperparameters, 
confirming our theoretical results.
This work provides the first theoretical explanation for optimizer-dependent emergence of NC and highlights the overlooked role of weight-decay coupling in shaping the implicit biases of optimizers.

\end{abstract}

\section{Introduction} \label{section:introduction}
Neural networks have driven many of the recent breakthroughs in artificial intelligence, yet the mechanisms underlying their success remain only partially understood. A key empirical clue is neural collapse (NC) -- first documented by \cite{papyan2020prevalence} -- in which the last-layer feature vectors and classifier weights self-organise into a highly symmetric configuration during the terminal phase of training (TPT). 
While the reasons for the emergence of NC are still not fully understood, its impact on the behavior of a model is evident. For instance, \citet{liu2023inducing} induce NC to improve generalization in class-imbalanced training and \citet{galanti2021role} show that the emergence of NC improves transfer learning as well. Furthermore, the presence of NC has been connected to better out-of-distribution detection \citep{liu2023detecting}. 

Theoretical explanations for NC have primarily relied on simplified models and assumptions~\citep{mixon2022neural, zhu2021geometric} that have largely ignored the role of the optimizer, thereby suggesting that NC is universal across optimization methods.
In this work, we challenge this assumption and demonstrate that the choice of optimizer plays a critical role in the emergence of NC. Concretely, we show that training with AdamW \citep{loshchilov2019decoupled} does not lead to an NC solution, whereas training with SGD or Adam \citep{kingma2014adam} does. Through extensive experiments, we trace this back to how weight decay is applied in both optimizer and identify the coupling of weight decay as a necessity for the emergence of NC.

One major challenge in studying NC lies in the original metrics, which are difficult to track and analyze theoretically. These metrics were designed to quantify the progressive geometric alignment associated with NC and are expected to converge to zero in the idealized setting where NC holds as training time approaches infinity. However, under realistic training regimes, such as finite training epochs and learning rate decay, these metrics typically plateau at small but nonzero values. As a result, there is no rigorous criterion for determining whether NC has truly occurred.

This limitation motivates us to introduce a novel diagnostic metric, NC0, whose convergence to zero is necessary (though not sufficient) for NC. Unlike previous metrics, NC0 enables a more definitive assessment: if NC0 diverges during training, we can conclude that NC can not occur—even in cases where other NC metrics misleadingly converge to small positive values, creating an illusion of collapse. We discuss the peculiarity of interpreting NC metrics in practice later in \cref{subsection:interpreting_NC_metrics}.
Furthermore, NC0 allows us to go beyond loss landscape analysis and theoretically derive convergence rates with which NC0 converges to zero.

\paragraph{Contribution}
In this paper, we conduct extensive experiments -- spanning over 3,900 training runs -- to investigate the role of coupled weight decay in the emergence of NC. We identify coupled weight decay as a key driver of NC in realistic settings, extending recent theoretical insights \textcolor{blue}{\citep{pan2024understanding,jacot2024wide}} that were limited to quasi-optimal solutions in simplified models. In particular, we show that the form of weight decay used in adaptive optimizers such as Adam \citep{kingma2014adam} and AdamW \citep{loshchilov2019decoupled} critically affects whether NC emerges. Strikingly, while networks trained with Adam often exhibit NC, AdamW -- despite its algorithmic similarity --fails to produce NC, with the corresponding metrics failing to converge to zero over time (\cref{fig:Adam_vs_AdamW}). This subtle yet consequential distinction has been largely overlooked in prior work. An overview of our theoretical contributions can be found in \cref{tab:overview_thms}

In summary, we make the following contributions:
\begin{enumerate}
    \item Across a wide range of experiments, we find that coupled weight decay is a necessary condition for NC to emerge \textcolor{blue}{in adaptive optimizers, such as Adam and Signum}.
    \item Furthermore, we show the accelerating effect of momentum on NC \textcolor{blue}{(beyond convergence of train loss)} when trained with SGD, being the first result concerning momentum in the context of NC. %
    \item We support our empirical findings with the following theoretical statements on the new NC0 metric:
    \begin{itemize}
        \item with SGD \textcolor{blue}{(with both coupled or decoupled weight decay)}, NC0 converges to zero at an exponential rate proportional to the weight decay;
        \item  with sign gradient descent (SignGD) with decoupled weight decay, a special case of AdamW, NC0 converges to some positive constant; 
        \item with SignGD with coupled weight decay, a special case of Adam, NC0 exhibits a non-monotonic trajectory, increasing before eventually decreasing. \textcolor{blue}{Using learning rate decreasing to zero, we show that NC0 also vanishes.}
    \end{itemize}
\end{enumerate}

\paragraph{Organization}
This paper is organized as follows. 
In \cref{section:neural_collapse}, we recapitulate the four properties to characterize NC and introduce a novel NC property NC0. 
In \cref{section:main_result} we present our main experimental results with theoretical support. Finally, \cref{section:discussion} provides insights and discussions on the implications of our results.

\begin{SCfigure}[1.2][htb]
  \centering
  \includegraphics[width=0.27\linewidth]{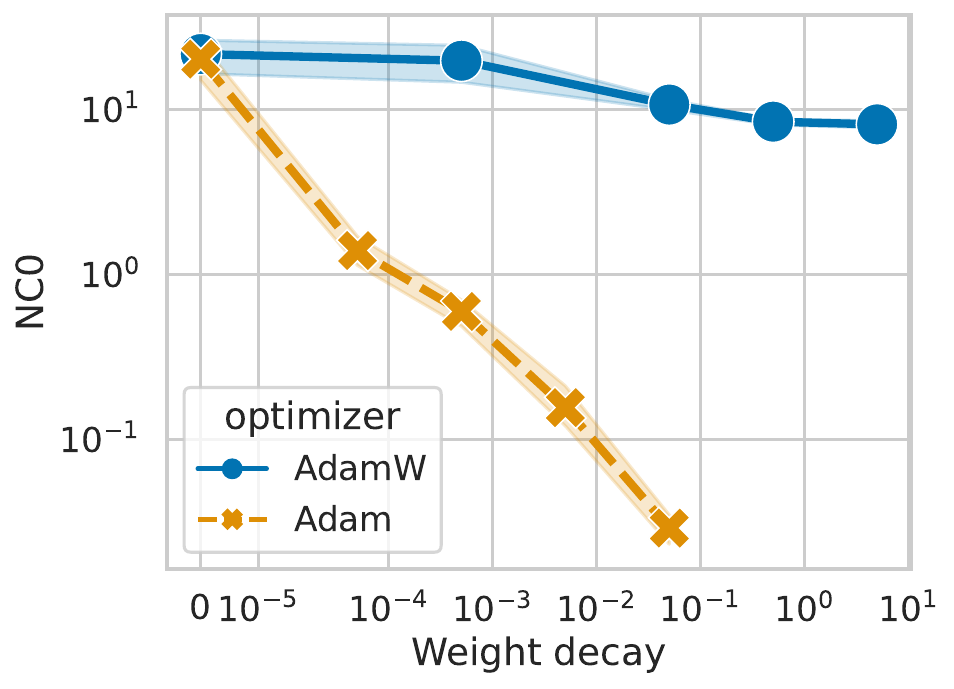}
    \includegraphics[width=0.27\linewidth]{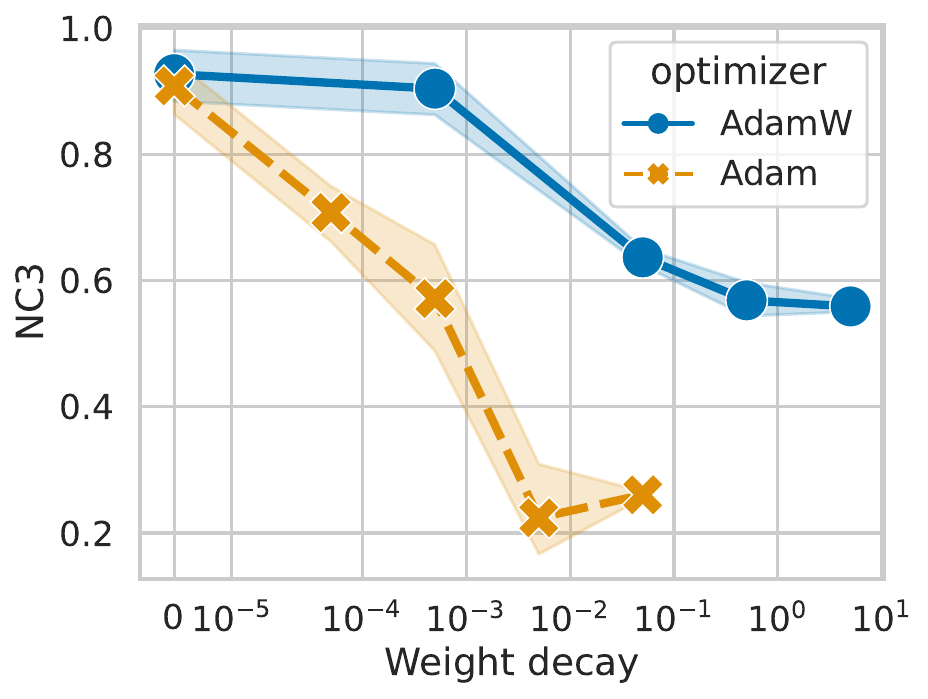}
  \caption{
  NC0 (left) and NC3 (right) metrics at the end of training. Lower values indicate stronger NC. AdamW shows consistently higher metrics than Adam. Averages computed over runs with varying learning rates and momentum; shaded regions show $\pm1$ standard deviation. X-axis is log-scaled. \textcolor{blue}{Note that there are no values for Adam for WD larger than 0.05 as the model did not train due to over regularization.}
  }
    \label{fig:Adam_vs_AdamW}
\end{SCfigure}

\paragraph{Notation}
We use $[K] = \{1, 2, \dots, K\}$ to denote the index set for any integer $K \in \mathbb{N}$. For a matrix $\W$, we let $\operatorname{Vec}(\W)$ denote the vectorization of $\W$ obtained by stacking its columns.
The Frobenius inner product between two matrices $\W, \W'$ is denoted by $\langle \W, \W' \rangle = \operatorname{Tr}(\W^\top \W')$. With slight abuse of notation, we write $\|\W\| = \|\W\|_F$ for the Frobenius norm when $\W$ is a matrix, and $\|\v\| = \|\v\|_2$ for the Euclidean norm when $\v$ is a vector. In other words, $\|\W\|=\|\operatorname{Vec}(\W)\|$.
We denote by $\I$ the identity matrix, by $\boldsymbol{1}$ the all-ones column vector, and by $\J$ the all-ones matrix, i.e., $\J = \boldsymbol{1} \boldsymbol{1}^\top$. 

\begin{table}[ht]
\caption{Overview of our theoretical results on NC0.}
\centering
\begin{tabular}{l | r | r | r | r}
\toprule
Result  & Optimizers                                  & Model                     & Convergence to 0?                              & learning rate                 \\
\midrule
\cref{theorem:sgd_decoupled} & SGD with DWD & Any & yes, exponential & constant \\
\cref{theorem:sgd_coupled} & SGD with CWD & Any & yes, exponential & constant \\
\cref{{theorem:signsgd:decoupled}} & SignGD with DWD & UFM & yes & step-wise decay \\
\cref{theorem:signsgd:coupled} & SignGD with CWD & UFM & no & - \\
\bottomrule
\end{tabular}
\label{tab:overview_thms}
\end{table}

\section{Neural Collapse} \label{section:neural_collapse}

Neural collapse (NC), observed during the terminal phase of training (TPT) in deep neural networks (DNN), manifests itself through several geometric properties involving the last-layer features and weights in the $K$-class classification task:
\begin{equation} \label{eq:min}
    \min_{\W,\theta} \sum_{n=1}^N \ell(\W h_\theta(\x_n),y_n) + \frac{\lambda}{2}\|\W\|^2 + \frac{\lambda}{2} \|\text{Vec}(\theta) \|^2
\end{equation}
where $(\x_n,y_n)_{n=1}^N\subset \R^D\times [K]$ is the training set, $\W\in\R^{K\times P}$ is the last-layer weights, $h_\theta(\x_n)\in\R^P$ is the last-layer feature as the output of some backbone parameterized by $\theta$, $\ell:\R^K\times[K]\to[0,\infty)$ is the loss function, and $\lambda>0$ is the L2-regularization constant. 

These properties, formalized by their corresponding metrics in the original paper  \cite{papyan2020prevalence}, are:

\begin{enumerate}
    \item \textbf{NC1 - Variability Collapse:}  
    Features collapse to their respective class means, indicating that within-class variability vanishes.

    \item \textbf{NC2 - Convergence of \textcolor{blue}{Centered} Class Means to Simplex ETF:}  
    \textcolor{blue}{Centered} Class means converge to a simplex equiangular tight frame (ETF).

    \item \textbf{NC3 - Convergence to Self-Duality:}  
    Rows of the last-layer weight $\W\in\R^{K\times P}$ align with the columns of the class means, creating a dual relationship between weights and features.

    \item \textbf{NC4 - Simplification to Nearest-Class-Center:}  
    The classifier's decision boundaries are simplified to those of a nearest-class-mean (NCC) classifier. 
\end{enumerate}

A solution satisfying all of these properties is referred to as a \emph{NC solution}.
In addition to these prior NC properties, we introduce another novel NC property \textbf{NC0}, whose convergence to zero is a necessary condition (though not sufficient) for NC. 

\textbf{NC0 - Zero Row Sum of Last-Layer Weight: }The row sum of the last-layer weight $\W$ in the model converges to zero.

The first observation is that NC0 is a necessary condition for NC2 and NC3:
\begin{proposition}\label{proposition:NC0}
    NC2 and NC3 implies NC0. 
\end{proposition}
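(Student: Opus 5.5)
We formalize NC2 and NC3 in their idealized (limiting) form and then compute the row sum of $\W$ directly. Let $\boldsymbol{\mu}_k \in \R^P$ be the class means of the last-layer features, $\boldsymbol{\mu}_G = \frac1K\sum_{k}\boldsymbol{\mu}_k$ the global mean, and $\M = [\boldsymbol{\mu}_1-\boldsymbol{\mu}_G,\dots,\boldsymbol{\mu}_K-\boldsymbol{\mu}_G]\in\R^{P\times K}$ the matrix of centered class means.

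NC3 (self-duality) says that $\W^\top/\|\W\| = \M/\|\M\|$, i.e.\ $\W = c\,\M^\top$ for some scalar $c\ge 0$. In the asymptotic version, the normalized quantities converge. The key observation is that centering alone already forces the row sum to vanish:
\[
\W\boldsymbol{1} = c\,\M^\top \boldsymbol{1} = c\sum_{k=1}^K(\boldsymbol{\mu}_k-\boldsymbol{\mu}_G) = 0 .
\]
Here ``row sum'' is read as summing over classes, i.e.\ over the $K$ rows of $\W$. This gives the vector $\boldsymbol{1}^\top\W = \sum_k \w_k^\top$, which equals $c\,(\M\boldsymbol{1})^\top = c\sum_k(\boldsymbol{\mu}_k-\boldsymbol{\mu}_G)^\top = 0$. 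I will state precisely which sum NC0 refers to, namely $\boldsymbol{1}^\top\W=0$, since the sum of the class weight vectors is the quantity that centering kills.

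NC2 enters to guarantee that the centered means form a nondegenerate simplex ETF, $\M^\top\M \propto \I - \frac1K\J$. This makes $\M\neq 0$, so the normalization in NC3 is well defined, and it pins down that $\M\boldsymbol{1}$ lies in the kernel direction: $\|\M\boldsymbol{1}\|^2=\boldsymbol{1}^\top\M^\top\M\boldsymbol{1}\propto \boldsymbol{1}^\top(\I-\frac1K\J)\boldsymbol{1}=0$. I would use this Gram-matrix computation as the formal route, so the argument goes through NC2's structure rather than only through the definition of centering.

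The main obstacle is the limiting formulation. NC2 and NC3 are statements that normalized distances go to $0$ along training, so the claim to prove is $\|\boldsymbol{1}^\top \W_t\|/\|\W_t\|\to 0$. For this I bound
\[
\Big\|\tfrac{\boldsymbol{1}^\top\W_t}{\|\W_t\|}\Big\| \le \Big\|\tfrac{\W_t}{\|\W_t\|}-\tfrac{\M_t^\top}{\|\M_t\|}\Big\|\,\|\boldsymbol{1}\| + \tfrac{\|\boldsymbol{1}^\top\M_t^\top\|}{\|\M_t\|},
\]
where the first term vanishes by NC3 and the second is identically zero by centering, or vanishes by NC2. I would present the exact-equality version first and then note that this triangle-inequality step extends it to the asymptotic statement.
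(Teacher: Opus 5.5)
Your proof is correct, and it takes a different, more economical route than the paper.

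The paper argues through second moments. It uses NC2 to identify the limit $\M_t/\|\M_t\|\to\Q\M^*$ and transfers this to $\W_t^\top/\|\W_t\|$ via NC3. It then deduces that $\W_t\W_t^\top/\|\W_t\|^2$ tends to a multiple of $\I-\frac1K\J$, and reads off $\boldsymbol{1}^\top\W_t\W_t^\top\boldsymbol{1}/\|\W_t\|^2\to0$.

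You instead apply the linear functional $\boldsymbol{1}^\top$ directly to the NC3 difference and use $\M_t\boldsymbol{1}=\mathbf{0}$ from centering, obtaining
\[
\frac{\|\W_t^\top\boldsymbol{1}\|}{\|\W_t\|}\le\sqrt{K}\,\Big\|\frac{\W_t}{\|\W_t\|}-\frac{\M_t^\top}{\|\M_t\|}\Big\|.
\]
This buys you three things. It gives a rate: normalized NC0 is controlled linearly by the NC3 distance. It shows that NC2 is not needed beyond $\M_t\neq 0$, which is consistent with the paper's remark that NC0 is ``a form of duality similar to NC3''. And it avoids the normalization bookkeeping of the Gram step. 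In exchange, the paper's route also exhibits the limiting ETF Gram structure of the weights, which is more than NC0 requires.

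Two small remarks.
\begin{itemize}
\item Drop your first display. $\W\boldsymbol{1}_P$ is not the row sum, and $\M^\top\boldsymbol{1}_P$ collects the coordinate sums of each $\bar{\boldsymbol{\mu}}_k$ rather than $\sum_k\bar{\boldsymbol{\mu}}_k$. The identity you write next, $\boldsymbol{1}_K^\top\W=c\,(\M\boldsymbol{1}_K)^\top=0$, is the right one.
\item Your choice of the scale-invariant conclusion $\|\W_t^\top\boldsymbol{1}\|/\|\W_t\|\to0$ is the correct reading. The paper's last step also only yields the normalized statement. The unnormalized metric $\frac1p\|\W_t^\top\boldsymbol{1}\|$ cannot follow from the scale-invariant hypotheses NC2 and NC3 without a bound on $\|\W_t\|$.
\end{itemize}
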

\begin{proof}
    For each class $k\in[K]$, we define the class mean $\mu_k = \frac{1}{|\{n:y_n=k\}|}\sum_{n:y_n=k}h_\theta(\x_n)\in\R^P$ and the centered class mean $\Bar{\mu}_k=\mu_k-\frac{1}{N}\sum_{n=1}^Nh_\theta(\x_n)$. 
    We concatenate them into a matrix $\M=(\bar{\mu}_k)_{k=1}^K\in\R^{P\times K}$ with $\M \boldsymbol{1}=0$, since we centered the class means. 
    By NC2, $\M$ converge to a simplex ETF in the ambient space $\R^P$, meaning $\M / \| \M \|_F\to\Q\M^*$ where 
    $\M^*\in\R^{K\times K}$ is a unit matrix with columns forming a $K$-simplex EFT in $\R^K$ and $\Q\in\R^{P\times K}$ is the isometric injection map into the ambient space. \textcolor{blue}{Since $\M \boldsymbol{1}=\mathbf{0}$ and $\Q$ is injective, the unit matrix $\M^*$ has to be in the form:
    $\M^*\eqdef \mathbf{P}\frac{1}{\sqrt{K-1}}\left( \I - \frac{1}{K}\J \right)$ for some orthogonal matrix $\mathbf{P}$. But it can be absorbed into $\mathbf{Q}$ as the matrix $\mathbf{QP}$ is still an isometric injection.}
    Hence, without loss of generality, we assume $\M^*\eqdef \frac{1}{\sqrt{K-1}}\left( \I - \frac{1}{K}\J \right)$ and hence
    \begin{align*}
        \M^\top\M / \| \M^\top\M\|_F^2
        &\to
        (\Q\M^*)^\top\Q\M^*
        = (\M^*)^2
        = \M^*.
    \end{align*}
    On the other hand, NC3 states that $ \M / \| \M\|-\W^\top/\|\W\|\to0$ as $t\to\infty$. Hence we have 
    $\frac{\W\W^\top}{\|\W\|_F^2} - \M^* \to0$
    as $t\to\infty$. 
    Now note that $\boldsymbol{1}^\top \M^* \boldsymbol{1} = 0$, hence $\|\W^\top\boldsymbol{1}\|^2=\boldsymbol{1}^\top \W\W^\top \boldsymbol{1}\to 0$.
    Note that the last line holds if and only if NC0 holds.
\end{proof}
NC0 offers two key advantages. First, it serves as a diagnostic tool: if NC0 does not converge, then at least one of NC2 or NC3 must fail, providing a clear signal that neural collapse cannot occur. Second, NC0 is more mathematically tractable than the original NC metrics, whose dynamics are difficult to analyze and remain underexplored. As we demonstrate in \cref{section:main_result}, NC0’s evolution during training can be reliably tracked and used to explain empirical trends observed across different optimizers. In addition, our extensive experiments also show that NC0 is correlating well with prior NC metrics, particularly for small learning rates (see \cref{fig:NC0_vs_NC3}). For a more detailed explanation and formal definitions of NC properties and their metrics, we refer the reader to Section \ref{section:NC}.

\begin{figure}[ht]
    \centering
    \includegraphics[width=0.9\linewidth]{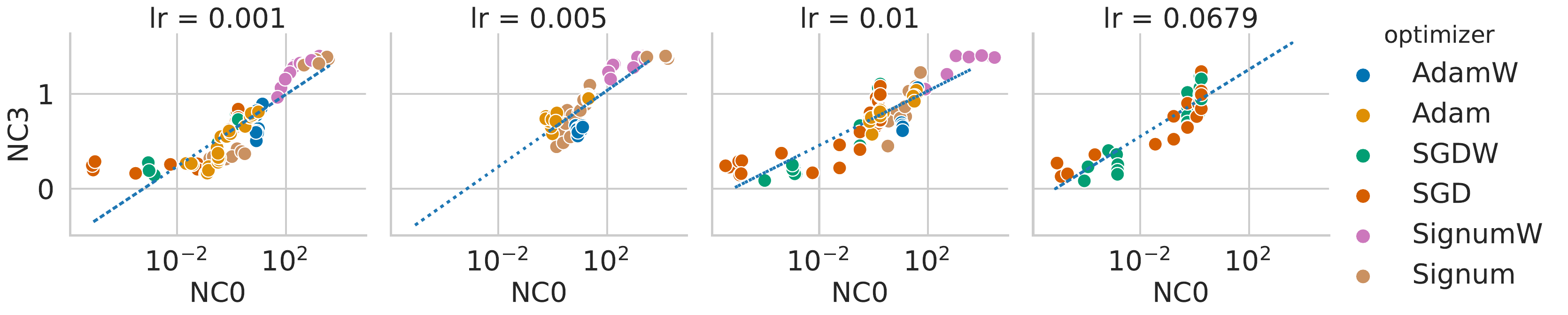}

    \caption{NC0 weakly correlates with NC3 across different optimizers and learning rates. Details on the regression fit can be found in \cref{subsection:experiment:regression_fit}
    }
    \label{fig:NC0_vs_NC3}
\end{figure}

\section{Main Result} \label{section:main_result}

\subsection{Experimental Setup}

We conducted extensive experiments training a ResNet9 and VGG9 using various optimizers, including Adam, AdamW, SGD, SGD with decoupled weight decay (SGDW), Signum \citep{bernstein2018signsgd}, and Signum with decoupled weight decay (SignumW) trained on MNIST, FashionMNIST and Cifar10. Every optimizer is trained with three different learning rates (LR), six different values of momentum, and six different values of weight decay to also control the effect of hyperparameters on the emergence of NC. This resulted in a total of $2 \times 3 \times 6 \times 108 = 3,888$ training runs. 
\textcolor{blue}{Note that we only keep runs with reasonably high training accuracy. Too large weight decay over regularize the model and the model does not train anymore. Thus, the number of valid training runs is actually smaller than 3,888.}
All networks were trained for 200 epochs using a batch size of 128, with the learning rate being decayed by a factor of 10 after one-third and two-thirds of the training duration, as described in the original work by \cite{papyan2020prevalence}. In addition, we conducted ablation studies to control for the number of training epochs and to verify that the results also hold for unconstrained feature models (UFM)\footnote{see \cref{subsection:UFM} for an introduction to UFM.}, leading to a total of over 3,900+ training runs. Further details and all experimental results can be found in \cref{section:experiment}. Ablation studies on the effect of training epochs can be found in \cref{subsubsection:ablation study}

\begin{table}[ht]
\caption{Final NC metrics for the same setting as in \cref{fig:NC_dynamics_ResNet_fashion}, following the setup of \citet{papyan2020prevalence}. Lower values ($\downarrow$) indicate stronger neural collapse. Values in parentheses represent percentages relative to the metric at initialization.}
\centering
\begin{tabular}{lrrrr}
\toprule
Optimizer  & NC0$_\downarrow$                                  & NC1$_\downarrow$                      & NC2$_\downarrow$                              & NC3$_\downarrow$                  \\
\midrule
SGD        & \textbf{2.14e-04} (\textcolor{RoyalBlue}{$<-99.5\%$}) & 0.05  (\textcolor{RoyalBlue}{$-99.3\%$})  & \textbf{0.29} (\textcolor{RoyalBlue}{$-63.0\%$})  & \textbf{0.35} (\textcolor{RoyalBlue}{$-75.1\%$})  \\
SGDW       & 0.55          (\textcolor{RoyalBlue}{$-68.9\%$})      & 0.26  (\textcolor{RoyalBlue}{$-96.3\%$})  & 0.46          (\textcolor{RoyalBlue}{$-42.4\%$})  & 0.80          (\textcolor{RoyalBlue}{$-43.5\%$})  \\
Adam       & 0.34          (\textcolor{RoyalBlue}{$-80.6\%$})      & \textbf{0.04} (\textcolor{RoyalBlue}{$-99.5\%$})  & 0.29  (\textcolor{RoyalBlue}{$-63.9\%$})  & 0.29          (\textcolor{RoyalBlue}{$-79.5\%$})  \\
AdamW      & 5.33          (\textcolor{BrickRed}{$\gg100\%$})       & 0.20  (\textcolor{RoyalBlue}{$-97.2\%$})  & 0.54          (\textcolor{RoyalBlue}{$-32.4\%$})  & 0.78          (\textcolor{RoyalBlue}{$-45.2\%$})   \\
Signum     & 0.78          (\textcolor{RoyalBlue}{$-55.3\%$})      & 0.13  (\textcolor{RoyalBlue}{$-98.1\%$})  & 0.50          (\textcolor{RoyalBlue}{$-36.8\%$})  & 0.58          (\textcolor{RoyalBlue}{$-59.0\%$})  \\
SignumW    & 3185.69       (\textcolor{BrickRed}{$\gg100\%$})       & 0.30  (\textcolor{RoyalBlue}{$-95.7\%$})  & 1.15          (\textcolor{BrickRed}{$+44.2\%$})     & 1.40          (\textcolor{RoyalBlue}{$-1.2\%$})    \\
\bottomrule
\end{tabular}
\label{tab:NC_metrics_table}
\end{table}

\subsection{Weight Decay is Essential and Momentum Accelerates NC} \label{subsection:momentum}

Our experiments show that weight decay is necessary to reduce the NC metric across all optimizers and hyperparameter settings, as shown in \cref{fig:Signum_vs_SignumW} for Signum and SGD, and earlier in \cref{fig:Adam_vs_AdamW} for Adam and AdamW \textcolor{blue}{as well as in our ablation studies in \cref{subsubsection:ablation study} and \cref{subsubsec:ablation_zeroWD}. While the experiments cannot fully exclude the possibility that NC can be achieved eventually in the asymptotic limit without weight decay, we argue that WD is essential to observe the emergence of NC in \emph{practical finite-length training settings on realistic models}\footnote{We note that \cite{ji2021unconstrained} show both theoretically and empirically the emergence of NC on the unconstrained layer-peeled model (ULPM) objective under gradient flow without weight decay.}.}

\begin{figure}[ht]
    \centering
    \includegraphics[width=0.24\linewidth]{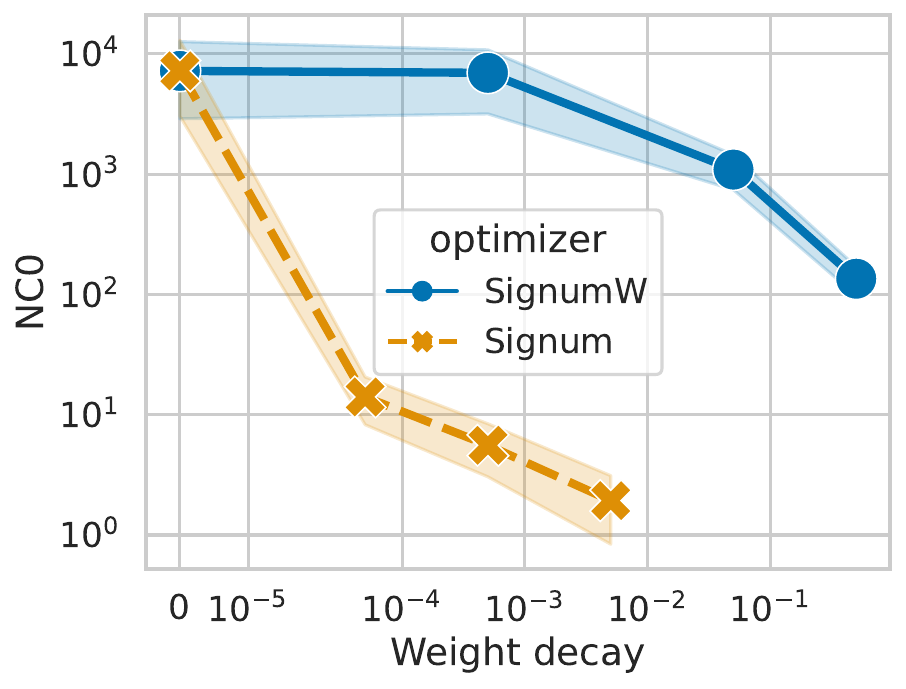}
    \includegraphics[width=0.24\linewidth]{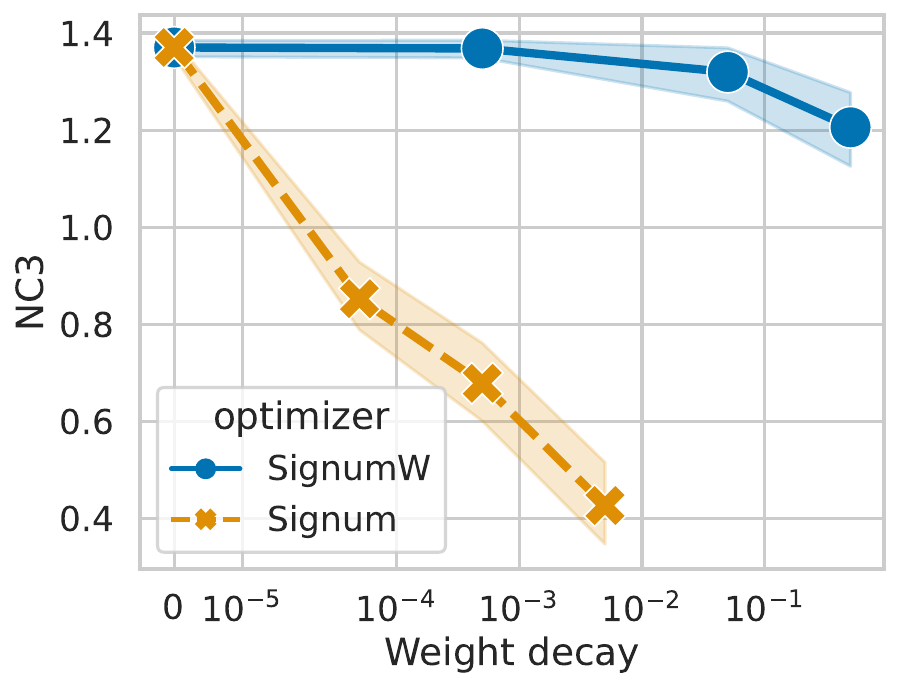} \hspace{0.05cm} \vrule \hspace{0.05cm}
    \includegraphics[width=0.24\linewidth]{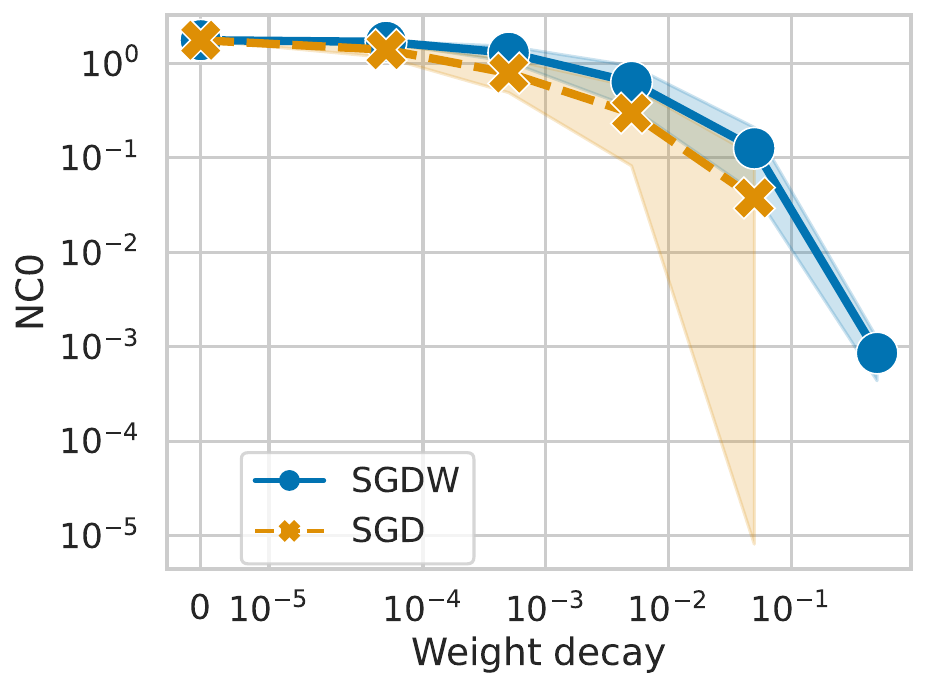}
    \includegraphics[width=0.24\linewidth]{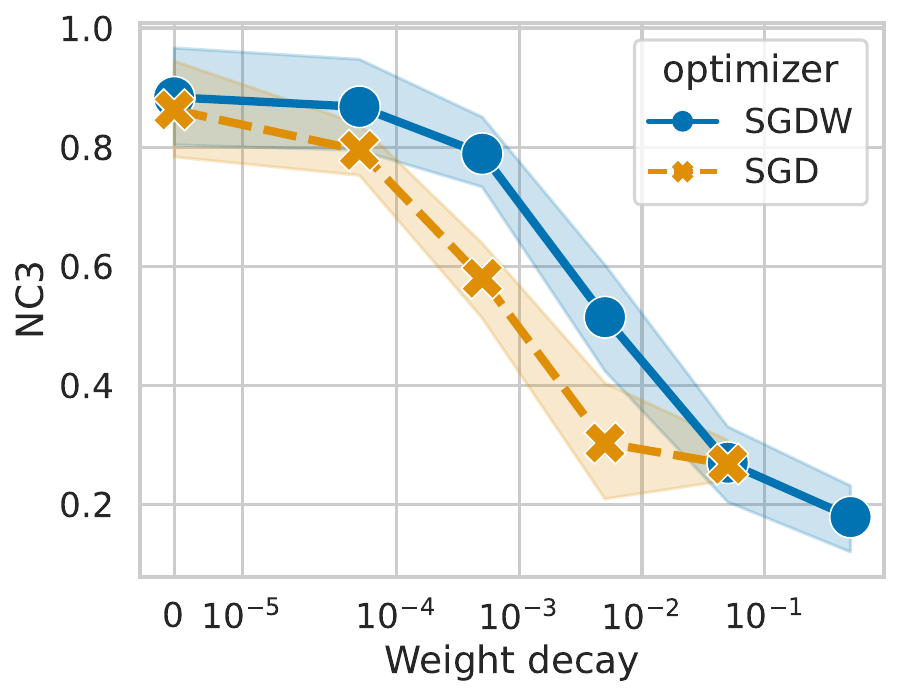}
    \caption{NC0 and NC3 metrics at the end of training for a ResNet9 trained on FashionMNIST for Signum and SignumW (left side) and SGD and SGDW (right side). Shaded area refers to one standard deviation across all trainings run with corresponding optimizer. \textcolor{blue}{Note that there are fewer values for Signum and SGD as the model did not train due to over regularization for too large WD.}}
    \label{fig:Signum_vs_SignumW}
\end{figure}

From the figures, we can conclude that larger weight decay leads to a stronger decrease of NC metrics.
In particular, we show that adaptive optimizers with decoupled weight decay have much larger NC metrics, which are strictly away from zero, showing no sign of NC. 
In addition, we show empirically that momentum amplifies the effect of weight decay on the decrease of NC metrics in SGD, as shown in the heatmap in \cref{fig:WD_vs_mom_heatmap}. 
This implies that one achieves a decrease in the NC metrics both by increasing weight decay for fixed momentum or by increasing momentum for fixed non-zero weight decay. The effect of momentum on the NC metrics becomes larger for larger values of weight decay. \textcolor{blue}{We remark that this goes beyond the acceleration of convergence of the train loss, as we study in an ablation study in \cref{subsubsec:ablation_momentum}. In particular, we show in \cref{fig:Camera_ready_fixed_WD_varying_momentum_Fashion_ResNet9_SGDMW_mom=7e-1vs9e-1} that two training runs with different momentum and otherwise same hyperparameters can reach the same train loss, while reaching different NC metrics. This indicates that they have converged to solutions with very different geometric structure.} 

\begin{figure}
    \centering
    \includegraphics[width=0.95\linewidth]{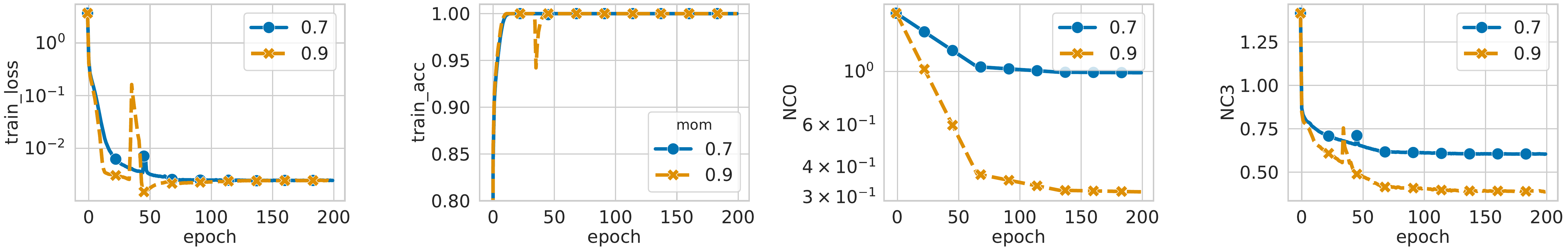}
    \caption{\textcolor{darkgreen}{Train loss, train accuracy and NC metrics for fixed WD=0.005 and mom=0.7 and 0.9. Although both runs converge to almost exactly the same train loss, the final NC metrics differ considerably. Plots including NC1 and NC2 can be found in \cref{fig:Fig4_fixed_WD_varying_momentum_Fashion_ResNet9_SGDMW_mom=7e-1vs9e-1}.}}
    \label{fig:Camera_ready_fixed_WD_varying_momentum_Fashion_ResNet9_SGDMW_mom=7e-1vs9e-1}
\end{figure}

\begin{figure}[ht]
    \centering
    \includegraphics[width=0.32\linewidth]{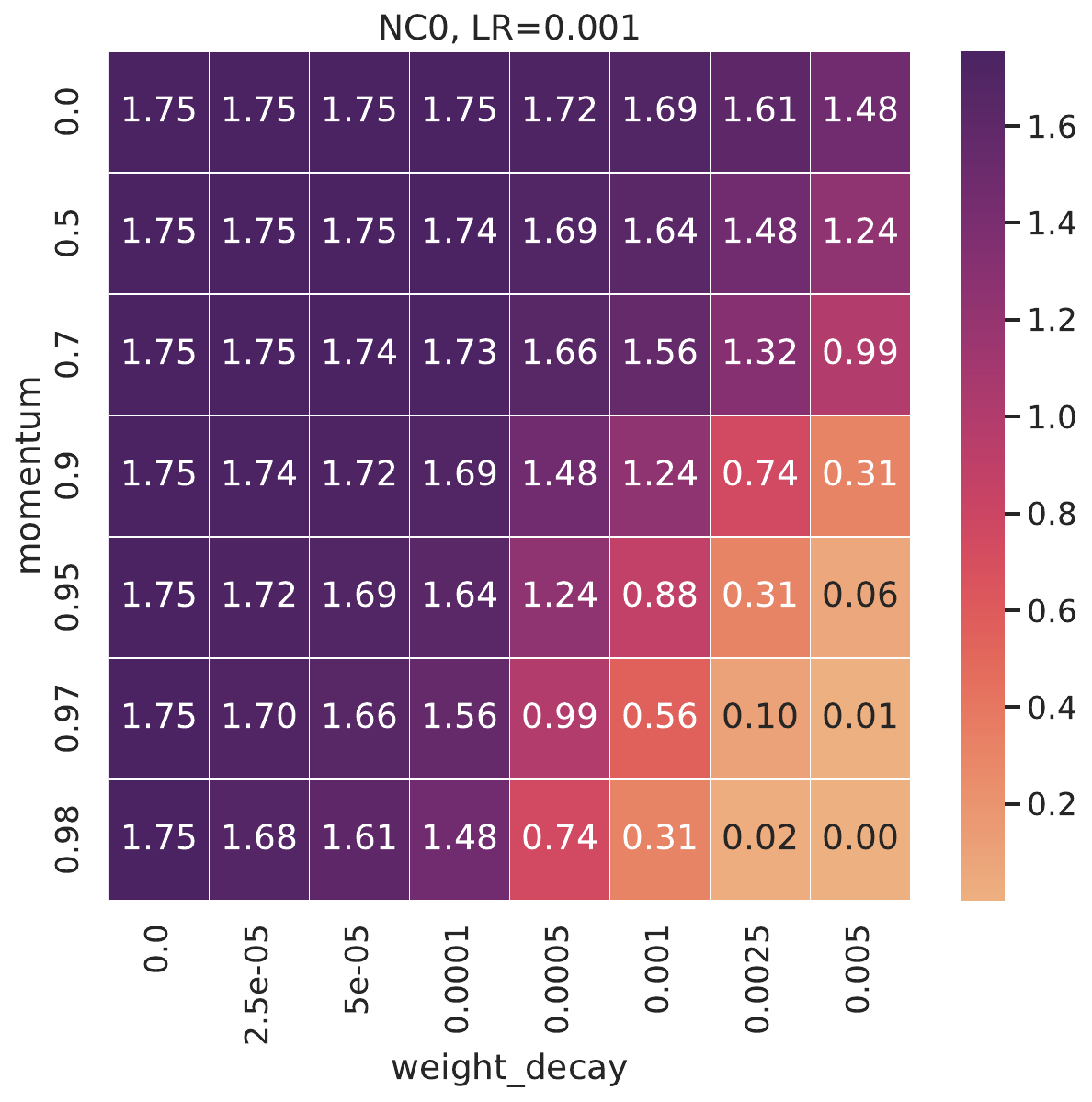}
    \includegraphics[width=0.32\linewidth]{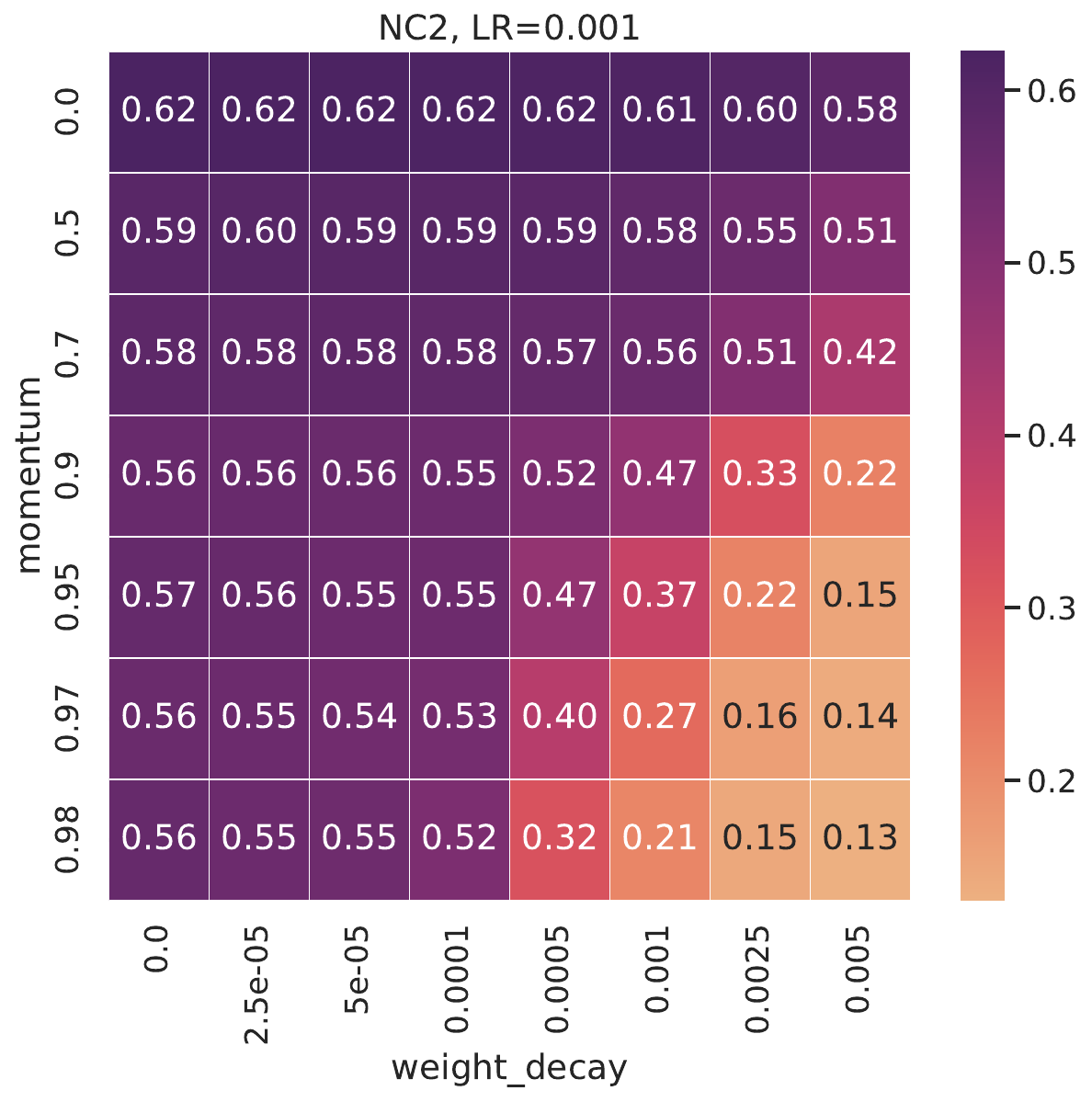}
    \includegraphics[width=0.32\linewidth]{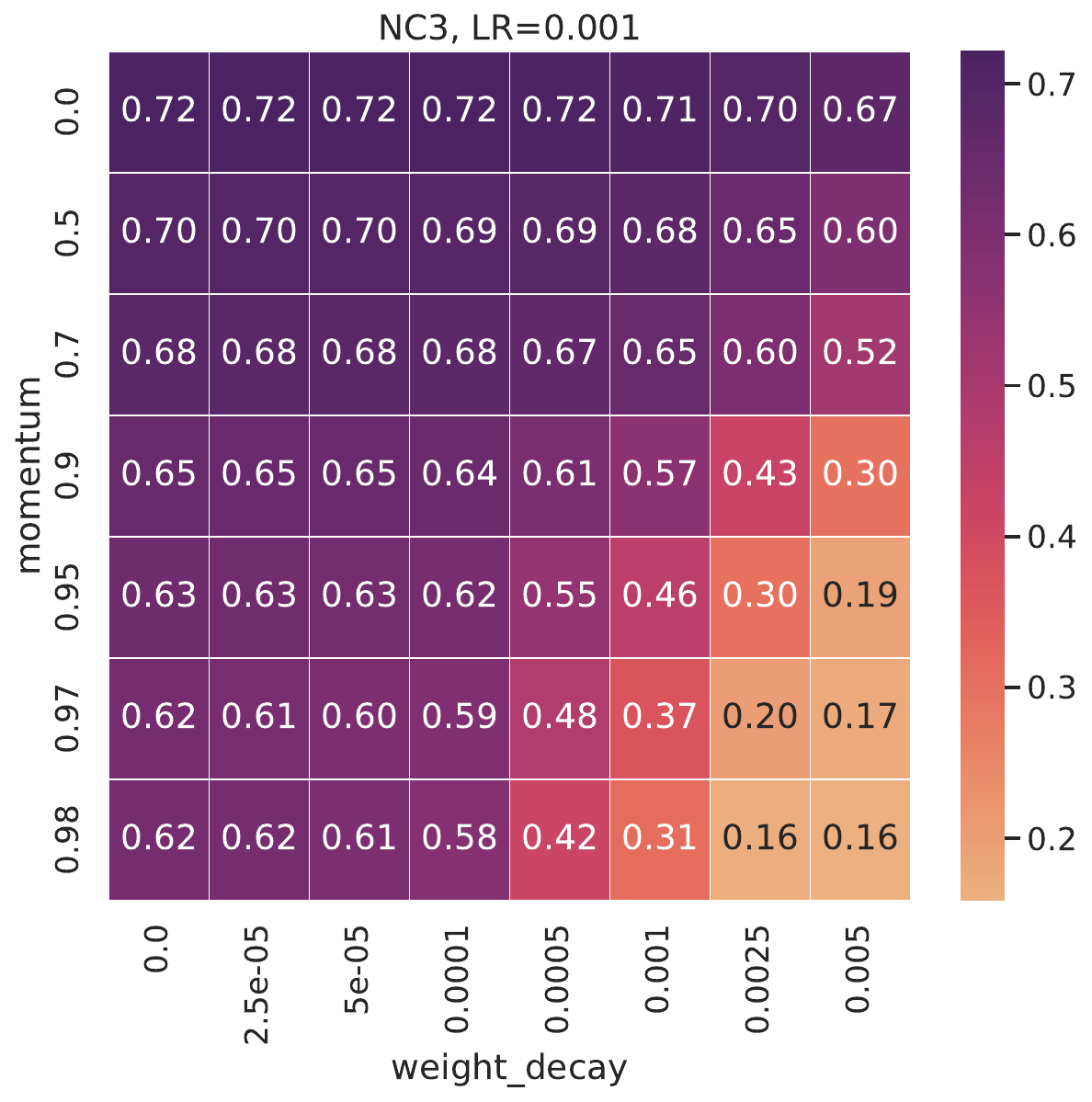}
    \caption{Heatmap of NC0, NC2 and NC3 for varying values of momentum and weight decay on ResNet9 trained on FashionMNIST with SGD.}
    \label{fig:WD_vs_mom_heatmap}
\end{figure}

The experimental results are complemented by \textcolor{blue}{\cref{theorem:sgd_decoupled} and \cref{theorem:sgd_coupled}} showing that NC0 converges to 0 with an exponential rate trained with SGD, which is proportional to momentum and weight decay, highlighting that NC cannot be achieved without weight decay and that momentum accelerates the convergence of NC metrics.

\begin{theorem}[SGD with decoupled weight decay promotes NC0]\label{theorem:sgd_decoupled}
\textcolor{blue}{
Assume a model of the form $f(\W, \theta, x) = \W h_\theta(x)$ is trained using cross-entropy loss with stochastic gradient descent (SGD) and momentum $\beta \in [0,1)$, weight decay $\lambda \in [0,1)$, and learning rate $\eta > 0$ on \emph{all} parameters $\theta, \W$. For instance, the last-layer weights $\W$ are updated according to:
\begin{align*} %
\begin{split}
    & \V_{t+1} =  \beta \V_t + \nabla_{\W_t} L_{\mathrm{CE}}, \\
    & \W_{t+1} = (1-\eta\lambda)\W_t - \eta \V_{t+1}.
\end{split}
\end{align*}
If $0 < \eta\lambda < 2$, then the NC0 metric $\alpha_t \coloneqq \frac{1}{K} \| \W_t^\top \mathbf{1} \|_2^2$ decays exponentially to zero in $t$.
}
\end{theorem}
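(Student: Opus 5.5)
The key observation is that the cross-entropy gradient with respect to the last-layer weights has zero column sums. For a sample $(\x,y)$ with logits $\z=\W h_\theta(\x)$, we have $\nabla_{\W}\ell=(\mathrm{softmax}(\z)-e_y)\,h_\theta(\x)^\top$. Since $\boldsymbol{1}^\top(\mathrm{softmax}(\z)-e_y)=1-1=0$, it follows that
\begin{align*}
\boldsymbol{1}^\top \nabla_{\W}\ell=\mathbf{0}^\top \qquad\text{equivalently}\qquad (\nabla_{\W}\ell)^\top\boldsymbol{1}=\mathbf{0}.
\end{align*}
This identity survives summation over any minibatch, so it holds for the stochastic gradient $\nabla_{\W_t}L_{\mathrm{CE}}$ at every step. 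It does not depend on $\theta$, the architecture, or the batch. This is why the statement holds for any backbone $h_\theta$ and why the updates of $\theta$ play no role in the argument.

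With this in hand, I would track the projected quantities $\mathbf{w}_t\coloneqq\W_t^\top\boldsymbol{1}\in\R^P$ and $\mathbf{v}_t\coloneqq\V_t^\top\boldsymbol{1}$. Multiplying the momentum recursion by $\boldsymbol{1}$ gives $\mathbf{v}_{t+1}=\beta\mathbf{v}_t$. Assuming the standard initialization $\V_0=\mathbf{0}$, this yields $\mathbf{v}_t=\mathbf{0}$ for all $t$. If instead $\V_0$ is arbitrary, we still get $\mathbf{v}_t=\beta^t\mathbf{v}_0$, which is a geometrically decaying term.

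The weight recursion then reduces to $\mathbf{w}_{t+1}=(1-\eta\lambda)\mathbf{w}_t$, or more generally $\mathbf{w}_{t+1}=(1-\eta\lambda)\mathbf{w}_t-\eta\beta^{t+1}\mathbf{v}_0$. In the zero-initialized case, $\mathbf{w}_t=(1-\eta\lambda)^t\mathbf{w}_0$, hence
\begin{align*}
\alpha_t=\tfrac1K\|\mathbf{w}_t\|^2=(1-\eta\lambda)^{2t}\alpha_0 .
\end{align*}
The hypothesis $0<\eta\lambda<2$ is exactly what makes $|1-\eta\lambda|<1$, so this decays exponentially. In the general case, unrolling the linear recursion gives a sum of two geometric sequences with ratios $(1-\eta\lambda)$ and $\beta$. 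That sum is bounded by $C\max(|1-\eta\lambda|,\beta)^{t}$, up to a factor $t$ when the two ratios coincide, which is still exponential decay.

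No step here is a real obstacle; the proof is essentially linear algebra once the zero-column-sum identity is noted. The only points needing care are:
\begin{itemize}
\item stating the identity for a general minibatch and possibly averaged loss, which holds by linearity;
\item noting that the weight decay on $\theta$ is irrelevant;
\item handling a nonzero momentum initialization, if the theorem is meant to allow one, via the two-ratio geometric bound above.
\end{itemize}
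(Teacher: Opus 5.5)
Your proposal is correct and follows the paper's proof essentially step for step. The zero-row-sum identity for the cross-entropy gradient (the paper's Lemma~\ref{lemma:CE}) removes the gradient from the recursion for $\V_t^\top\mathbf{1}$, so with $\V_0=\mathbf{0}$ one gets $\W_{t+1}^\top\mathbf{1}=(1-\eta\lambda)\W_t^\top\mathbf{1}$ and hence $\alpha_t=(1-\eta\lambda)^{2t}\alpha_0$. Your two-ratio geometric bound for nonzero $\V_0$ is a harmless extra that the paper does not need, since it assumes $\V_0=\mathbf{0}$.
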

\begin{proof}
    The key observation is that the row sum of the loss gradient $\nabla L_\text{CE}(\W_t)^\top \boldsymbol{1}_K$ is zero, which largely simplifies the NC0 metric to only be dependent on the weight decay $\lambda$ and momentum $\beta$. For the details of the proof, please refer to Subsection \ref{section:proof} in the Appendix.
\end{proof}

\begin{theorem}[SGD with coupled weight decay promotes NC0]\label{theorem:sgd_coupled}
\textcolor{blue}{
Assume a model of the form $f(\W, \theta, x) = \W h_\theta(x)$ is trained using cross-entropy loss with stochastic gradient descent (SGD) and momentum $\beta \in [0,1)$, weight decay $\lambda \in [0,1)$, and learning rate $\eta > 0$ on \emph{all} parameters $\theta, \W$. For instance, the last-layer weights $\W$ are updated according to:
\begin{align*} %
\begin{split}
    & \V_{t+1} =  \beta \V_t + \nabla_{\W_t} L_{\mathrm{CE}} + \lambda \W_t, \\
    & \W_{t+1} = \W_t - \eta \V_{t+1}.
\end{split}
\end{align*}
If $0 < \eta\lambda < 2(1+\beta)$, then the NC0 metric $\alpha_t\coloneqq \frac{1}{K} \| \W_t^\top \mathbf{1} \|_2^2$ decays exponentially to zero in $t$.
}
\end{theorem}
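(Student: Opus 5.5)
Following the proof of \cref{theorem:sgd_decoupled}, the plan is to project the update onto the all-ones direction and get a closed linear recursion for the row sums. Set $\w_t \coloneqq \W_t^\top\boldsymbol{1}\in\R^P$ and $\v_t\coloneqq \V_t^\top\boldsymbol{1}$. For cross-entropy, the gradient of the per-sample loss with respect to the logits is $\operatorname{softmax}(\W h)-e_y$, whose entries sum to zero. Hence $\nabla_{\W}L_{\mathrm{CE}} = \sum_n (\operatorname{softmax}(\W h_n)-e_{y_n})h_n^\top$ satisfies $\boldsymbol{1}^\top\nabla_{\W}L_{\mathrm{CE}}=0$ for every mini-batch and every value of $\theta$. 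The stochasticity and the backbone therefore drop out entirely. Multiplying both update equations on the left by $\boldsymbol{1}^\top$ gives
\begin{align*}
\v_{t+1} = \beta \v_t + \lambda \w_t,\qquad \w_{t+1} = \w_t - \eta\v_{t+1} = (1-\eta\lambda)\w_t - \eta\beta \v_t .
\end{align*}

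Next I eliminate $\v_t$. From the second equation, $\eta\beta\v_t = (1-\eta\lambda)\w_t-\w_{t+1}$. Substituting into $\w_{t+2} = (1-\eta\lambda)\w_{t+1}-\eta\beta\v_{t+1}$ and using $\eta\beta\v_{t+1}=\beta\cdot\eta\beta\v_t+\eta\beta\lambda\w_t$ yields the scalar-coefficient second-order recursion
\begin{align*}
\w_{t+2} = (1+\beta-\eta\lambda)\w_{t+1} - \beta\w_t ,
\end{align*}
which acts coordinatewise on $\R^P$. Its characteristic polynomial is $z^2-(1+\beta-\eta\lambda)z+\beta$. By the Schur--Cohn (Jury) criterion, both roots lie strictly inside the unit disk if and only if $|\beta|<1$ and $|1+\beta-\eta\lambda|<1+\beta$. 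Given $\beta\in[0,1)$, the second condition is exactly $0<\eta\lambda<2(1+\beta)$, which is the hypothesis of the theorem.

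With spectral radius $\rho<1$, the standard solution of a linear recurrence gives $\|\w_t\|\le C(t+1)\rho^t\,(\|\w_0\|+\|\w_1\|)$. The factor $t+1$ covers a repeated root. Since $\rho<1$, this bound can be absorbed into $C'\tilde\rho^{\,t}$ for any $\tilde\rho\in(\rho,1)$. Hence $\alpha_t=\frac1K\|\w_t\|^2\le C''\tilde\rho^{\,2t}$ decays exponentially. The same argument can be phrased through the $2\times2$ companion matrix acting on $(\w_t,\eta\v_t)$ if one prefers to avoid the second-order form, with the same spectral condition.

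The main obstacle I expect is the root-location step. I would verify it directly rather than just citing Jury's criterion. If the roots are complex, their product is $\beta$, so $|z|=\sqrt\beta<1$. If they are real, I would check that $p(1)=\eta\lambda>0$ and $p(-1)=2(1+\beta)-\eta\lambda>0$, together with the vertex $(1+\beta-\eta\lambda)/2\in(-1,1)$, which places both roots in $(-1,1)$. I would also remark that the rate depends on $\beta$ and $\lambda$ only through $\rho$, which supports the claimed interplay between momentum and weight decay.
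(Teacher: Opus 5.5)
Your proposal is correct and follows the paper's proof essentially step for step. Like the paper, you project the updates onto the row sums using that the cross-entropy gradient has vanishing column sums, eliminate the momentum row sum to obtain the recursion with characteristic polynomial $r^2-(1+\beta-\eta\lambda)r+\beta$, and conclude via the Jury (Schur--Cohn) conditions, which reduce exactly to $0<\eta\lambda<2(1+\beta)$; your handling of a repeated root through the $(t+1)\rho^t$ factor is slightly more careful than the paper, whose closed form $c_{+}r_+^t+c_{-}r_-^t$ tacitly assumes distinct roots.
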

\begin{proof}
    Similar to the proof of \cref{theorem:sgd_decoupled} For the details of the proof, please refer to Subsection \ref{section:proof} in the Appendix.
\end{proof}

\subsection{Weight Decay Coupling Matters} \label{subsection:coupled_vs_decoupled_WD}

While weight decay has been theoretically shown to be essential for NC in prior works \citep{pan2024understanding, jacot2024wide}, these works ignore how weight decay is applied by treating $L_2$-regularization of the gradient and applying weight decay directly on parameters as equivalent. However, we note that this equivalency only holds for vanilla SGD and not for adaptive optimizers, such as Adam or AdamW, nor when momentum is applied.
In particular, our experiments reveal 
that NC does not emerge under SignumW and AdamW under realistic settings. This highlights the crucial role of coupled weight decay -- that is $L_2$-regularization applied directly within the gradient update -- as a requirement for NC. This subtle yet important distinction has been largely overlooked in prior literature.\looseness=-1

\begin{figure}[ht]
    \centering
    \includegraphics[width=0.85\linewidth]{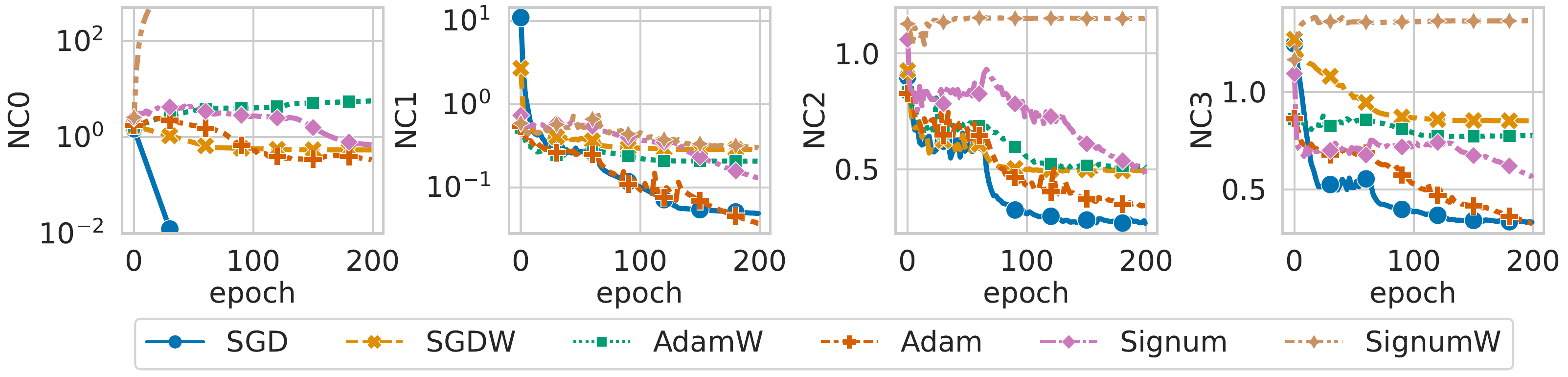}
    \caption{NC metrics throughout training on a ResNet9 trained on FashionMNIST.}
    \label{fig:NC_dynamics_ResNet_fashion}
\end{figure}

Importantly, tracking the evolution of the NC metrics (\cref{fig:NC_dynamics_ResNet_fashion}) and the singular values of centered class means $\mathbf{M}$ and the last-layer weight $\mathbf{W}$ (\cref{fig:singvals_fashion_ResNet}) 
throughout training (here shown for a ResNet9 trained on FashionMNIST), one can see that using adaptive optimizers with decoupled weight decay leads to fundamentally different dynamics of the NC metrics and singular values despite all models reaching TPT, where training error is (almost) zero. 

\begin{figure}[ht]
    \centering
    \includegraphics[width=0.55\linewidth]{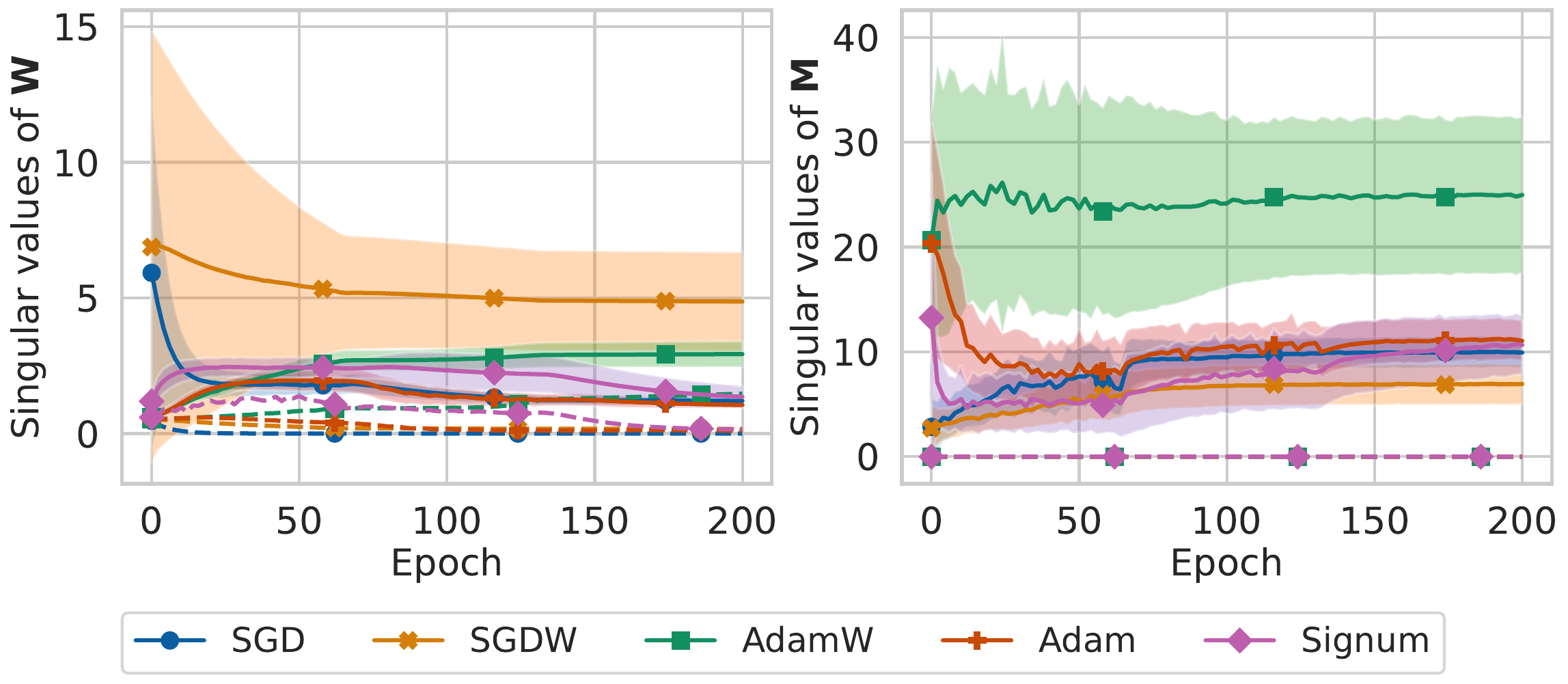}
    \caption{Singular values of last-layer weights $\mathbf{W}$ (left) and centered class means $\mathbf{M}$ (right) throughout training. The dotted line corresponds to the smallest singular value and the full line corresponds to the average singular value, excluding the smallest singular value. Singular values for SignumW are out-of-range and are shown in \cref{fig:NC_dynamics_ResNet_fashion_Signum} in the appendix.}
    \label{fig:singvals_fashion_ResNet}
\end{figure}

Specifically, \cref{fig:singvals_fashion_ResNet} shows that the smallest singular value of $\mathbf{W}$ increases during training with AdamW and SignumW, indicating failure to satisfy NC3. Additionally, NC0 and the nonzero singular values of $\mathbf{M}$ grow throughout training and exhibit high variance, suggesting that NC2 is also less well-fulfilled in these settings.

In \cref{fig:NC_dynamics_ResNet_fashion}, we further observe that SGD and Adam achieve the lowest NC metric values, while AdamW, SignumW, and SGDW saturate early at much higher levels. Although the NC metrics for Signum are slightly larger than for SGD and Adam, they continue to decrease over time, suggesting potential convergence to NC under longer training.

Finally, our experiments in \cref{fig:Adam_vs_AdamW} and \cref{fig:Signum_vs_SignumW} demonstrate that the NC0 and NC3 metrics of AdamW and SignumW remain significantly larger than those of Adam and Signum, even when using weight decay several orders of magnitudes higher. This indicates that models trained with AdamW or SignumW are consistently farther from achieving NC. 
\textcolor{blue}{Note that the NC metrics for SGD and SGDW remain relatively close, consistent with our theoretical results in \cref{theorem:sgd_decoupled} and \cref{theorem:sgd_coupled}, while the gap between coupled and decoupled weight decay has a more pronounced effect in adaptive optimizers than in SGD.}
This suggests the effect is not simply due to greater weight decay accumulation through momentum but stems from a deeper interaction with the optimization dynamics.

\subsection{Interpolating AdamW and Adam}

To further investigate why AdamW fails to exhibit neural collapse (NC) while Adam does, we conducted an ablation study by ``interpolating'' between the two optimizers. Specifically, we implemented a variant that combines both coupled weight decay (as in Adam) and decoupled weight decay (as in AdamW). For each run, we varied the strength of the coupled weight decay while adjusting the decoupled component such that the total weight decay remained fixed at 0.0005. The momentum was set to 0.9 across all configurations.

As shown in \cref{fig:adamadamW}, increasing the coupled component leads to a smooth improvement in NC metrics—particularly NC0, NC2, and NC3—while the validation accuracy remains largely unaffected. This experiment suggests that coupled weight decay is a critical factor in enabling neural collapse, yet it is not strictly necessary for achieving strong generalization performance, as all configurations yield similar validation accuracy. This strengthens a point raised earlier about the limitations of NC to understand generalization \citet{hui2022limitations}.

\begin{figure}[ht]
    \centering
    \includegraphics[width=0.95\linewidth]{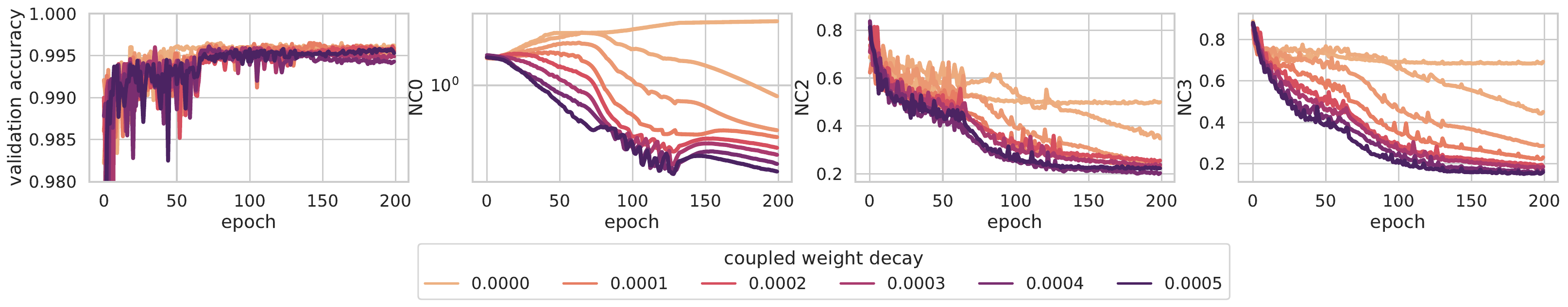}
    
    \caption{Interpolating Adam and AdamW by varying the coupled and decoupled weight decay. Total weight decay was fixed to $0.0005$. Note that coupled weight decay = 0 is equivalent to AdamW and coupled weight decay = 0.0005 is equivalent to Adam. Experiments trained on ResNet9 with MNIST.}
    \label{fig:adamadamW}
\end{figure}

This observation is supported by our theoretical results in \cref{theorem:signsgd:decoupled} and \cref{theorem:signsgd:coupled}, which show that SignGD with decoupled weight decay fails to satisfy NC0 and therefore cannot converge to a neural collapse solution, whereas SignGD with coupled weight decay exhibits different behaviour. We note that SignGD corresponds to a special case of Adam and AdamW when the parameters $\beta_1$, $\beta_2$, and $\varepsilon$ are set to zero.
 
\begin{theorem}[Sign GD with decoupled weight decay avoids NC0]\label{theorem:signsgd:decoupled}
    Consider sign GD with (decoupled) weight decay $\lambda>0$ and step size $\eta>0$ on the UFM loss
    $
    L_\text{CE}(\W\H,\I) = \sum_{n=1}^N  L_\text{CE}(\W\textbf{h}_n,\mathbf{e}_n),
    $
    where the feature $\H=\M^*$ is fixed to an NC solution and only the weight $\W$ is trained: 
    \[
    \W_{t+1} = \W_t - \eta (\text{sign}(\nabla_{\W_t}L_\text{CE}) + \lambda \W_t) 
    \] \looseness=-1
    Then the NC0 metric $\alpha=\|\W_t^\top\boldsymbol{1}_K\|_2^2$ increases monotonically from zero to the limit:
    \[
    \lim_{t\to\infty}\alpha_{t}=\frac{(K-2)^2}{\lambda^2}.
    \] 
    In particular, $\alpha_t$ does not vanish as $t\to\infty$.
\end{theorem}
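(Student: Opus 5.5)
The plan is to show that, from the initialization $\W_0=\mathbf{0}$ (implicit in ``increases from zero''), the iterates stay in the two-dimensional, permutation-equivariant family $\W_t=a_t\I+b_t\J$. On that family the sign of the gradient is a fixed matrix, so the row sum obeys a scalar affine recursion that we can solve in closed form. Throughout, take $N=K$ with $\mathbf{h}_k$ the $k$-th column of $\H=\M^*=\frac{1}{\sqrt{K-1}}(\I-\frac1K\J)$, as in the proof of \cref{proposition:NC0}, and labels $\mathbf{e}_k$. Write $c=1/\sqrt{K-1}$.

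\textbf{Step 1 (gradient on the invariant family).} We have $\nabla_{\W}L_{\mathrm{CE}}=(\mathbf{S}-\I)\H^\top$, where $\mathbf{S}$ has columns $\mathrm{softmax}(\W\mathbf{h}_k)$. Suppose $\W=a\I+b\J$. Since $\J(\I-\frac1K\J)=0$, the logits are $\W\H=ca(\I-\frac1K\J)$. By symmetry this gives $\mathbf{S}=p\I+q(\J-\I)$ with $p,q\in(0,1)$ and $p+(K-1)q=1$. Then $\J(\mathbf{S}-\I)=0$ and $(\mathbf{S}-\I)\J=0$ (the second because $\mathbf{S}$ is symmetric), so $(\mathbf{S}-\I)(\I-\frac1K\J)=\mathbf{S}-\I$. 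Hence
\[
\nabla_{\W}L_{\mathrm{CE}}=c\bigl[(p-1)\I+q(\J-\I)\bigr].
\]
Its diagonal entries $c(p-1)$ are strictly negative and its off-diagonal entries $cq$ are strictly positive, because softmax probabilities lie strictly in $(0,1)$. Therefore $\mathrm{sign}(\nabla_{\W}L_{\mathrm{CE}})=\J-2\I$, independently of $a$ and $b$. This is the step that deserves care: we must check that no entry of the gradient ever vanishes, since $\mathrm{sign}(0)=0$ would break the fixed pattern. Strict positivity of the softmax handles this, including at $t=0$, where $\mathbf{S}=\frac1K\J$.

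\textbf{Step 2 (induction and the scalar recursion).} The update becomes
\[
\W_{t+1}=(1-\eta\lambda)\W_t-\eta(\J-2\I),
\]
which maps the family $\{a\I+b\J\}$ to itself. Starting from $\W_0=\mathbf{0}$, induction keeps every iterate in the family, so Step 1 applies at every $t$. The row sum satisfies $\W_t^\top\boldsymbol{1}=s_t\boldsymbol{1}$ with $s_t=a_t+Kb_t$. Applying the update to $\boldsymbol{1}$ and using $(\J-2\I)\boldsymbol{1}=(K-2)\boldsymbol{1}$ gives
\[
s_{t+1}=(1-\eta\lambda)s_t-\eta(K-2),\qquad s_0=0,
\]
with solution
\[
s_t=-\frac{K-2}{\lambda}\bigl(1-(1-\eta\lambda)^t\bigr).
\]

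\textbf{Step 3 (limit and monotonicity).} Assuming $0<\eta\lambda<1$, the factor $(1-\eta\lambda)^t$ decreases to zero, so $|s_t|$ increases monotonically from $0$ to $(K-2)/\lambda$. Consequently the NC0 metric, which is $s_t^2$ up to the factor $K$ from $\|\boldsymbol{1}\|^2$, increases monotonically from zero to the stated limit $(K-2)^2/\lambda^2$. This uses the normalization $\frac1K\|\W_t^\top\boldsymbol{1}\|^2$ of \cref{theorem:sgd_decoupled}; without it the limit picks up an extra factor of $K$.

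For $1\le\eta\lambda<2$ the sequence still converges to the same limit, but it oscillates rather than increasing monotonically, so the monotonicity claim requires $\eta\lambda<1$. In every case the limit is nonzero for $K\ge3$, so NC0 fails.
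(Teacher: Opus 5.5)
Your proof is correct and follows the same route as the paper: along the trajectory from $\W_0=\mathbf{0}$ the signed gradient is the constant matrix $\J-2\I$, so the row sum satisfies an affine scalar recursion with solution $-\frac{K-2}{\lambda}\bigl(1-(1-\eta\lambda)^t\bigr)$, which is exactly the paper's $(K-2)w_t$. Your explicit invariance argument on $\{a\I+b\J\}$ is more careful than the paper's claim that the sign pattern holds for every $\W$, which relies on $(\mathbf{S}-\I)\J=0$; that identity holds only when $\mathbf{S}$ has unit row sums, as it does on your symmetric family. Your side conditions are also right ($1/K$ normalization, $\eta\lambda<1$ for monotonicity, $K\ge 3$ for a nonzero limit), with one small slip: at $\eta\lambda=1$ the sequence does not oscillate but is constant from $t=1$ on, so it is still weakly monotone.
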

\begin{sketchproof}
    The key observation is that the signed loss gradient $\text{sign}(\nabla L_\textbf{CE}(\W_t))$ in this setting is constant in $t$, simplifying the following computation. See \cref{proof:signsgd:decoupled_appdx} for the full proof.
\end{sketchproof}

\begin{theorem}[Sign GD with coupled weight decay can lead to NC0]\label{theorem:signsgd:coupled}
    \textcolor{blue}{
    Consider sign GD with (coupled) weight decay $\lambda>0$ and step size $\eta>0$ on the UFM loss
    $
    L_\text{CE}(\W\H,\I) = \sum_{n=1}^N  L_\text{CE}(\W\textbf{h}_n,\mathbf{e}_n),
    $
    where the feature $\H=\M^*$ is fixed to an NC solution and only the weight $\W$ is trained: \looseness=-1
    \[
    \W_{t+1} = \W_t - \eta (\text{sign}(\nabla_{\W_t}L_\text{CE} + \lambda \W_t)) 
    \] 
    We initialize $\W_0 = 0 \in\R^{K\times K}$ and define the covariance matrix
    \(
    \mathbf{C}_t = \mathbf{W}_t \mathbf{W}_t^\top
    \)
    and the scalar
    \(
    \alpha_t = \langle \mathbf{C}_t, \hat{\mathbf{J}}\rangle_F
    \quad\text{where}\quad
    \hat{\mathbf{J}} =\frac{1}{K} \mathbf{1} \mathbf{1}^\top.
    \)
    Then there exists a learning rate decay scheme $\eta=\eta(t)\xrightarrow[t\to\infty]{}0$ such that $\alpha_t\xrightarrow[t\to\infty]{}0$.
    }
\end{theorem}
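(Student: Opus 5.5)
By permutation symmetry I expect the dynamics to collapse to two scalars, after which the statement becomes a question about a two-dimensional sign-descent iteration with a unique fixed point at which the row sum vanishes. \textbf{Step 1 (symmetry reduction).} I would show by induction that $\W_t = a_t\I + b_t\J$ for all $t$. This holds at $t=0$ since $\W_0=0$. With $\H=\M^*=\frac{1}{\sqrt{K-1}}(\I-\frac1K\J)$ and $\J\M^*=0$, we get $\W_t\H = a_t\M^*$. Hence the softmax matrix $\mathbf S_t$ of the logits has the form $p_t\I+q_t\J$, and
\[
\nabla_{\W_t}L_{\mathrm{CE}}=(\mathbf S_t-\I)\H^\top=-\gamma(a_t)\Big(\I-\tfrac1K\J\Big),\qquad \gamma(a)=\frac{1-p(a)+q(a)}{\sqrt{K-1}}>0 .
\]
Here $p(a)$ and $q(a)$ are the diagonal and off-diagonal coefficients of the softmax matrix at $a_t=a$. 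The matrix $\nabla L_{\mathrm{CE}}+\lambda\W_t$ then has only two distinct entries:
\[
d_t=-\gamma(a_t)\tfrac{K-1}{K}+\lambda(a_t+b_t) \text{ on the diagonal},\qquad o_t=\tfrac{\gamma(a_t)}{K}+\lambda b_t \text{ off it}.
\]
Applying the sign entrywise preserves the form $a\I+b\J$, with updates $b_{t+1}=b_t-\eta_t\,\mathrm{sign}(o_t)$ and $(a+b)_{t+1}=(a+b)_t-\eta_t\,\mathrm{sign}(d_t)$. Since $\W_t^\top\mathbf 1=(a_t+Kb_t)\mathbf1$, we obtain $\alpha_t=(a_t+Kb_t)^2$.

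\textbf{Step 2 (fixed point).} Setting $d=o=0$ gives $\lambda b^*=-\gamma(a^*)/K$ and $\lambda(a^*+b^*)=\gamma(a^*)\frac{K-1}{K}$. These combine to $\lambda a^*=\gamma(a^*)$ and to $a^*+Kb^*=\gamma(a^*)/\lambda-\gamma(a^*)/\lambda=0$. So every stationary point of the reduced dynamics has zero row sum, i.e.\ NC0 holds there.

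Existence and uniqueness of $a^*$ follow because $\gamma$ is positive and decreasing in $a$: increasing $a$ sharpens the softmax toward the correct class, so $1-p+q$ decreases. Consequently $a\mapsto\lambda a-\gamma(a)$ is strictly increasing, is negative at $0$, and tends to $+\infty$. The proof then reduces to showing that $(a_t,b_t)\to(a^*,b^*)$, or at least that $|a_t+Kb_t|\to0$, under a suitable schedule $\eta_t\to0$.

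\textbf{Step 3 (tracking with decaying step size) --- the main obstacle.} Sign descent with a constant step only enters an $O(\eta)$ band around the fixed point and then oscillates; this explains the non-monotone behaviour of $\alpha_t$ (it first grows while $b_t$ drifts down and $a_t$ climbs). I would use a step-wise schedule with constant $\eta_j$ on epoch $j$ and $\eta_j\downarrow0$.

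On each epoch I would show the following. The off-diagonal coordinate $b$ moves monotonically toward the moving target $-\gamma(a)/(K\lambda)$. The diagonal coordinate $u=a+b$ moves toward $\gamma(a)\frac{K-1}{K\lambda}$. Because $\gamma$ is Lipschitz and decreasing, the targets move by at most $O(\eta_j)$ per step, but $\gamma'$ has the favourable sign: moving $a$ up pushes both targets in the direction that stabilizes. I would make this precise with a potential such as $\Phi=|o_t|+|d_t|$ and show two things. First, $\Phi$ decreases by a definite amount $c\,\eta_j\lambda$ per step while $\Phi\gtrsim C\eta_j$. Second, once $\Phi\le C\eta_j$, it stays below $C\eta_j$. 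Taking each epoch long enough, i.e.\ of length $\gtrsim \Phi_{\text{start}}/(c\lambda\eta_j)$, ensures the band is reached.

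Small $|d_t|$ and $|o_t|$ give $|a_t-a^*|,|b_t-b^*|=O(\eta_j)$ by the monotonicity from Step 2. Hence $|a_t+Kb_t|=O(\eta_j)\to0$ and $\alpha_t\to0$.

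I expect the delicate points to be the coupling between the two coordinates, since $a=u-b$ enters both targets, and the ties $\mathrm{sign}(0)=0$. If the direct potential argument fails, I would fall back on treating $b$ as a fast variable: show that it locks into its $O(\eta)$ band and reduce to a one-dimensional sign iteration for $u$ with a monotone drift.
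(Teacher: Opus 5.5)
Your Steps 1--2 are the paper's argument in other coordinates. The paper writes $\W_t=(a_t+b_t)\I-b_t\J$, which is your $a\I+b\J$ with $b\mapsto -b$. It tracks the same two residuals $(K-1)\psi_t-\lambda a_t$ and $\psi_t-\lambda b_t$, which are your $-d_t$ and $o_t$ with $\psi$ playing the role of $\gamma/K$. It also decays $\eta$ step-wise into ever thinner oscillation bands.

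The real difference is Step 3. The paper describes the trajectory in three phases: both coordinates grow, then the diagonal keeps growing while the off-diagonal oscillates, then both oscillate. It then asserts that the residuals stay within $\lambda\eta$ because every update flips the sign. This explains the transient growth of $\alpha_t$ but leaves the quantitative control implicit. Your monotonicity argument supplies it, and it closes exactly, not just to first order:
\begin{itemize}
\item If $\mathrm{sign}(d_t)=\mathrm{sign}(o_t)$, then $a_t$ does not move and both residuals shift by exactly $\eta\lambda$ toward zero.
\item If the signs are opposite, $a_t$ moves by $2\eta$. Because $\gamma$ is decreasing, both residuals are pushed further toward zero.
\end{itemize}
So every nonzero residual moves toward $0$ by an amount in $[\eta\lambda,\eta(\lambda+2L_\gamma)]$, where $L_\gamma$ is a Lipschitz constant of $\gamma$. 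A zero residual lands within $\eta L_\gamma$, which disposes of the ties you worried about.

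Do not phrase this as $\Phi=|d_t|+|o_t|$ dropping by $c\eta\lambda$ at every step, though; that claim is false as stated. If $o_t>0$ is tiny and $d_t>0$ is large, $|d_t|$ falls by $\eta\lambda$ while $|o_t|$ rises by almost $\eta\lambda$, so $\Phi$ barely changes. Argue coordinatewise instead: each residual outside the band $\{|r|\le\eta(\lambda+2L_\gamma)\}$ shrinks by at least $\eta\lambda$ per step, and the band is invariant.

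Finally, you can skip the detour through $|a_t-a^*|$ and $|b_t-b^*|$. The identity $\lambda(a_t+Kb_t)=d_t+(K-1)o_t$ gives $\alpha_t\le\lambda^{-2}(|d_t|+(K-1)|o_t|)^2=O(\eta_j^2)$ once both residuals are in the band. This is exactly the paper's closing estimate.
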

\begin{proof}
    \textcolor{blue}{See \cref{proof:signsgd:coupled_appdx}.}
\end{proof}

The key difference between the results of \cref{theorem:signsgd:decoupled} and \cref{theorem:signsgd:coupled} lies in how coupled weight decay affects the signed gradient during training. As the weight norm $\|\W\|$ increases, the coupled decay term can eventually flip the sign of the gradient, altering the trajectory of the NC0 metric $\alpha_t$. Initially, $\alpha_t$ grows at a similar rate in both cases, but their behaviors diverge once the decay term becomes dominant.

To illustrate this effect, we conducted a small-scale experiment using a simple MLP on a separable dataset with various optimizers. As shown in \cref{fig:trajectory}, SignSGD displays non-monotonic dynamics in $\alpha_t$, while SignSGDW exhibits steady convergence to a positive value. Similar patterns appear in Adam and AdamW, though more smoothed due to their adaptive updates.

\begin{figure}
    \centering
    \includegraphics[width=0.95\linewidth]{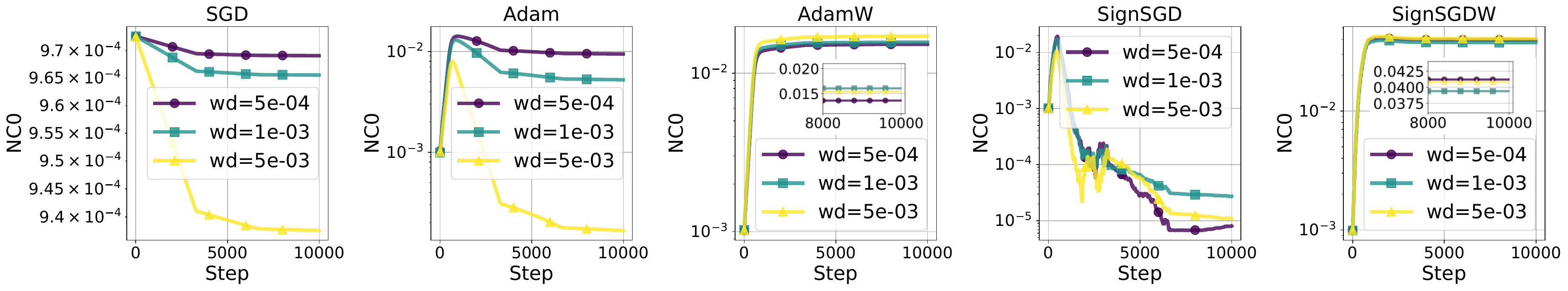}
    \caption{Training dynamic of NC0 with optimizers SGD, Adam, AdamW, Adam0 ($\beta_1=\beta_2=0$), AdamW0 ($\beta_1=\beta_2=0$). For AdamW and SignSGD the inlay shows the NC0 metric more detailed for the last 2000 steps. Note that 5 steps correspond to one training epoch.}
    \label{fig:trajectory}
\end{figure}

\section{Discussion and Limitations} \label{section:discussion}
In this section, we discuss new insights, additional considerations and limitations from the main results in Section \ref{section:main_result}. Additionally, we explore potential follow-up research directions that could provide theoretical explanations or extend our experiments to broader settings.

\subsection{Interpreting NC Metrics in Practice}\label{subsection:interpreting_NC_metrics}

While NC is defined by the convergence of all NC metrics to zero in the limit, practical experiments never achieve exact zeros. Since NC is inherently a continuous rather than discrete phenomenon, it becomes necessary to define what constitutes the presence of NC in practice. This important issue has not been thoroughly addressed in the existing literature.

A further complication is that different NC metrics operate on different scales and these scales vary across settings of architectures and datasets. For example, in our experiments, the smallest observed values for NC2 and NC3 are on the order of 0.1, whereas NC1 can reach values an order of magnitude smaller.

In this work, we therefore refer to the emergence of NC in terms of relative strength. Specifically, we use the NC metric values at initialization as a baseline for models that do not exhibit NC, and use the smallest values achieved across all experiments as a reference point for models that do. This framing allows us to discuss the strength of NC emergence across different optimizers and settings.

\subsection{The Redundant NC4 property}

Readers may notice that we omit NC4 from the results in \cref{section:main_result}. This is because we observed that NC4 is consistently satisfied whenever the training accuracy approaches 100\%, regardless of whether the other NC metrics (NC1–NC3) exhibit collapse. As shown in \cref{fig:NC4}, NC4 is largely uncorrelated with the other metrics. To maintain a clearer and more focused presentation, we therefore exclude NC4 from our main analysis.

\subsection{Partial Neural Collapse}

\begin{table}[ht]
\caption{Final NC metrics for the run with the smallest absolute NC3 metric and $>99\%$ training accuracy for each optimizer. Lower values ($\downarrow$) indicate stronger neural collapse. Values in parentheses represent percentages relative to the metric at initialization. Hyperparameters used for each optimizer can be found in \cref{tab:minNC3_hyperparameters}.}
\centering
\begin{tabular}{lrrrr}
\toprule
Optimizer  & NC0$_\downarrow$                  & NC1$_\downarrow$                   & NC2$_\downarrow$                   & NC3$_\downarrow$                   \\
\midrule
SGD        & \textbf{1.53e-05} (\textcolor{RoyalBlue}{$<-99.5\%$})  & 0.02  (\textcolor{RoyalBlue}{$<-99.5\%$})  & 0.19  (\textcolor{RoyalBlue}{$-75.8\%$})  & 0.13  (\textcolor{RoyalBlue}{$-90.9\%$})  \\
SGDW       & 1.54e-04       (\textcolor{RoyalBlue}{$<-99.5\%$})     & \textbf{0.01}  (\textcolor{RoyalBlue}{$<-99.5\%$})  & 0.15  (\textcolor{RoyalBlue}{$-81.7\%$})  & \textbf{0.10}          (\textcolor{RoyalBlue}{$-92.7\%$})  \\
Adam       & 0.12           (\textcolor{RoyalBlue}{$<-93.2\%$})     & 0.04  (\textcolor{RoyalBlue}{$-99.5\%$})  & 0.23  (\textcolor{RoyalBlue}{$-71.6\%$})  & 0.17          (\textcolor{RoyalBlue}{$-88.2\%$})  \\
AdamW      & 8.09           (\textcolor{BrickRed}{$\gg$100\%})     & \textbf{0.01}  (\textcolor{RoyalBlue}{$<-99.5\%$})  & \textbf{0.14}  (\textcolor{RoyalBlue}{$-82.1\%$})  & 0.49          (\textcolor{RoyalBlue}{$-65.1\%$})  \\
\bottomrule
\end{tabular}
\label{tab:NC_metrics_minNC3_table}
\end{table}

Another subtlety we observe is what we term \emph{partial neural collapse}. As shown in \cref{tab:NC_metrics_minNC3_table}, AdamW can achieve minimal values for NC1 and NC2 among all optimizers, even while NC0 diverges and NC3 is not satisfied. 
This indicates that NC properties may not always emerge jointly, contrary to the original claim in \citet{papyan2020prevalence}. Understanding the theoretical conditions under which only a subset of NC properties holds remains an intriguing open question.

\subsection{Limitations of Theoretical Support}

Our experiments on Adam and AdamW are conducted on realistic models and datasets, whereas our theoretical results (\cref{theorem:signsgd:decoupled}, \cref{theorem:signsgd:coupled}) focus on a simplified setting: SignGD applied to the unconstrained feature model. While this restricted setup already demonstrates that AdamW fails to achieve NC, it does not fully capture the complexity of deep neural networks or adaptive optimizers in practice. Nevertheless, we believe our proof techniques could be extended to explain why Adam may lead to NC in more general settings.
Moreover, our theoretical analysis is limited to the training dynamics of NC0, chosen for its analytical tractability and strong empirical correlation with other NC metrics. A full theoretical understanding of NC1–NC3 under realistic optimization dynamics remains an open challenge, and we leave this direction for future work.

\subsection{Future Research}

Other than the topic we have discussed in the previous subsections, our findings also open other intriguing avenues for future research.

\begin{itemize}
    \item Empirical studies should be expanded to include larger models, such as Vision Transformers (ViTs) and DenseNets, as well as more diverse datasets, to assess the broader generality of our findings. Our preliminary results on ViT are available in \cref{subsubsection:experiment:ViT}, and largely confirm our findings also extend to Transformers.
    \item Due to computational constraints, our study only analyzed NC properties in the last layer. However, previous works \citep{masarczyk2023tunnel, rangamani2023feature} suggest that these properties may also manifest in intermediate layers. Investigating NC behavior across different depths could provide further insights into hierarchical feature representations.
    \item In addition to the optimizers (SGD, Adam, AdamW, Signum) studied in this work, novel first-order methods such as Lion \citep{chen2023lion} and Mars \citep{yuan2024mars}, and second-order methods, such as Shampoo \citep{gupta2018shampoo}, SOAP \citep{vyas2024soap} and Muon \citep{jordanmuon} demonstrated promising improvements in convergence and generalization. However, their effects on NC remain largely unexplored.
\end{itemize}

\section{Conclusion} \label{section:conclusion}
In this paper we have conducted an extensive number of experiments to elucidate the role of the optimization algorithm in the emergence of the neural collapse (NC) phenomenon. In particular, our experiments consistently show that coupled weight decay is necessary for achieving small NC metrics. 
While the role of weight decay in the context of NC has been studied in the literature before, this is the first paper distinguishing between coupled and decoupled weight decay. Moreover, our theoretical results show that the resulting training dynamics differ considerably and one needs to take this into account. 
These findings underscore the limitations of existing theoretical frameworks, which have studied NC mainly under gradient flow or gradient descent, and highlight the need for further investigation into the interplay between optimizers and NC.

\subsubsection*{Acknowledgments}
\label{section:acknowledgements}
WO acknowledges that this research was partially funded by National Science Centre, Poland grant no 2022/45/N/ST6/04098.

\subsection*{Author contributions}
TC and JZ led the project, interpreted how the theory and experiments fit together, contributing to the main paper framing, direction, and writing. TC developed the theoretical results (theorems and proofs) and performed supporting experiments. JZ designed the empirical evaluation and methodology, implemented and executed the experiments, analyzed the empirical results.
WM contributed to the initial project idea and discussions. AL was the lead advisor, contributing to the framing, direction and writing.

\bibliography{references}
\bibliographystyle{iclr2026_conference}

\newpage

\appendix
\vbox{
  {\hrule height 2pt \vskip 0.15in \vskip -\parskip}
  \centering
  {\LARGE\bf Appendix\par}
  {\vskip 0.2in \vskip -\parskip \hrule height 0.5pt \vskip 0.09in}
}

\section{LLM usage statement}

We disclaim that we have used Large Language Models to refine a few sentences and additionally as a proxy of a search engine to retrieve additional related work.\\

The appendix is organized as follows. In Section \ref{section:NC}, we formally define the neural collapse (NC) phenomenon and introduce the metrics used in the experiments presented in the main text. 
In \cref{section:related_work}, we review prior works related to our paper. Section \ref{section:experiment} provides detailed descriptions and additional observations from our experiments. In Section \ref{section:proof}, we present the full proof of the theorems stated in the main text. 

\section{NC Metrics} \label{section:NC}
Neural collapse (NC), discovered by \cite{papyan2020prevalence}, is a striking phenomenon observed during the terminal phase of training (TPT) deep neural networks (DNN) for multi-class classification tasks, particularly when trained with cross-entropy (CE) loss. Formally, let the (trained) last-layer features of the DNN be denoted by $\mathbf{h}_n$, and concatenate them into a matrix $\H \in \mathbb{R}^{p \times N}$, where $p$ is the width of the last layer and $N$ is the number of training samples indexed by $n$. The output logits of the network are then computed as $\W_L\H \in \mathbb{R}^{K \times N}$, where $\W_L \in \mathbb{R}^{K \times p}$ is the last-layer weight, $\mathbf{b} \in \mathbb{R}^K$ is the bias vector, and $K$ is the number of classes. \footnote{For simplicity, we interchangeably refer to an input $\x \in \mathbb{R}^d$ and its corresponding last-layer feature $\mathbf{h} \in \mathbb{R}^p$ after the parameters of the network have converged during TPT and the mapping $\x \mapsto \mathbf{h}$ is fixed.}

The DNN is trained using the CE loss computed on the logits:
\[
\text{CE}(\W_L,\H) = -\sum_{n=1}^N \log\left(\frac{\exp(\W_L \mathbf{h}_n)_{y_n}}{\sum_{k=1}^K \exp(\W_L \textbf{h}_n)_k}\right),
\]
where $y_n \in [K]$ denotes the class label index of the feature vector $\mathbf{h}_n$. Let $\mathcal{C}_k \eqdef {n \in [N] : y_n = k}$ be the index set of data points belonging to class $k \in [K]$. In this paper, we assume that the classes are balanced, i.e., $|\mathcal{C}_k|$ is equal for all $k \in [K]$. For the effects of class imbalance on NC, we refer the reader to \cite{han2022neural, thrampoulidis2022imbalance, behnia2023implicit}.

Let $\bm{\mu}_k\eqdef\frac{1}{|\mathcal{C}_k|}\sum_{n\in\mathcal{C}_k} \mathbf{h}_n$ be the class mean for each class $k$. The global mean of all classes is given by ${\bm{\mu}}_G=\frac{1}{K}\sum_{k=1}^K\bm{\mu}_k$ and centered class means are defined as $\bar{\bm{\mu}}_k=\bm{\mu}_k - \bm{\mu}_G$. Let the between-class covariance $\Si_B\in\R^{p\times p}$ and the within-class covariance $\Si_W\in\R^{p\times p}$ be:
\begin{align*}
    \Si_B &= \frac{1}{K}\sum_{k=1}^K \bar{\bm{\mu}}_k\bar{\bm{\mu}}_k^\top, \\
    \mathcolor{blue}{\Si_W &= \frac{1}{K} \frac{1}{N} \sum_{k=1}^K\sum_{n=1}^N (\mathbf{h}_n^k - \bm{\mu}_k)(\mathbf{h}_n^k - \bm{\mu}_k)^\top},
\end{align*}

\textcolor{blue}{where $\mathbf{h}_n^k$ correspond to the feature vectors of class $k$}.

We also concatenate the centered class means into a matrix $\M\eqdef (\bar{\bm{\mu}}_1,...,\bar{\bm{\mu}}_K)\in\R^{p\times K}$. 

With these definitions in place, we now conceptually outline the NC properties and introduce corresponding metrics to quantitatively measure these properties in our experiments.

\paragraph{NC1 - Variability Collapse}
The first property of neural collapse (NC1) describes the collapse of features to their respective class means. Formally, this means that the distance between a feature vector $\mathbf{h}_n$ and its corresponding class mean $\bm{\mu}_k$ approaches zero:
    \[
    \norm{\mathbf{h}_n-\bm{\mu}_k}{2}\to0, \forall k\in[K],\ n\in\mathcal{C}_k.
    \] 
A corresponding metric is defined as \cite{zhu2021geometric, kothapalli2023review, ammar2024neco}:
    \begin{equation} \label{eq:NC1}
        \NC1 \eqdef \frac{1}{K} \text{Tr}[\Si_W\Si_B^\dagger]
    \end{equation}
where $\dagger$ denotes the Moore-Penrose pseudo-inverse.

\paragraph{NC2 - Convergence of Class Means to Simplex ETF}
The second property of neural collapse (NC2) describes the convergence of class means to a simplex equiangular tight frame (ETF), where the angles between the means are maximally symmetric. Formally, this property can be expressed as:
    \[
    \begin{cases}
        \norm{\bar{\bm{\mu}}_j}{2} - \norm{\bar{\bm{\mu}}_{k}}{2} &\to 0\\
        \left\langle \frac{\bar{\bm{\mu}}_j}{\norm{\bar{\bm{\mu}}_j}{2}}, \frac{\bar{\bm{\mu}}_k}{\norm{\bar{\bm{\mu}}_k}{2}}  \right\rangle &\to \frac{K}{K-1}\delta_{jk} - \frac{1}{K-1},
    \end{cases}
    \forall j,k\in[K].
    \]
    To measure this property, we define two metrics capturing the equinormality and equiangularity of the centered class means \cite{papyan2020prevalence, ammar2024neco}:
    \begin{align}
        \NC2_n &= \frac{\text{std}_k\{\norm{\bar{\bm{\mu}}_k}{2}\}}{\text{avg}_k\{\norm{\bar{\bm{\mu}}_k}{2}\}}; \label{eq:NC2n} \\
        \NC2_a &= \text{avg}_{k\neq k'}\left| \left\langle\frac{\bar{\bm{\mu}}_k}{\norm{\bar{\bm{\mu}}_k}{2}}, \frac{\bar{\bm{\mu}}_{k'}}{\norm{\bar{\bm{\mu}}_{k'}}{2}}\right\rangle + \frac{1}{K-1} \right|  .\label{eq:NC2a}
    \end{align}
    Here, $\text{std}_\bullet{(\cdot)}$ and $\text{avg}_\bullet{(\cdot)}$ denote the standard deviation and mean, respectively, over the specified index.
    
    An alternative metric for NC2, introduced by \cite{kothapalli2023review}, directly measures the deviation of the centered class means from a simplex ETF: 
    \begin{equation} \label{eq:NC2}
        \NC2 \eqdef \frac{1}{K^2}\norm{\frac{\M^\top\M}{\norm{\M^\top\M}{F}} - \M^* }{F}
    \end{equation}
    where 
    \[
    \M^*\eqdef \frac{1}{\sqrt{K-1}}\left( \I_K - \frac{1}{K}\J_K \right),
    \] 
     $\I_K\in\R^{K\times K}$ is the identity matrix and $\J\in\R^{K\times K}$ is the matrix of ones. Note that 
    $
    \NC2_n,\NC2_a\to0\iff\NC2\to0.
    $

\paragraph{NC2W - Convergence of Weight Rows to Simplex ETF}
In addition to NC2, we define a related property, NC2W, which describes the convergence of the rows of the last-layer weights $\W_L \in \mathbb{R}^{K \times p}$ to a simplex ETF. If the third NC property, NC3 (described later), holds, then NC2 and NC2W are equivalent. However, to study partial NC, it is essential to decouple these properties and measure NC2 and NC2W separately.

To measure NC2W, \cite{zhu2021geometric} introduced the following metric:
    \begin{equation} \label{eq:NC2W}
        \NC2\mathcal{W} \eqdef \frac{1}{K^2}\norm{\frac{\W_L\W_L^\top}{\norm{\W_L\W_L^\top}{F}} - \M^* }{F}.
    \end{equation}

While this metric measures the overall alignment of $\W_L$ with a simplex ETF, it does not account for the equinormality and equiangularity of the rows of $\W_L$. To address this, we introduce the following metrics:
    \begin{align}
        \NC2\mathcal{W}_n &= \frac{\text{std}_k\{\norm{\w_k}{2}\}}{\text{avg}_k\{\norm{\w_k}{2}\}} \label{eq:NC2Wn} \\
        \NC2\mathcal{W}_a &= \text{avg}_{k\neq k'}\left| \left\langle\frac{\w_k}{\norm{\w_k}{2}}, \frac{\w_{k'}}{\norm{\w_{k'}}{2}}\right\rangle + \frac{1}{K-1} \right|  \label{eq:NC2Wa}
    \end{align}
    where $\w_k^\top\in\R^{p}$ is the $k$-th row of $\W_L$.

\paragraph{NC2M - Convergence of Product to Simplex ETF}

   Finally, \cite{zhu2021geometric, kothapalli2023review} proposed a metric that interpolates between NC2 and NC2W: \footnote{In the original works, this metric was used to evaluate self-duality. However, in this paper, we decouple the NC properties to study the effects of implicit biases on each individually.}
    \begin{equation} \label{eq:NC2M}
        \NC2\mathcal{M} \eqdef \frac{1}{K^2}\norm{\frac{\W_L\M}{\norm{\W_L\M}{F}} - \M^* }{F}.
    \end{equation}
    Note that $\NC2, \NC2\mathcal{W}\to0\implies \NC2\mathcal{M}\to0$ but the converse does not hold.

\paragraph{NC3 - Convergence to Self-Duality}
The third property of neural collapse (NC3) describes that
the rows of the last-layer weight align with the column of the class means, that is,
    \[
    \norm{\frac{\W_L}{\norm{\W_L}{F}}-\frac{\M^\top}{\norm{\M^\top}{F}}}{F}\to0;
    \]
    the corresponding metric is an obvious one \cite{papyan2020prevalence, garrod2024persistence}:
    \begin{equation} \label{eq:NC3}
        \NC3 \eqdef \frac{1}{Kp}\norm{\frac{\W_L}{\norm{\W_L}{F}}-\frac{\M^\top}{\norm{\M^\top}{F}}}{F}
    \end{equation}

\paragraph{NC4 - Simplification of Nearest-Class-Center (NCC)}
The fourth property of neural collapse (NC4) describes that the classifier decision boundaries become equivalent to those derived by a nearest-class-mean classifier, that is,
    \[
    \argmax_k \langle \w_k,\mathbf{h} \rangle \to \argmin \norm{\mathbf{h}-{\bm{\mu}}_k}{2}
    \]
    for any test feature $\mathbf{h}\in\R^p$; 
    hence we can fix a test set of features $\left\{\mathbf{h}_n^\text{test}\right\}_{n=1}^{N^\text{test}}$ define the metric:
    \begin{align}
        \NC4 \eqdef 
        \frac{1}{N^\text{test}}\sum_{n=1}^{N^\text{test}} \bm{1}\{\argmax_k \langle \w_k,\mathbf{h}_n^\text{test} \rangle = \argmin_k \norm{\mathbf{h}_n^\text{test}-{\bm{\mu}}_k}{2}\} \label{eq:NC4}
    \end{align}
    where $\bm{1}$ is the indicator function.

The above NC properties hold if their corresponding metrics approach zero (except for NC4, which approach one) as the training step $t \to \infty$. A solution $\W_L, \mathbf{H}$ satisfying these properties is referred to as an NC solution.

To observe the interpolation between partial and full NC, we introduce a weaker property:

\paragraph{NC0 - Zero Row Sum of Last-Layer Weight}
This new property describes that
the rows of the last-layer weight $\W_L$ sums up to zero with the corresponding metric 
\begin{equation} \label{eq:NC0}
    \NC0 \eqdef \frac{1}{p}\norm{\W_L^\top\bm{1}}{2},
\end{equation}
Note that $\NC2\mathcal{W}\to0\implies\NC0\to0$ but the converse does not hold. 

The analogous property for the last-layer features, \textbf{Zero Column Sum of Last-Layer Features}, holds automatically because the columns of $\M$ are centered class means:
\[
\sum_{k=1}^K \bar{\bm{\mu}}_k = \sum_{k=1}^K \left({\bm{\mu}}_k - \bm{\mu}_G\right) =0.
\]

Thus, NC0 for the last-layer weights already represents a form of duality similar to NC3.

\newpage
\section{Additional Related work} \label{section:related_work}
\subsection{Weight Decay and Neural Collapse}

Weight Decay has been shown to be essential for NC in prior works, like \citep{zhu2021geometric, pan2024understanding, jacot2024wide}. However, their statements on weight decay are for (quasi-)optimal solutions in oversimplified models, which ignore the complex interaction between non-convex loss landscape and optimizers. Please see \cref{subsection:UFM} for an example.

\subsection{Empirical studies on the Emergence of Neural Collapse} 

Neural collapse has also been studied beyond the original problem setting, which assumes few balanced classes as well as noise-free labels.
Notably, \citet{wu2024linguistic} studied the occurrence of NC for large language models, which do not satisfy any of the original assumption. \citet{jiang2023generalized} studied neural collapse for a large number of classes, while \citet{mouheb2024evaluating} studied the influence of imbalanced in medical image classification on NC. 

\subsection{Applications of Neural Collapse}

The observation of neural collapse (NC) has inspired a growing body of follow-up work that applies NC metrics across various settings. In the context of out-of-distribution (OOD) detection, \citet{ammar2024neco} propose a novel post-hoc detection method based on the geometric properties of NC, while \citet{harun2025controlling} show that explicitly controlling for NC1 can enhance OOD detection performance. Notably, the latter also claim that AdamW leads to NC, based on empirical results where NC3 values hover around 0.5 across different models—mirroring the misleading metrics reported in Table~\ref{tab:NC_metrics_minNC3_table}. As we demonstrate in the main text, however, this does not indicate true NC. This discrepancy underscores the need for a more precise and systematic framework for evaluating NC -- one of the central contributions of this work.

In a separate line of inquiry, \citet{liu2023inducing} study the impact of class imbalance on NC and propose explicit feature regularization terms to induce NC under imbalanced distributions, resulting in improved model performance.

\subsection{Coupled Weight Decay in the context of Neural Collapse}

To the best of our knowledge, no prior work has investigated the role of optimizer choice in the context of NC. When minimizing the objective in \cref{eq:min} or \cref{eq:UFM}, the weight decay induced by the L2-regularization parameter $\lambda$ is coupled with the training loss. However, with the introduction of AdamW \cite{loshchilov2019decoupled}, decoupled weight decay has become the default in many modern optimizers. This paper aims to bridge this gap by systematically examining the impact of coupled versus decoupled weight decay on the emergence of NC.

\subsection{Unconstrained Feature Model} \label{subsection:UFM}

The unconstrained feature model (UFM) \cite{mixon2022neural, zhu2021geometric} is a simplified theoretical framework commonly used to study neural collapse (NC). In UFM, the last layer feature is replaced by a trainable matrix $\H=(\textbf{h}_n)_{n=1}^N$, referred to as the \emph{unconstrained feature}, which mimics the role of feature extraction layers in deep neural networks (DNN). For analytical simplicity, the layer following the unconstrained feature is often assumed to be linear $\W$, making UFM a special case of deep linear networks (DLN):
\begin{equation} \label{eq:UFM}
    \min_{\W,\H} \sum_{n=1}^N\ell(\W \mathbf{h}_n, \y_n) + \frac{\lambda}{2} \|\W\|^2 + \frac{\lambda}{2} \|\H\|^2,
\end{equation}
simplifying the minimization problem in~\cref{eq:min}.
In this paper, the loss $\ell$ is always assumed to be the cross-entropy (CE) loss, because it is the standard loss used in multi-classification tasks.

\cite{zhu2021geometric} has reported positive results on NC using UFM.
Informally it holds that:
\begin{theorem}[Theorem 3.1 and 3.2 in \cite{zhu2021geometric}] \label{theorem:zhu}
    Any global optimal solution of UFM is an NC solution, while all other critical points are strict saddles. As a result, for random initialization, it is almost surely that gradient descent finds an NC solution.
\end{theorem}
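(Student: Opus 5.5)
The plan follows the two-part structure of \cite{zhu2021geometric}. First, characterize the global minimizers of \cref{eq:UFM} with cross-entropy loss and balanced classes, $|\mathcal{C}_k|=n=N/K$, by a chain of lower bounds whose equality cases are exactly the NC properties. Second, show the landscape is benign via a convex reformulation in the product $\mathbf{Z}=\W\H$. Third, invoke a generic saddle-avoidance result for gradient descent.

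\textbf{Step 1 (global optima are NC).} Write the objective as $f(\W,\H)=g(\W\H)+\frac{\lambda}{2}\|\W\|^2+\frac{\lambda}{2}\|\H\|^2$, where $g$ collects the cross-entropy terms.
\begin{itemize}
\item At a critical point, $\nabla_\W f=0$ and $\nabla_\H f=0$ give $\lambda\W^\top\W=\lambda\H\H^\top$. Hence $\|\W\|^2=\|\H\|^2$ and the two blocks are balanced.
\item For each sample, apply Jensen's inequality (convexity of log-sum-exp) with weights depending on one scalar $c>0$. This gives
\[
\ell(\W\mathbf{h},y)\ \ge\ -\frac{c'}{K}\sum_{k\neq y}\bigl(\w_y-\w_k\bigr)^\top\mathbf{h}+C(c),
\]
with equality iff all non-target logits coincide.
\item Sum this bound over samples. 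Rewrite the linear term using the class means $\bm\mu_k$ and the row-mean of $\W$, then bound it by Cauchy--Schwarz together with $ab\le\frac{1}{2}(a^2+b^2)$ against $\|\W\|^2+\|\H\|^2$.
\item This yields a lower bound on $f$ depending only on $\rho=\|\W\|^2$. Minimizing over $\rho$ and $c$ gives a scalar problem with a unique solution.
\item Tracing the equality cases produces the NC properties: within-class features are equal (NC1); $\sum_k\w_k=0$ (NC0); rows of $\W$ are equinorm and equiangular with Gram matrix proportional to $\M^*$ (NC2); and $\W^\top\propto\M$ (NC3). NC4 then follows from NC1--NC3.
\end{itemize}

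\textbf{Step 2 (strict saddles).} Consider the convex problem
\[
\min_{\mathbf{Z}} g(\mathbf{Z})+\lambda\sqrt{n}\,\|\mathbf{Z}\|_*,
\]
using the variational identity $\|\mathbf{Z}\|_*=\min_{\W\H=\mathbf{Z}}\frac{1}{2}\bigl(\|\W\|^2/\sqrt{n}+\sqrt{n}\|\H\|^2\bigr)$, after rescaling so the constants match \cref{eq:UFM}.
\begin{itemize}
\item A critical point $(\W,\H)$ of $f$ is globally optimal iff $\|\nabla g(\W\H)\|_{\mathrm{op}}\le\lambda\sqrt{n}$, by the nuclear-norm subgradient condition. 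The first-order conditions already give the alignment part of that condition.
\item Otherwise, take top singular vectors $\mathbf{u},\mathbf{v}$ of $\nabla g(\W\H)$ and a unit vector $\mathbf{a}$ in the null space of $\W$ (equivalently of $\H^\top$).
\item Test the Hessian along the direction $(\Delta\W,\Delta\H)=(\mathbf{u}\mathbf{a}^\top,\,-\mathbf{a}\mathbf{v}^\top)$, suitably scaled. The second-order term equals the regularization curvature minus $2\,\mathbf{u}^\top\nabla g\,\mathbf{v}$, which is strictly negative.
\end{itemize}

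\textbf{Main obstacle.} The existence of the null vector $\mathbf{a}$ is where I expect the real difficulty. It is automatic when $p>K$. When $p\le K$ and $\W$ has full rank, I would argue separately that such critical points already satisfy the optimality certificate; failing that, I would restrict to $p>K$ as \cite{zhu2021geometric} effectively do. Handling this rank issue, and making the second-order expansion precise (the cross terms between the $\W$ and $\H$ perturbations must vanish by the criticality equations), is the technical heart of the argument.

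\textbf{Step 3 (GD finds NC).} The objective $f$ is coercive because of the $\lambda$-terms, so gradient descent iterates stay bounded. On this bounded sublevel set $\nabla f$ is Lipschitz with some constant $L$, so take step size $\eta<1/L$. The stable-manifold theorem of Lee et al.\ then shows that, from a random initialization, gradient descent almost surely avoids strict saddles. Combining boundedness and convergence to critical points (via a Łojasiewicz argument, which applies since $f$ is real-analytic) with Step 2, the limit is almost surely a global minimizer, which is an NC solution by Step 1.
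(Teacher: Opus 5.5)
This paper gives no proof of its own; it only quotes Theorems~3.1--3.2 of \cite{zhu2021geometric}. Your outline is essentially their argument: the Jensen-type cross-entropy lower bound with Cauchy--Schwarz and its equality cases, the nuclear-norm reformulation with the null-space negative-curvature direction, and saddle avoidance following Lee et al.

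Your fallback $p>K$ matches their dimension hypothesis and is genuinely needed, since for $p<K-1$ the quoted statement is false. The full-rank case you flagged also resolves for all $p\ge K-1$. Because $\boldsymbol{1}^\top\nabla g=0$ (\cref{lemma:CE}), every critical point has $\W^\top\boldsymbol{1}=0$, so $\operatorname{rank}\W\le K-1$. This gives a null vector once $p\ge K$. For $p=K-1$ with $\operatorname{rank}\W=K-1$, the left singular space of $\W\H=\mathbf{U}\boldsymbol{\Sigma}\mathbf{V}^\top$ is $\boldsymbol{1}^\perp$, so $\nabla g(\W\H)=-\lambda\mathbf{U}\mathbf{V}^\top$ (in the normalization of \cref{eq:UFM}) and the optimality certificate holds automatically.
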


\cite{zhu2021geometric} also experimented NC on realistic models with optimizers like SGD and Adam, concluding the universality of NC across different optimizers.

\newpage
\section{Experiment} \label{section:experiment}
The experiments of this work, particularly regarding computing the NC metrics, were based on code in \cite{wu2024linguistic}, which can be found at Github repository \url{https://github.com/rhubarbwu/neural-collapse}, which was published under the MIT license. The implementation of VGG9 was based on Code taken from \url{https://github.com/jerett/PyTorch-CIFAR10}. The author granted explicit permission to use the code.\\
An overview of the experiments that were conducted in this work can be found in \cref{tab:experiment_settings_overview}, which resulted in a total number of 36 different experimental settings of (architecture $\times$ optimizer $\times$ dataset) combinations. Each optimizer optimizer was trained using three different learning rates, six different values of momentum and six different values of weight decay, resulting in 108 training runs per optimizer and 3.888 training runs in total. 
Some of the runs diverged or only achieved suboptimal training performance, which were then discarded. In total we had 2.500 ``valid'' training runs, which reached at least $99\%$ training accuracy, which were considered for for the subsequent data analysis.

\begin{table*}[ht]
\centering
\caption{Overview of experiments conducted in this work.}
\begin{tabular}{l|l|l}
Architectures                 & Optimizers               & Datasets                     \\ 
\hline
ResNet9, VGG9                          & \makecell{SGD, SGDW, Adam, \\ AdamW, Signum, SignumW}    &   MNIST, FashionMNIST, CIFAR10     
\end{tabular}
\label{tab:experiment_settings_overview}
\end{table*}

\subsection{Details on Choice of Hyperparameters}

Every model was trained over 200 epochs with a batch size of 128. The learning rate $\lambda$ was chosen to be in $\lambda \in \{0.001, 0.01, 0.0679\}$ for SGD and SGDW (the last learning rate was also reported in the original work by \citet{papyan2020prevalence}) and $\lambda \in \{0.001, 0.005, 0.01\}$ for Adam, AdamW, Signum, and SignumW because most trainings diverged with larger learning rates during initial experimental training runs. The learning rate was decayed by a factor of 10 after one third and two third of training as has been done in original work by \cite{papyan2020prevalence}.
Momentum $\mu$ (or $\beta_1$ for Adam, AdamW, Signum, and SignumW) was chosen to be in the range $\mu \in \{0, 0.5, 0.7, 0.9, 0.95, 0.98\}$ for all optimizers and weight decay WD was chosen to be in the range WD $\in \{0, 5e^{-5}, 5e^{-4},5e^{-3},0.05, 0.5\}$ for SGD, SGDW, Adam, and Signum and WD $\in \{0, 5e^{-4},0.05, 0.5, 5, 10\}$ for SignumW and AdamW. The main motivation for using AdamW and Signum W with much larger weight decay values was based on the hypothesis that the effect of weight decay is reduced due to decoupling. The $\beta_2$ parameter in Adam and AdamW was left to its default value of 0.999.

\subsection{Details on Computational Resources}

All experiments, including preliminary experiments as well as the final 3.888 experiments were run on 5 NVIDIA RTX4090 GPUs with 24 GB RAM. 
Since the models and the batch size was comparably small, actually only 3 GB GPU memory per training was required.
Each training took between 8 and 16 minutes, leading to a total of 500-1000 GPU hours of training.

\begin{table}
\centering
\caption{Hyperparameters for each optimizer to achieve the smallest NC3 metric shown in \cref{tab:NC_metrics_minNC3_table}.}
\begin{tabular}{lrrr}
\toprule
Optimizer  & Learning rate                 & Momentum/$\beta_1$                 & Weight decay \\
\midrule
SGD & 0.01 & 0.9 & 0.05 \\
SGDW & 0.0679 & 0.5 & 0.05 \\
Adam & 0.005 & 0.98 & 0.05 \\
AdamW & 0.005 & 0.95 & 5 \\
Signum & 0.001 & 0.9 & 0.05 \\
SignumW & 0.001 & 0.98 & 10 \\
\bottomrule
\end{tabular}
\label{tab:minNC3_hyperparameters}
\end{table}

\subsection{Details on Regression Fit between NC3 and NC0}\label{subsection:experiment:regression_fit}

In this subsection we provide additional details regarding the regression fit between NC3 and NC0. For the sake of completeness, we show the regression fit in \cref{fig:NC0_vs_NC3-2} again below. In addition, we have also computed a regression fit across all training runs, which converged, and all learning rates, shown in \cref{fig:NC0_vs_NC3_all}.
A summary of the regression fit can be found in \cref{table:regression_fit}, showing that more than 70\% of the variation in NC3 can be explained by NC0. 

\begin{table}
\caption{Summary of regression fit between NC3 and NC0}
\begin{adjustbox}{width=\textwidth}
\begin{tabular}{lrrrrrrrr}
\toprule
Experiment & n & $\hat{\beta}$  & SE($\hat{\beta}$) & t-value & p-value & 95 \% CI &  $R^2$/ Adj $R^2$ & F-statistic \\
\midrule
LR=0.001 & 170 & 0.1903 & 0.008 & 24.262 & 0.000 & [0.175, 0.206] & 0.778 / 0.777 & 588.6 \\
LR=0.005 & 74 & 0.2017 & 0.012 & 16.252 & 0.000 & [0.177, 0.226] & 0.786 / 0.783 & 264.1 \\
LR=0.01 & 114 & 0.1439 & 0.007 & 19.892 & 0.000 & [0.13, 0.158] & 0.779 / 0.777 & 395.7 \\
LR=0.0679 & 41 & 0.1771 & 0.012 & 14.367 & 0.000 & [0.152, 0.202] & 0.841 / 0.837 & 206.4 \\
all & 399 & 0.1582 & 0.005 & 32.760 & 0.000 & [0.149, 0.168] & 0.730 / 0.729 & 1073 \\
\bottomrule
\end{tabular}
\end{adjustbox}
\label{table:regression_fit}
\end{table}

\begin{figure}[ht]
    \centering
    \includegraphics[width=0.98\linewidth]{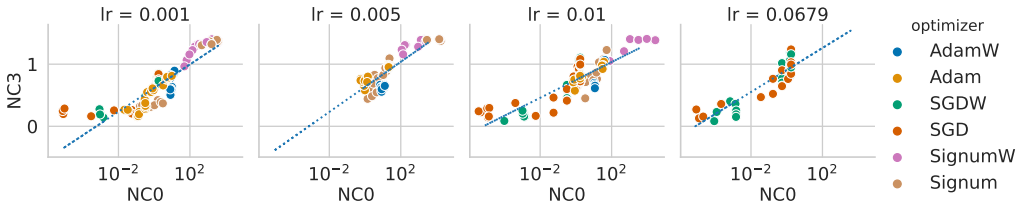}

    \caption{\cref{fig:NC0_vs_NC3} shown again for ease of reading. NC0 weakly correlates with NC3 across different optimizers and learning rates (here shown for ResNet9 trained on FashionMNIST). 
    }
    \label{fig:NC0_vs_NC3-2}
\end{figure}

\begin{figure}[ht]
    \centering
    \includegraphics[width=0.6\linewidth]{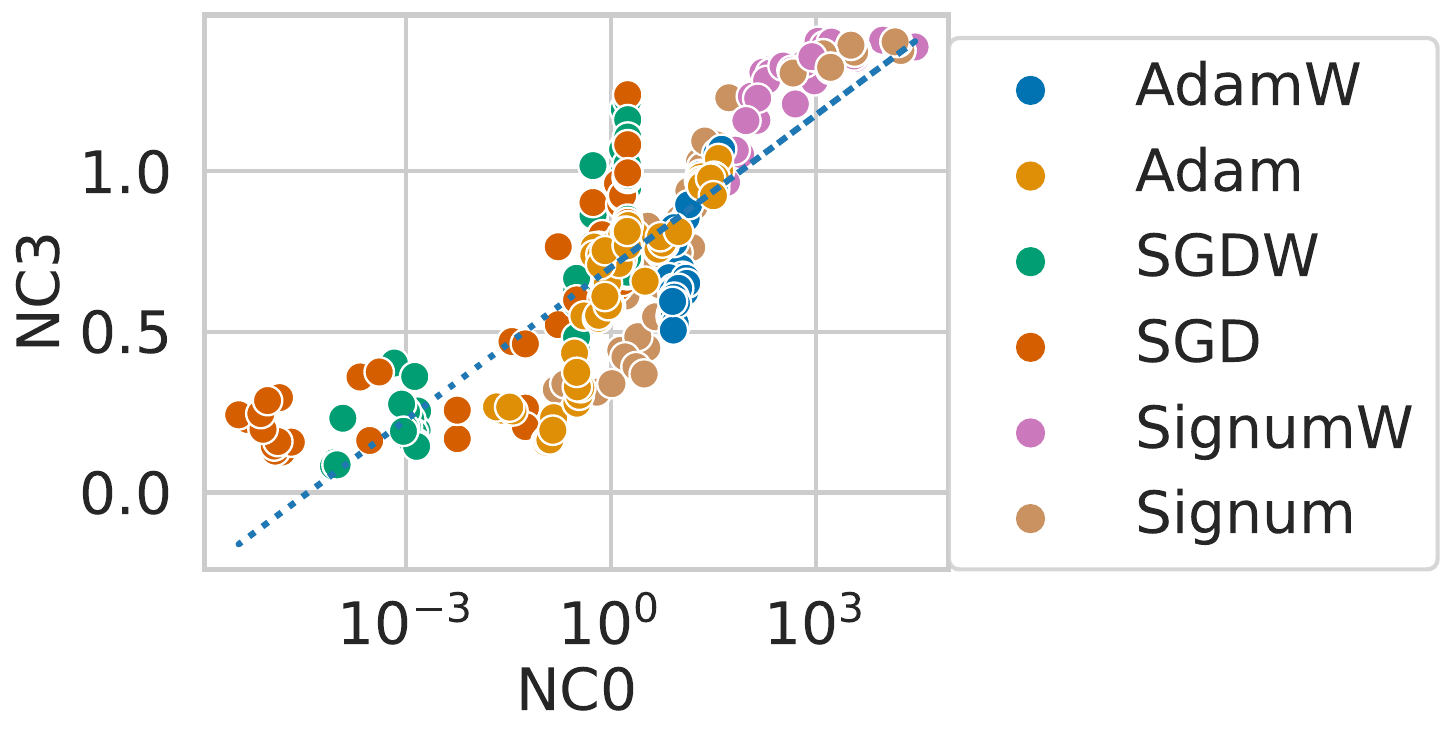}

    \caption{NC0 correlates with NC3 even when considered across all learning rates together (here shown for ResNet9 trained on FashionMNIST).
    }
    \label{fig:NC0_vs_NC3_all}
\end{figure}

\begin{figure}[ht]
    \centering
    \includegraphics[width=0.98\linewidth]{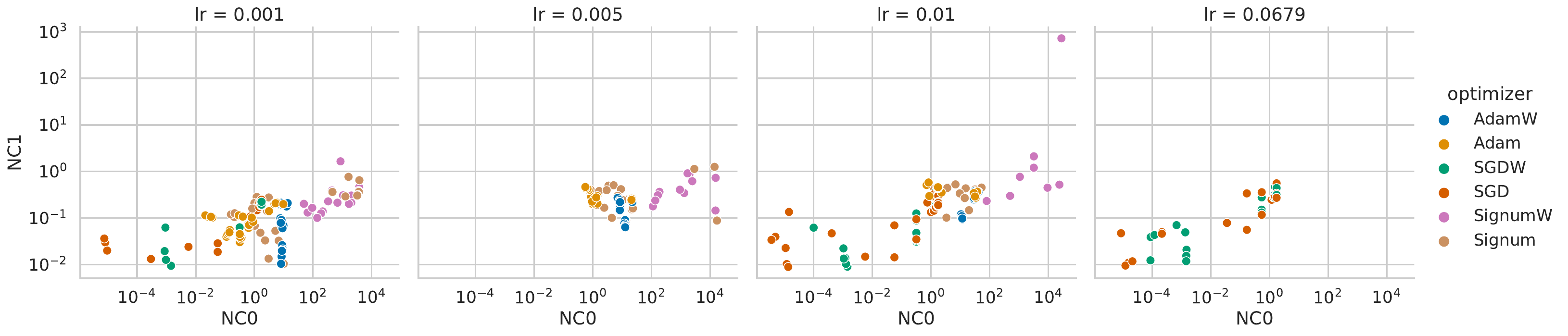}

    \caption{NC0 vs. NC1 across different optimizers and learning rates (here shown for ResNet9 trained on FashionMNIST). 
    }
    \label{fig:NC0_vs_NC1}
\end{figure}

\begin{figure}[ht]
    \centering
    \includegraphics[width=0.98\linewidth]{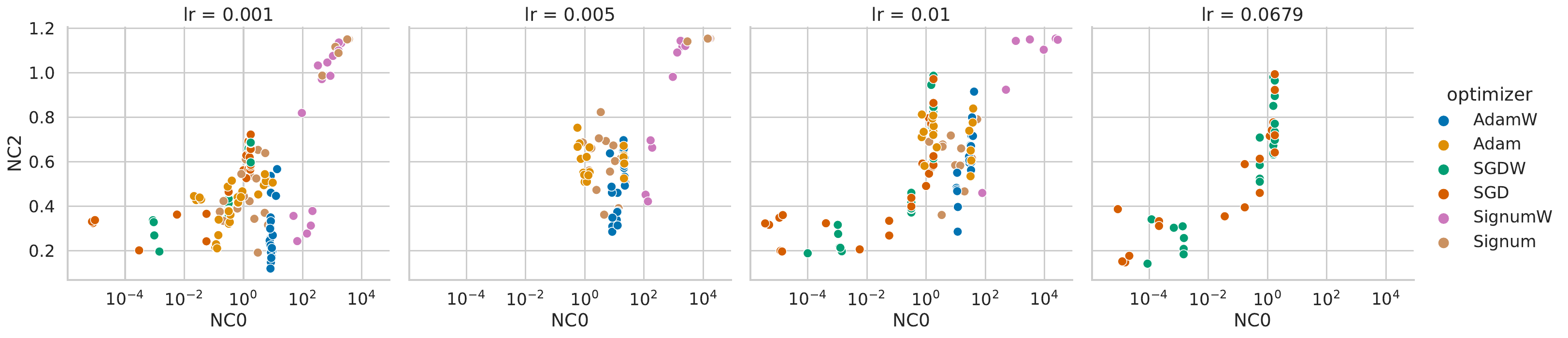}

    \caption{NC0 vs. NC2 across different optimizers and learning rates (here shown for ResNet9 trained on FashionMNIST). 
    }
    \label{fig:NC0_vs_NC2}
\end{figure}

\subsection{Additional Experimental Results}

\subsubsection{Ablation Study on Training Epochs}\label{subsubsection:ablation study}

As Neural collapse occurs at the terminal phase of training, it is natural to control for the effect that the number of training epochs has on the final NC metrics. After all, it is possible that the emergence of NC occurs at different speeds for different optimizers. 

For this reason, we conducted two ablation studies, in which we prolong the training in two settings: We train a ResNet9 in FashionMNIST, which corresponds to the setting which is shown in \cref{fig:Adam_vs_AdamW}, for 2000 epochs with LR=0.0005 and momentum=0.9 for both optimizers. We note that in this setting, AdamW reaches 100\% training accuracy already after around 700 epochs for all training runs with WD $\le 0.05$. The results can be found in \cref{fig:NC_metrics_Adam_vs_AdamW_Fashion}
While this leads to some improvement of the final NC1 and NC2 metric for AdamW for some values of weight decay, this has barely an effect on NC0 and NC3. 

Furthermore we extend training to up to 2000 epochs for selected runs from \cref{fig:WD_vs_mom_heatmap}. Concretely, these runs trained with a LR of 0.001 and the following combination of WD and momentum (mom, WD) $\in \{(0,0), (0.97, 5e^{-5}), (0, 5e^{-4}), (0.9, 5e^{-4}), (0.9, 0), (0.95, 0.0025)\}$, which corresponds to different parts in the heatmap. The results can be found in \cref{fig:NC_metrics_SGDMW_longer_training_ResNet9_MNIST}. While one can observe a general decrease of the NC metrics in all cases, the overall trend for increasing weight decay remains unchanged. 
Both figures indicate that training the models considered in this work for 200 epochs is sufficient to draw the conclusions that we make about the necessity of coupled WD for the emergence of full NC.

\begin{figure}[ht]
    \centering
    \includegraphics[width=0.98\linewidth]{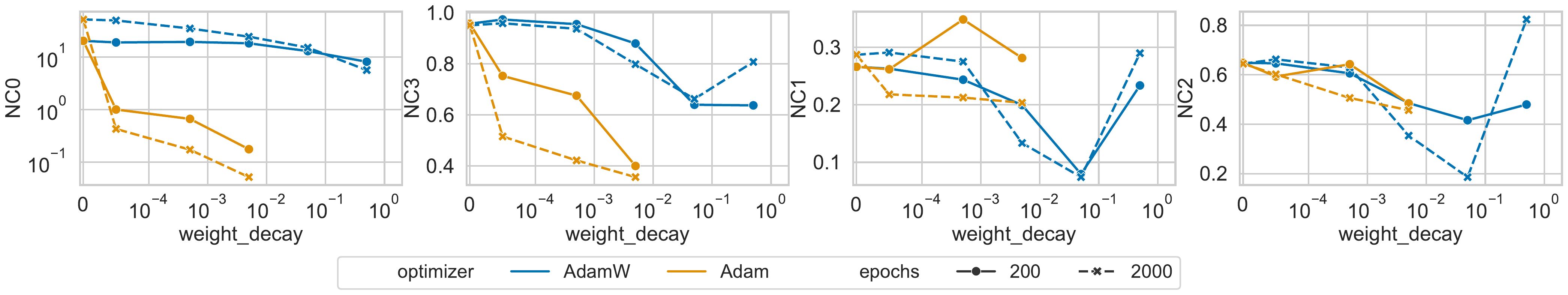}

    \caption{ResNet9 trained on FashionMNIST with Adam and AdamW for more epochs.
    }
    \label{fig:NC_metrics_Adam_vs_AdamW_Fashion}
\end{figure}

\begin{figure}[ht]
    \centering
    \includegraphics[width=0.98\linewidth]{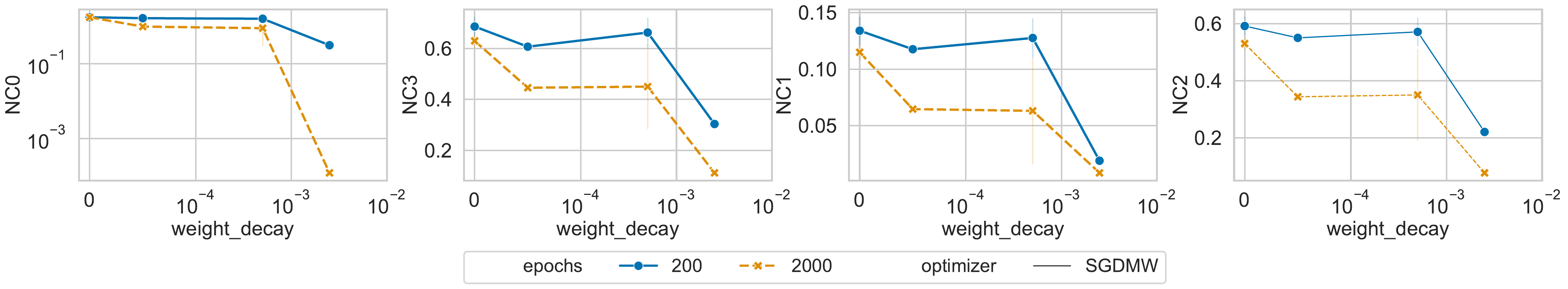}

    \caption{Selected runs from \cref{fig:WD_vs_mom_heatmap} trained for more number of epochs.
    }
    \label{fig:NC_metrics_SGDMW_longer_training_ResNet9_MNIST}
\end{figure}

\subsubsection{Unconstrained Feature Model} 

We also validated our results on the unconstrained feature model (UFM) (see \cref{subsection:UFM} for reference) with width $d=512$, $K=10$ classes and $N=10.000$ samples. The UFM was trained with Adam, AdamW and SGDMW with momentum=0.9 and varying lr$\in \{0.1, 0.3, 0.5, 1.0\}$ and weight decay ranging from $0.0$ to $0.05$. We then filtered the results, by only including models which achieved 100\% training accuracy. The results in can be found in \cref{fig:UFM_optimizers_vs_WD}. The plots show that the NC metrics, in particular NC0 and NC3 remain at least one magnitude of order larger than the same metrics for Adam and SGDMW, highlighting that AdamW converges to a different solution than Adam, which is not NC.

\begin{figure}[ht]
    \centering
    \includegraphics[width=1.0\linewidth]{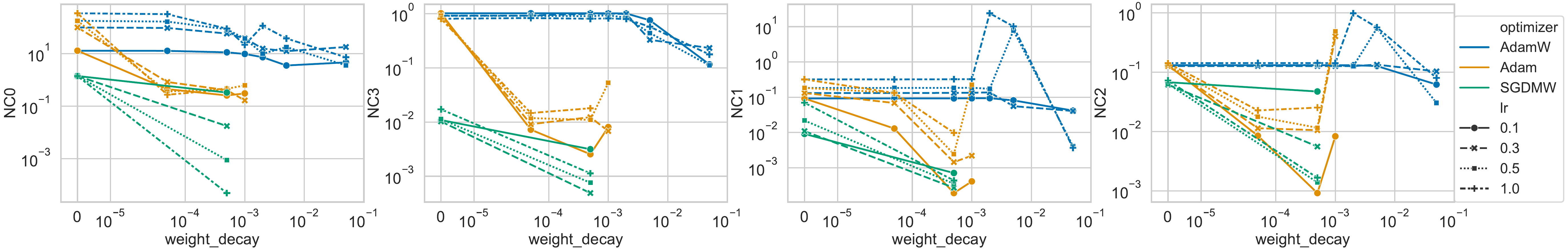}

    \caption{NC0 (left), NC3 (center left), NC1 (center right), and NC2 (right) for increasing weight decay. 
    }
    \label{fig:UFM_optimizers_vs_WD}
\end{figure}

\subsubsection{{Training dynamics of minimal NC3 runs}}\label{subsubsec:minimal_NC3}
\textcolor{darkgreen}{
In this section we provide the dynamics of the NC metrics as well as the singular values from the training runs which reached the smallest final NC3 metric as reported in \cref{tab:minNC3_hyperparameters}. The purpose is to disentangle the effect of using first-order optimizers (such as SGD and SGDW) vs. second-order like optimizers (such as Adam and AdamW) from the effect of applying coupled vs. decoupled weight decay. 
The main question we try to answer here is: Is the difference between Adam and SGD with respect to the emergence of NC larger than the difference between AdamW and Adam? 
\cref{fig:Fig5_minimalNC_Fashion_ResNet9_trainloss_acc} shows that all runs reach a perfect train accuracy well before the end of training, such that they have reached the terminal-phase of training (TPT) at epoch 200. 
Looking at the NC1-NC3 metrics in \cref{fig:Fig5_minimalNC_Fashion_ResNet9_NC0_NC3} and \cref{fig:Fig5_minimalNC_Fashion_ResNet9_NC1_NC2}, one can see that the NC metrics for SGD and SGDW are close to each other. It is harder to judge whether AdamW or Adam are closer to NC, as NC3 is considerably larger for AdamW, while NC1 is slightly larger for Adam, compared to the other optimizers. 
Nonetheless, the NC0 metric in \cref{fig:Fig5_minimalNC_Fashion_ResNet9_NC0_NC3} and the evolution of the singular values of $\mathbf{W}$ in \cref{fig:spectrum_statistics_fashion} (left) indicate that AdamW has considerably different training dynamics than Adam, as both NC0 as well as the smallest singular value increase instead of converging to zero for AdamW, but not for Adam. While the NC0 metric of Adam is still orders of magnitude larger than for SGD and SGDW and the smallest singular value of $\mathbf{W}$ converges to a small, but non-zero value, Adam shares similar trends as SGD and SGDW and as such converges to a solution which is arguably closer to NC3 than AdamW. 
Whether the solution found by Adam can already be classified as NC or not is an inherent problem of interpreting the NC metrics in practical settings, as we have also discussed in \cref{subsection:interpreting_NC_metrics}.
}

\begin{figure}
    \centering
    \includegraphics[width=0.9\linewidth]{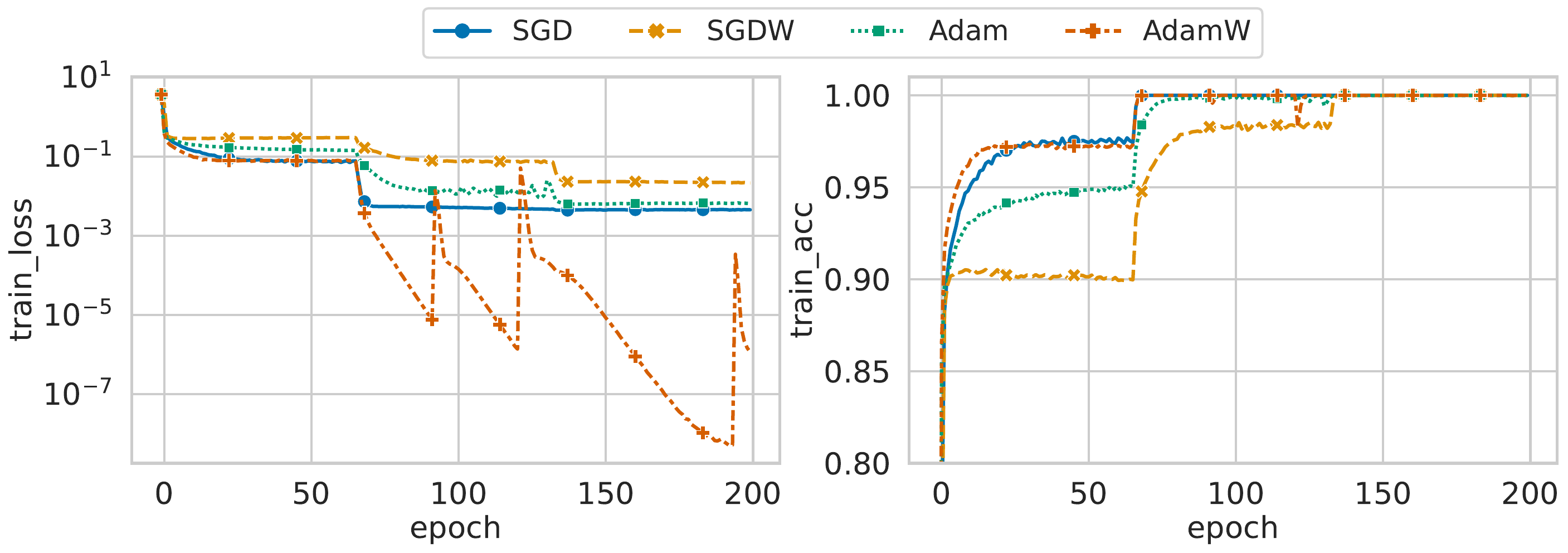}
    \caption{\textcolor{darkgreen}{Train loss (left) and train accuracy (right) for training runs with smallest final NC3 metric for different optimizers.}}
    \label{fig:Fig5_minimalNC_Fashion_ResNet9_trainloss_acc}
\end{figure}

\begin{figure}
    \centering
    \includegraphics[width=0.9\linewidth]{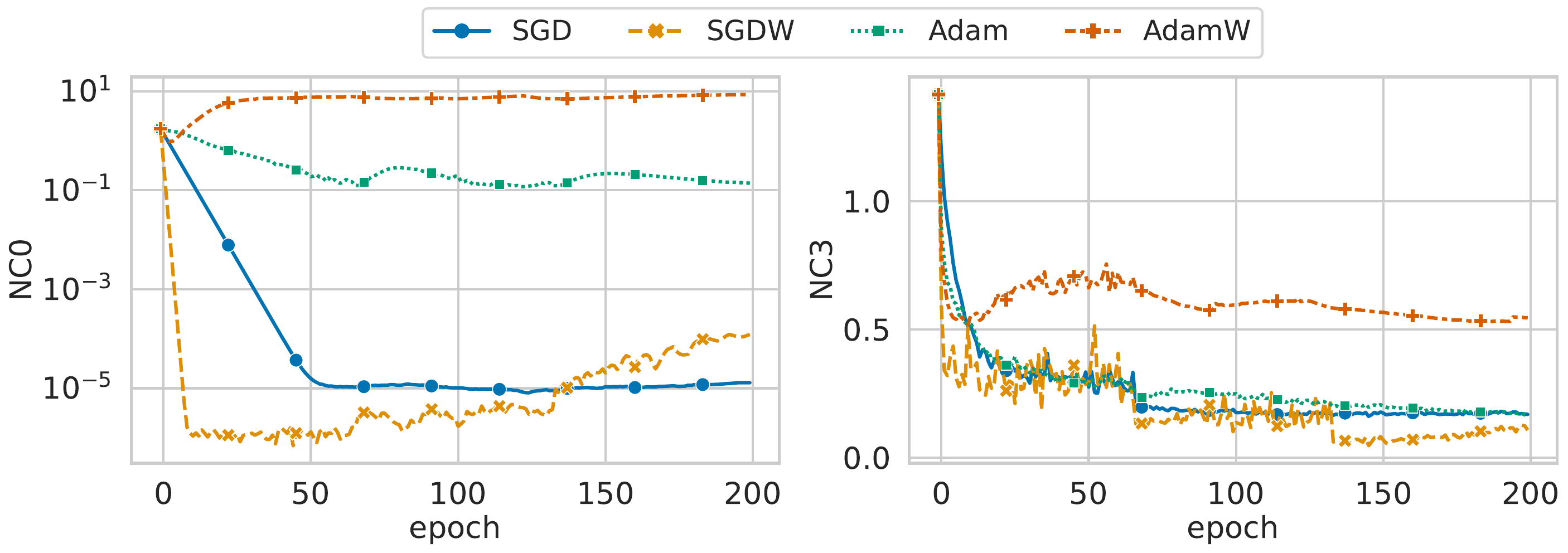}
    \caption{\textcolor{darkgreen}{NC0 (left) and NC3 (right) for training runs with smallest final NC3 metric for different optimizers.}}
    \label{fig:Fig5_minimalNC_Fashion_ResNet9_NC0_NC3}
\end{figure}

\begin{figure}
    \centering
    \includegraphics[width=0.47\linewidth]{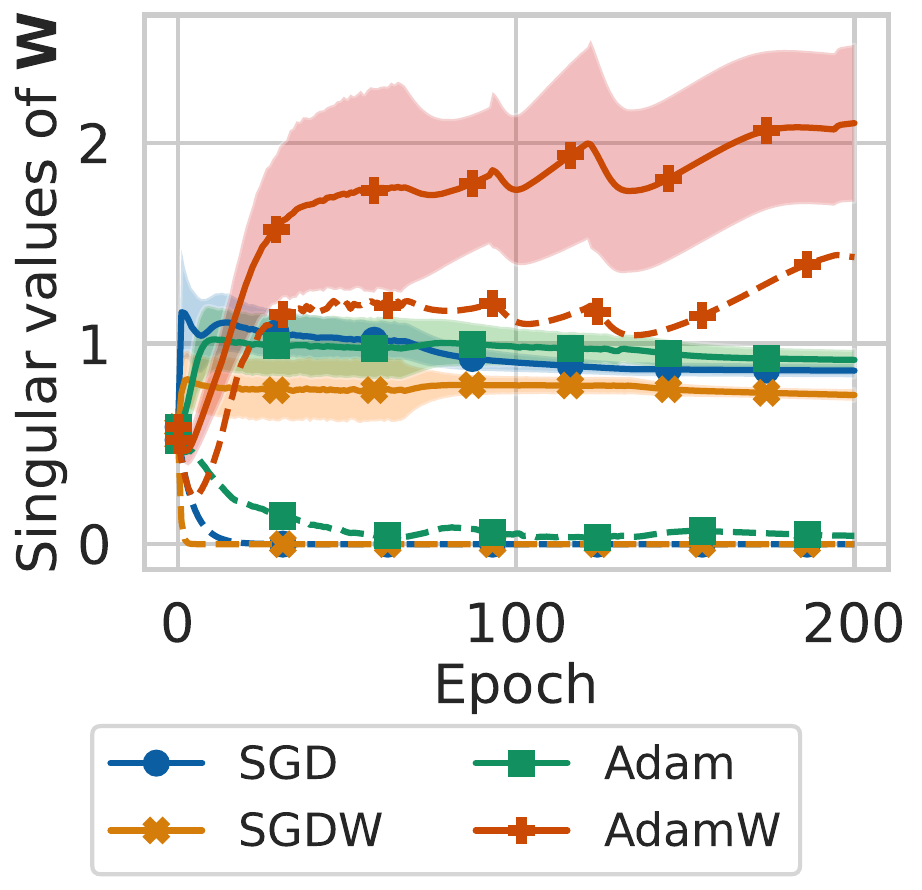}
    \includegraphics[width=0.47\linewidth]{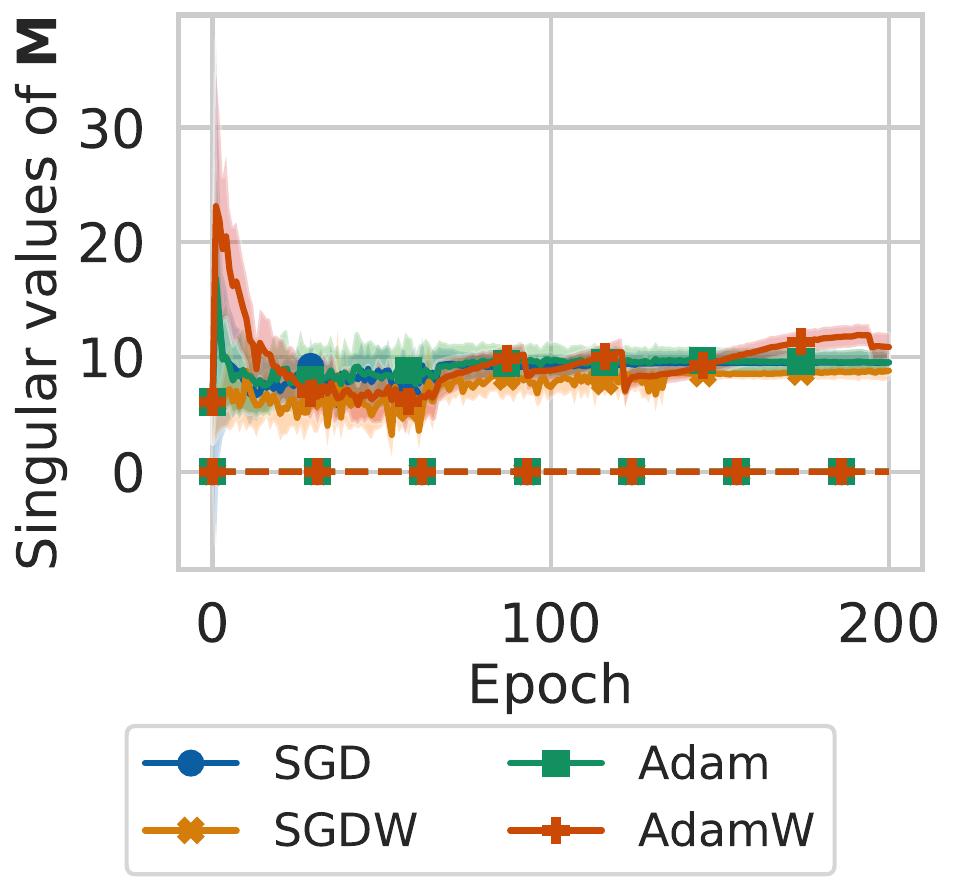}
    \caption{\textcolor{darkgreen}{Singular values of last-layer weights $\mathbf{W}$ (left) and centered class means $\mathbf{M}$ (right) throughout training for runs corresponding to \cref{tab:minNC3_hyperparameters}. The dotted line corresponds to the smallest singular value and the full line corresponds to the average singular value, excluding the smallest singular value.}}
    \label{fig:spectrum_statistics_fashion}
\end{figure}

\begin{figure}
    \centering
    \includegraphics[width=0.9\linewidth]{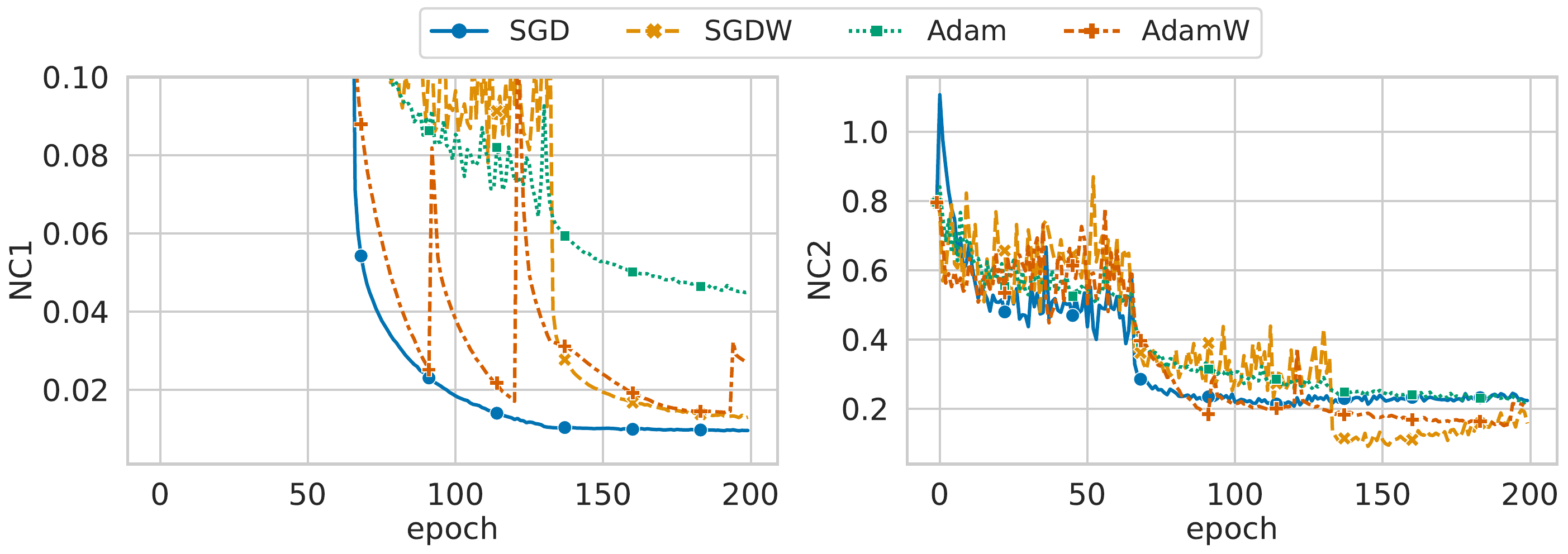}
    \caption{\textcolor{darkgreen}{NC1 (left) and NC2 (right) for training runs with smallest final NC3 metric for different optimizers.}}
    \label{fig:Fig5_minimalNC_Fashion_ResNet9_NC1_NC2}
\end{figure}

\subsubsection{{Ablation study on normalizing the NC0 metric}}

\textcolor{darkgreen}
{
    We evaluate whether measuring a normalized NC0 metric affects the conclusions that we draw in our work. 
    Concretely, we compute the normalization as 
    \begin{equation}
        \text{NC}0_{\text{normalized}} := \frac1p \|\mathbf{W}_L^\top \mathbf{1}\|_2/\|\mathbf{W}_L\|_F.
    \end{equation}
    We compute both NC0 as well as normalized NC0 for the setting of minimal NC3 that we studied in \cref{subsubsec:minimal_NC3}, which we show in \cref{fig:Fig5_minimalNC_Fashion_ResNet9_NC0_vs_NC0normalized}.
    While the absolute values differ slightly between NC0 and $\text{NC}0_{\text{normalized}}$, both the trends as well as the final values are almost the same. \\\\
    For zero weight decay, one would expect to see more difference between the dynamics of NC0 and normalized NC0, which we show in \cref{fig:zero_WD_Fashion_ResNet9_NC0_vs_NC0normalized}.
    While one can observe the monotontic effect of momentum on normalized NC0, but not on NC0, we point out that in this case normalized NC0 does not correlate with NC1-NC3 anymore. On the contrary, NC1-NC3, while still comparably large, are smaller with less momentum. \\
    As the dynamics of NC0 and normalized NC0 are almost the same for larger values of WD or normalized NC0 is not consistent with NC1-NC3 for zero WD, we are tentative to conclude that the normalization will not affect the conclusions that we draw in this work.
}

\begin{figure}
    \centering
    \includegraphics[width=0.9\linewidth]{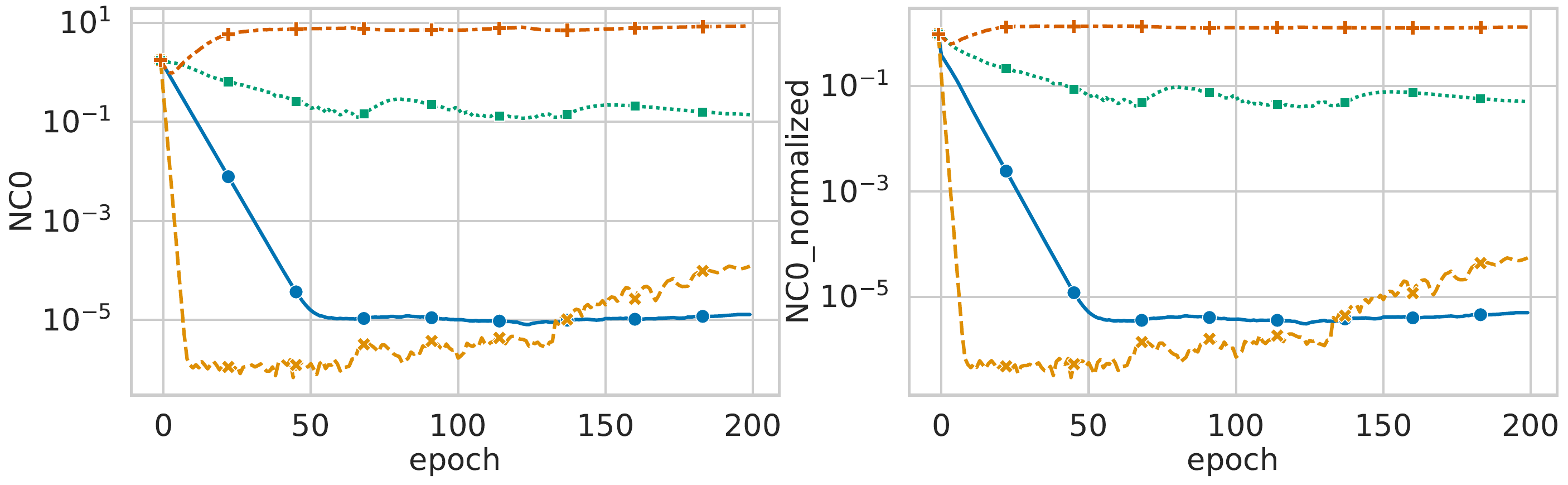}
    \caption{\textcolor{darkgreen}{NC0 (left) and normalized NC0 (right) for training runs with smallest final NC3 metric for different optimizers.}}
    \label{fig:Fig5_minimalNC_Fashion_ResNet9_NC0_vs_NC0normalized}
\end{figure}

\begin{figure}
    \centering
    \includegraphics[width=0.9\linewidth]{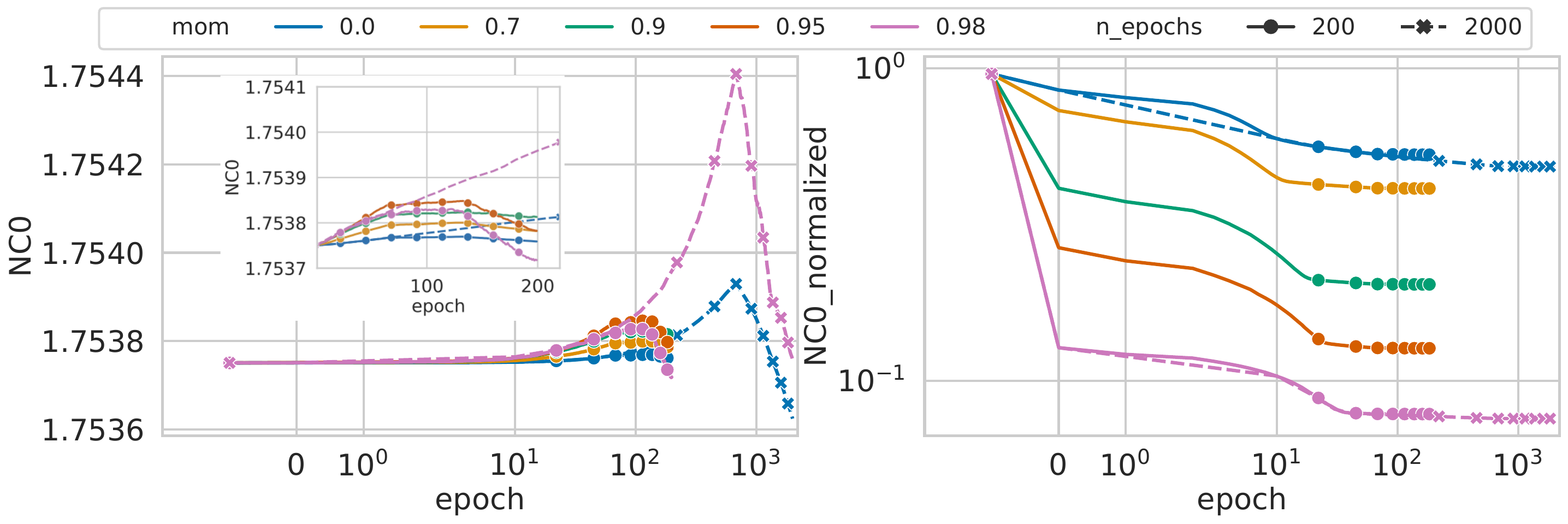}
    \caption{\textcolor{darkgreen}{NC0 (left) and normalized NC0 (right) for training runs with zero weight decay from the ablation study in \cref{subsubsec:ablation_zeroWD}. Note that the x-axis is in logarithmic scale and that the point at epoch -1 corresponds to the model at initialization.}}
    \label{fig:zero_WD_Fashion_ResNet9_NC0_vs_NC0normalized}
\end{figure}

\subsubsection{Ablation study on effect of momentum on NC emergence}\label{subsubsec:ablation_momentum}
\textcolor{darkgreen}{
We conduct another ablation study to further evaluate the effect of momentum on the NC emergence. The main question that we try to answer with this ablation is whether the effect of momentum on smaller NC metrics can be simply traced back to the fact that momentum accelerates convergence or if it affects the emergence of NC beyond this.\\
Concretely, we track the evolution of the NC metrics together with the train loss and accuracy for the same setting as in \cref{fig:WD_vs_mom_heatmap} over time for a fixed value fo weight decay=0.005 and varying values of momentum.
This is because the final train loss value varies for different values of WD due to its regularizing effect. The results can be seen in \cref{fig:Fig4_fixed_WD_varying_momentum_Fashion_ResNet9_SGDMW}.
There are two things to be observed: While the accelerating effect of momentum is mainly visible in the early phase of training (up to 50-100 epochs), modulo some loss spikes for high momentum, the final train loss is not smallest for the largest value of momentum. While this is not surprising per se, as too large momentum can lead to a overshooting of the training trajectory, the NC metrics show a clear monotonic behavior with respect to the momentum. 
Furthermore, while the training runs with momentum=0.7 and 0.9 reach almost the exact same final train loss, the disparity in NC metrics indicates that they converged to solutions with very different geometric structure. This can be seen more clearly in \cref{fig:Fig4_fixed_WD_varying_momentum_Fashion_ResNet9_SGDMW_mom=7e-1vs9e-1}. 
Both observations suggest that momentum affects the emergence of NC beyond simply accelerating the speed of convergence. To the best of our knowledge, connecting the magnitude of momentum to NC is novel and not been discussed in prior work.
}

\begin{figure}
    \centering
    \includegraphics[width=0.95\linewidth]{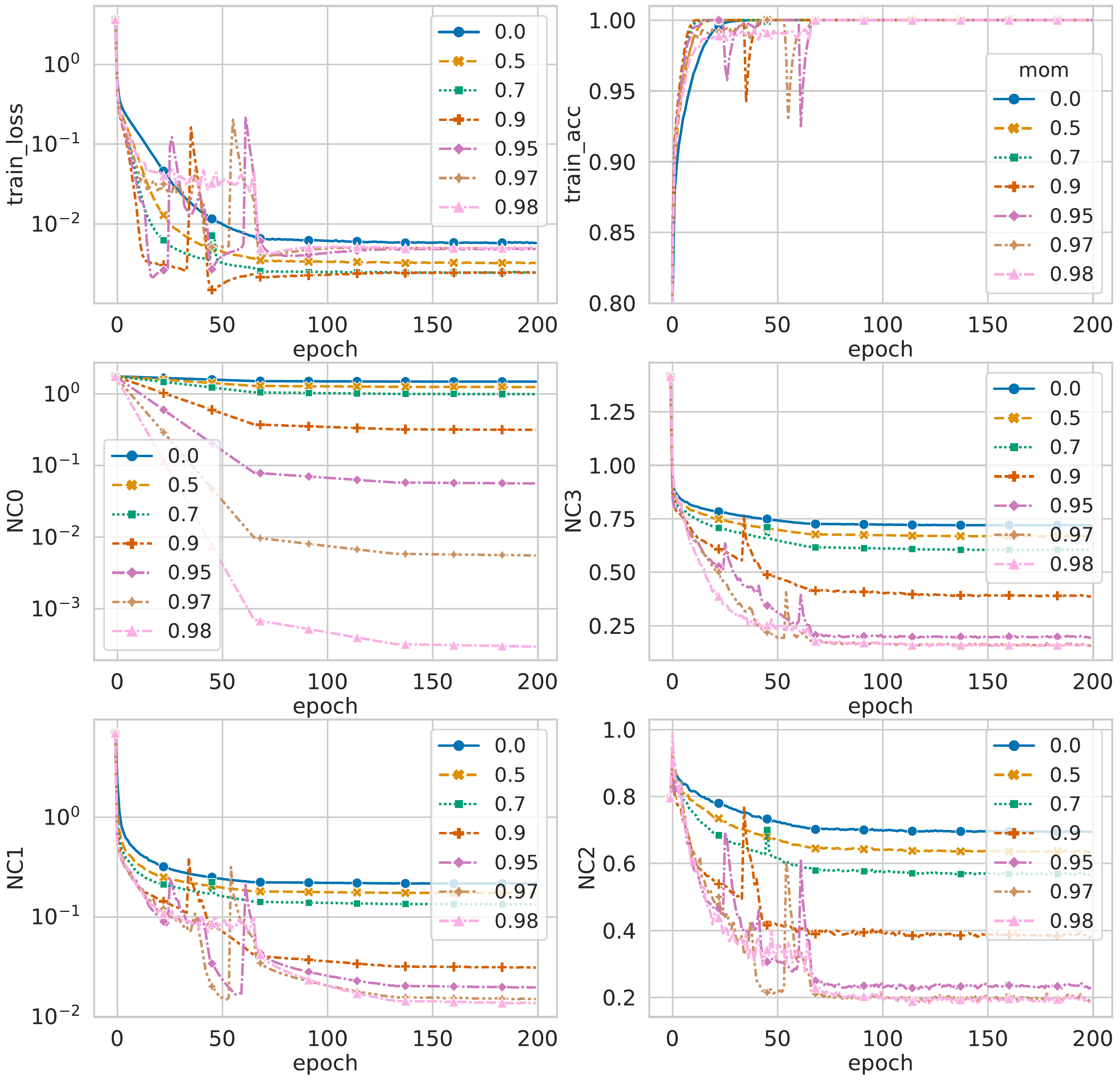}
    \caption{\textcolor{darkgreen}{Train loss, train accuracy and NC metrics for fixed WD=0.005 and different values of momentum on a ResNet9 trained with SGD with otherwise same hyperparameters as in \cref{fig:WD_vs_mom_heatmap}.}}
    \label{fig:Fig4_fixed_WD_varying_momentum_Fashion_ResNet9_SGDMW}
\end{figure}

\begin{figure}
    \centering
    \includegraphics[width=0.95\linewidth]{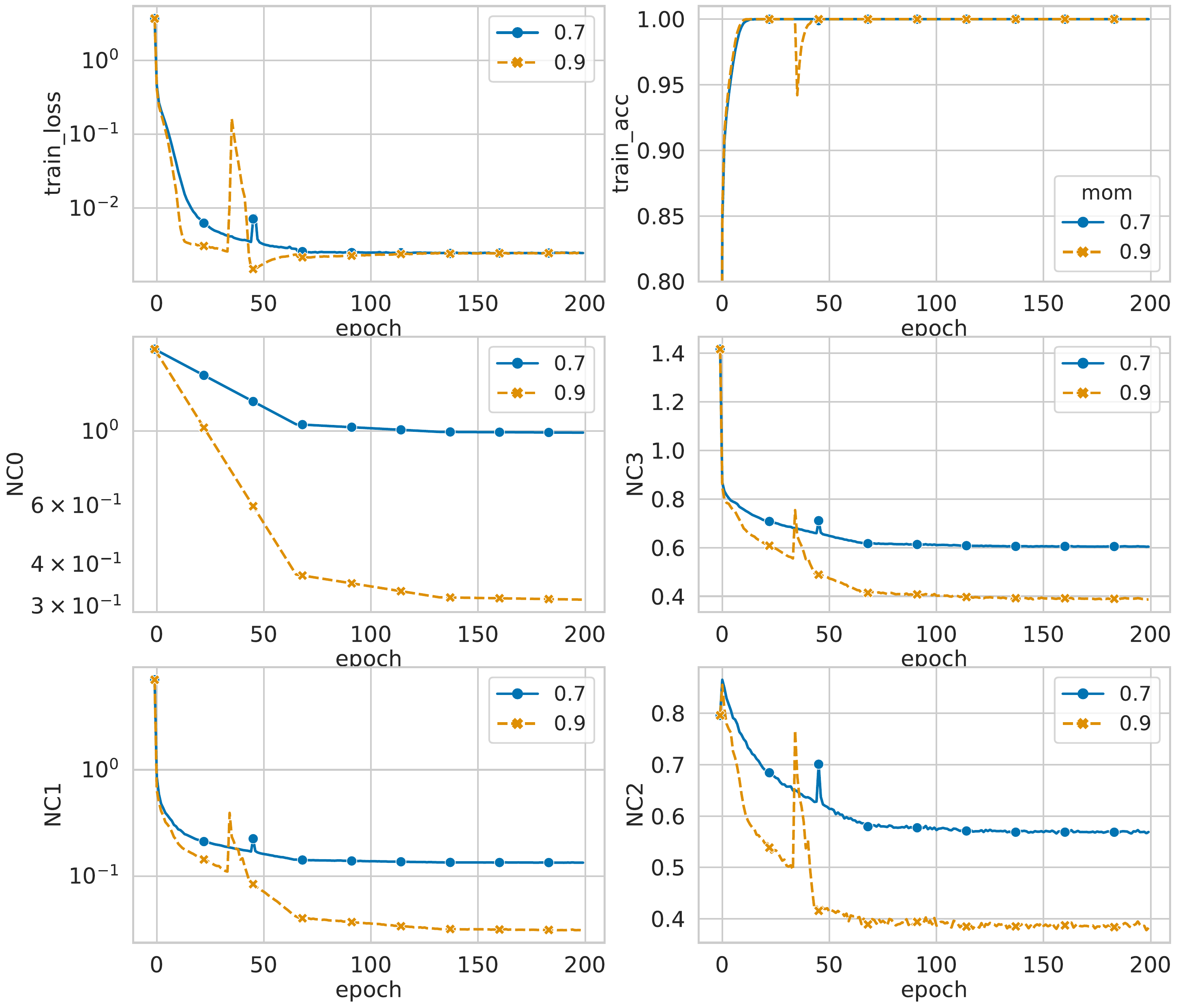}
    \caption{\textcolor{darkgreen}{Train loss, train accuracy and NC metrics for fixed WD=0.005 and mom=0.7 and 0.9. Although both runs converge to almost exactly the same train loss, the final NC metrics differ considerably.}}
    \label{fig:Fig4_fixed_WD_varying_momentum_Fashion_ResNet9_SGDMW_mom=7e-1vs9e-1}
\end{figure}

\subsubsection{Ablation study on NC emergence under zero weight decay}\label{subsubsec:ablation_zeroWD}

\textcolor{darkgreen}{To investigate whether WD is necessary or not for the emergence of NC, we track the NC metrics while training a ResNet9 on FashionMNIST (Note that this is the same problem setting as in  \cref{subsubsec:ablation_momentum}.) using SGD with zero WD and varying values of momentum with an initial LR=0.01 for 200 epochs.
Additionally, we train the model also with zero momentum and high momentum=0.98 for 2000 epochs, with LR decay after 1/3 and 2/3 of training. Importantly, all training runs reach perfect train accuracy after ~40 epochs. 
The training dynamics can be found in \cref{fig:zero_WD_varying_momentum_Fashion_ResNet9_SGDMW}. 
We draw two conclusions from this ablation study:
\begin{enumerate}
    \item The final NC metrics NC0-NC3 after 2000 epochs are slightly smaller than after 200 epochs, consistent with our ablation study in D.4.1. that longer training reduces the NC metrics. This decrease is however fairly small.
    \item The final NC metrics (both for 200 epochs and 2000 epochs of training) remain considerably higher than what is achieved by the "best" run of SGD in terms of NC metrics with 200 epochs of training for all NC metrics, even with 10 times longer training. See \cref{fig:Fig5_minimalNC_Fashion_ResNet9_NC0_NC3} and \cref{fig:Fig5_minimalNC_Fashion_ResNet9_NC1_NC2} for a comparison. 
\end{enumerate}
The final NC1-NC3 metrics achieved with WD after 200 epochs and without WD after 2000 epochs can be found in \cref{tab:final_NC_WDvsNoWD}.
While the experiments cannot fully exclude the possibility that NC can be achieved eventually in the asymptotic limit, we argue that WD is essential to observe the emergence of NC \emph{in practical finite-length training settings}.
\begin{table}[]
    \centering
    \caption{\textcolor{darkgreen}{Smallest NC metrics achieved with and without weight decay for training a ResNet9 on FashionMNIST.}}
    \begin{tabular}{c|c|c|c|}
         SGD with ... & NC1 & NC2 & NC3\\
          \hline
          no WD (2000 epochs) & $\approx 0.2$ & $\approx 0.55$ & $\approx 0.7$\\
          WD (200 epochs) & $\approx 0.02$& $\approx0.2$ & $ \approx 0.13$\\
    \end{tabular}
    \label{tab:final_NC_WDvsNoWD}
\end{table}
}

\begin{figure}
    \centering
    \includegraphics[width=0.95\linewidth]{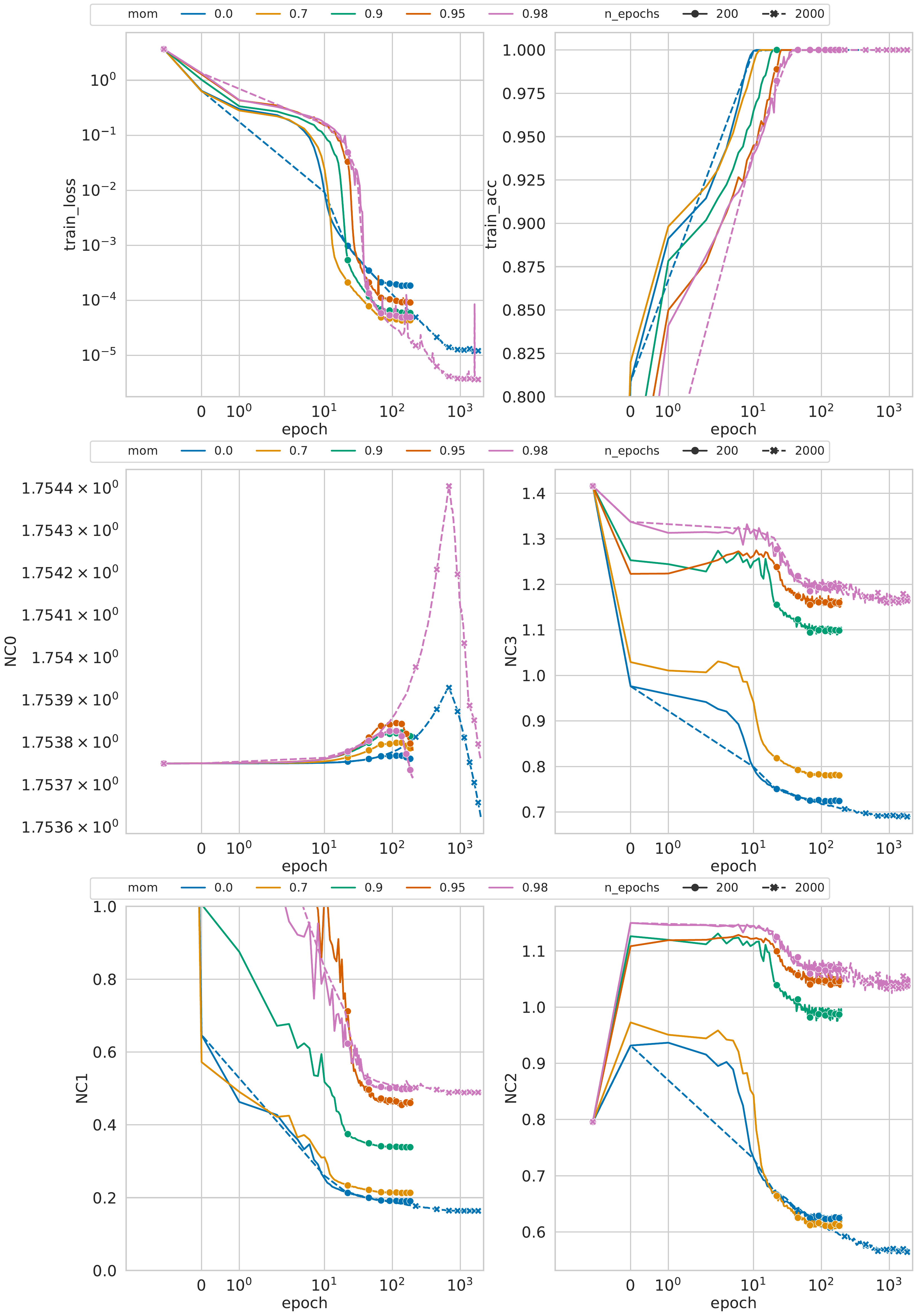}
    \caption{\textcolor{darkgreen}{Training loss and train accuracy (top row), NC0 and NC3 (middle row), and NC1 and NC2 (bottom row) for a ResNet9 trained on FashionMNIST with SGD without weight decay for varying values of momentum and number of epochs. Note that the x-axis is in log-scale to improve readability.}}
    \label{fig:zero_WD_varying_momentum_Fashion_ResNet9_SGDMW}
\end{figure}

\subsubsection{More detailed plots on coupled vs. decoupled weight decay}

\textcolor{darkgreen}{As we average across different values of momentum and learning rates in \cref{fig:Adam_vs_AdamW} and \cref{fig:Signum_vs_SignumW}, we provide more detailed plots here in \cref{fig:Adam_vs_AdamW_ResNet9_fashion_relplot_more_Detailed}, \cref{fig:Signum_vs_SignumW_ResNet9_fashion_relplot_more_Detailed}, and \cref{fig:SGD_vs_SGDW_ResNet9_fashion_relplot_more_Detailed}. 
It can be seen that for the adaptive optimizers and SGDW the variance for varying values of momentum is comparably small for each fixed learning rate, with the variance generally increasing with larger weight decay. 
For SGD the variance for NC0 is higher for large values of weight decay, consistent with what is shown in \cref{fig:Signum_vs_SignumW} (right) and what is shown in \cref{fig:WD_vs_mom_heatmap}.
}

\begin{figure}
    \centering
    \includegraphics[width=0.9\linewidth]{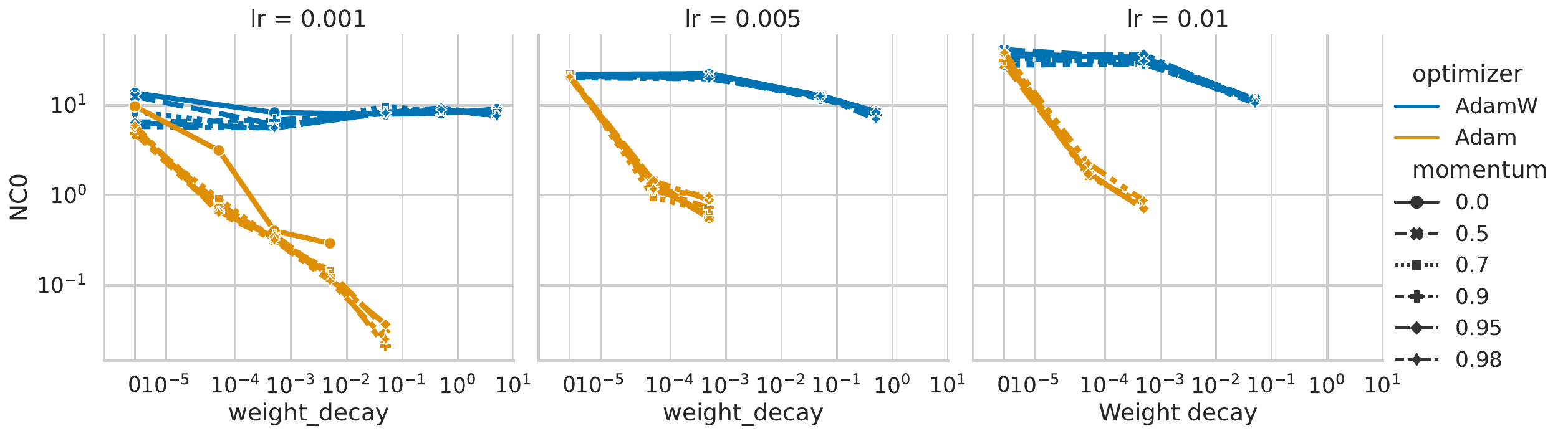}
    \includegraphics[width=0.9\linewidth]{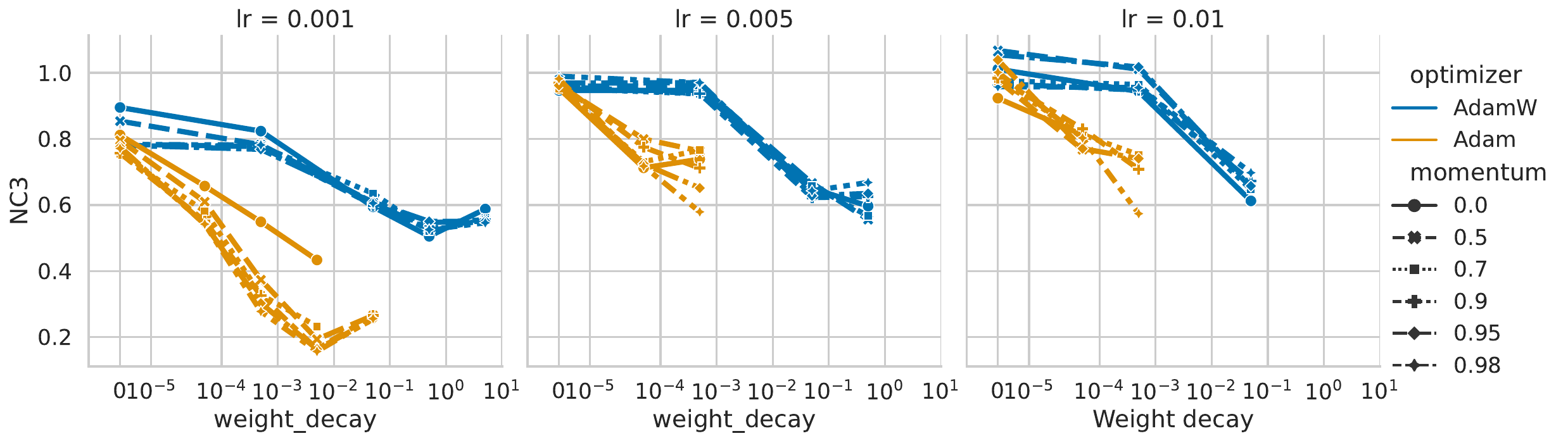}
    \caption{\textcolor{darkgreen}{NC0 metric (top) and NC3 metric (bottom for different values of weight decay, momentum and LR for Adam vs. AdamW.}}
    \label{fig:Adam_vs_AdamW_ResNet9_fashion_relplot_more_Detailed}
\end{figure}

\begin{figure}
    \centering
    \includegraphics[width=0.9\linewidth]{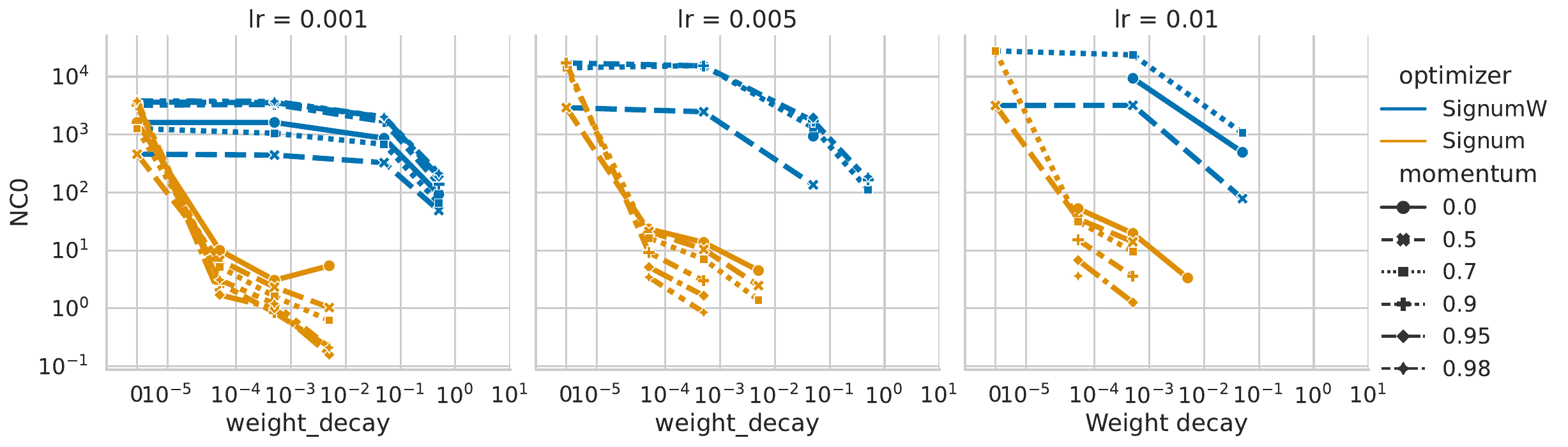}
    \includegraphics[width=0.9\linewidth]{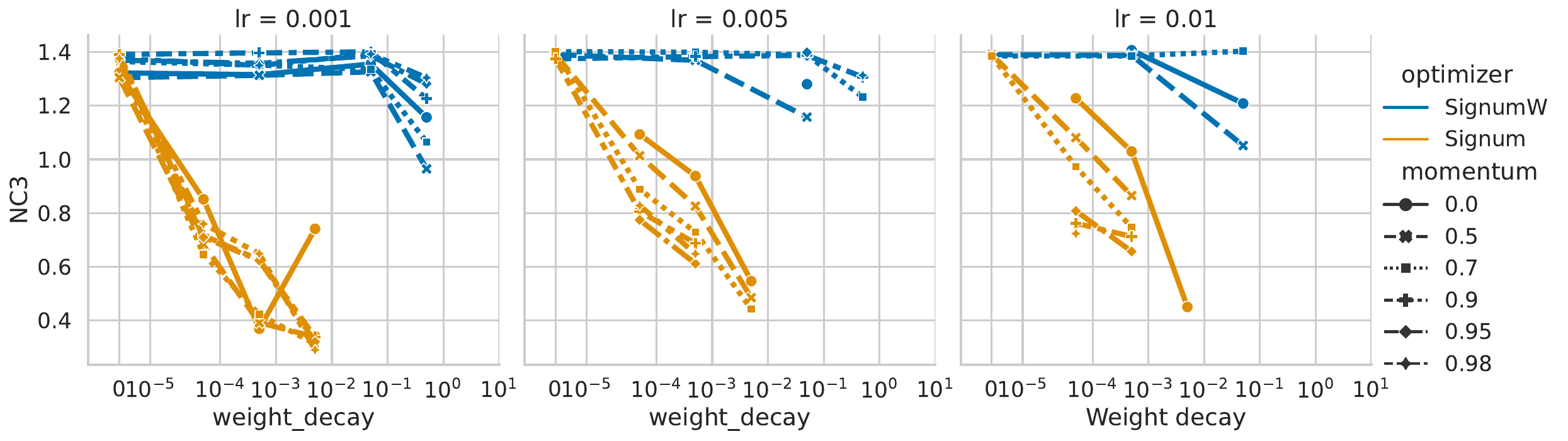}
    \caption{\textcolor{darkgreen}{NC0 metric (top) and NC3 metric (bottom for different values of weight decay, momentum and LR for Signum vs. SignumW.}}
    \label{fig:Signum_vs_SignumW_ResNet9_fashion_relplot_more_Detailed}
\end{figure}

\begin{figure}
    \centering
    \includegraphics[width=0.9\linewidth]{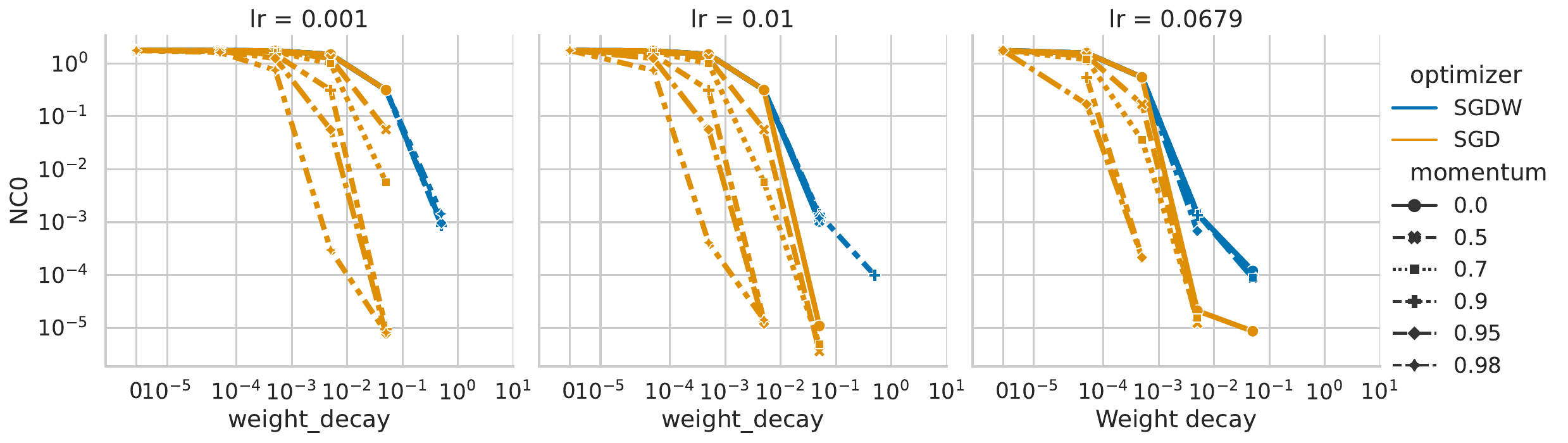}
    \includegraphics[width=0.9\linewidth]{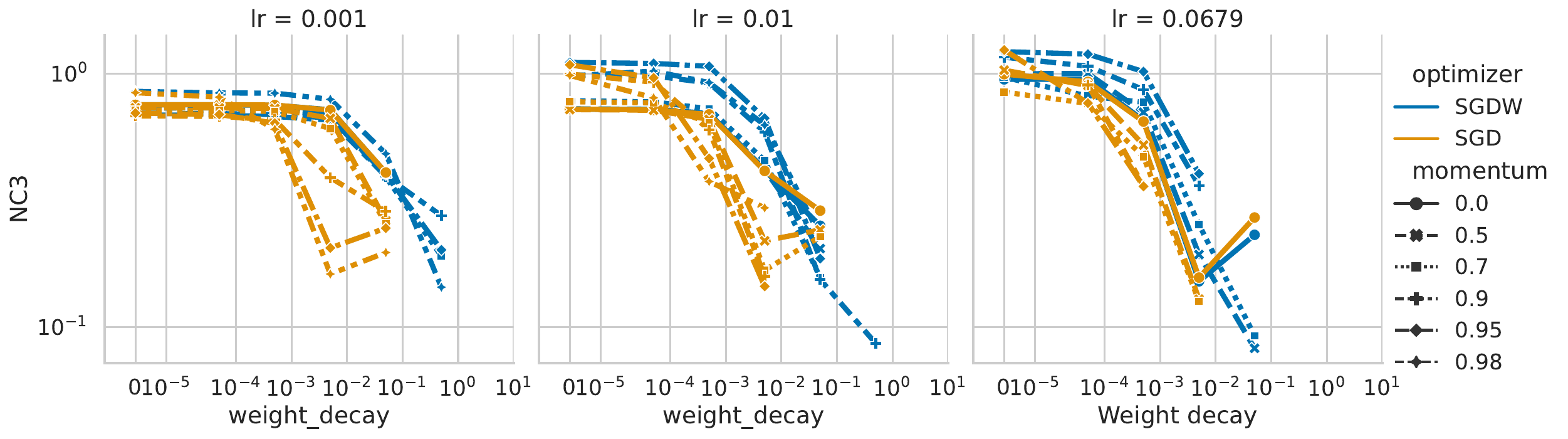}
    \caption{\textcolor{darkgreen}{NC0 metric (top) and NC3 metric (bottom for different values of weight decay, momentum and LR for SGD vs. SGDW.}}
    \label{fig:SGD_vs_SGDW_ResNet9_fashion_relplot_more_Detailed}
\end{figure}

\subsubsection{Missing plot: Singular value of $\mathbf{W}$ and $\mathbf{M}$ with SignumW}

The missing plot of the evolution of the singular values of the last-layer weights $\mathbf{W}$ and feature matrix $\mathbf{M}$ can be found in \cref{fig:NC_dynamics_ResNet_fashion_Signum}.

\begin{figure}[h!]
    \centering
    \includegraphics[width=0.7\linewidth]{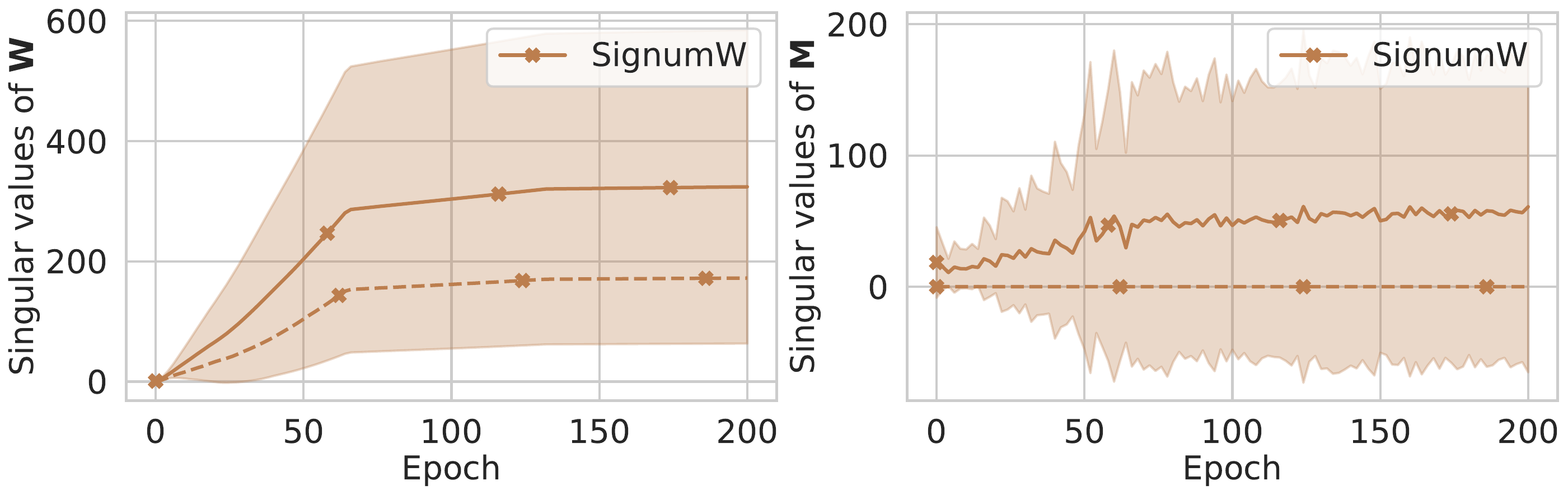}
    \caption{Singular values of last-layer weights $\mathbf{W}$ (left) and feature matrix $\mathbf{M}$ (right) throughout training for SignumW on ResNet9 trained on FashionMNIST. Dotted line corresponds do smallest singular value and full line corresponds to the average singular value excluding the smallest singular value.}
    \label{fig:NC_dynamics_ResNet_fashion_Signum}
\end{figure}

\subsubsection{Coupled vs. decoupled decay on other datasets}
The comparison between coupled and decoupled decay on SGD, Adam, and Signum on other combinations of (architecture $\times$ dataset) can be found in the following pages below, which confirm our observations made earlier on the ResNet9 trained on FashionMNIST. While NC0 (visually) correlates well with NC3, it correlates considerably less with NC1 and NC2, although a general trend is still visible across all experiments.

\paragraph{ResNet50 on ImageNet1K}

We also conducted experiments on a ResNet50 trained on ImageNet1K \cite{deng2009imagenet}. The model was trained with Adam and AdamW for 90 epochs. We left out other optimizers due to limited resources. For both optimizers the learning rate was chosen as 0.0003 with a step-wise decay after 1/3 and 2/3 of training, momentum was chosen from $\{0.0, 0.5, 0.9\}$ and weight decay was chosen from $\{0.0, 1e^{-5}, 1e^{-4}, 1e^{-3}\}$.
The resulting NC metrics can be found in \cref{fig:Adam_vs_AdamW_ResNet50_imagnet_NC03} and 
\cref{fig:Adam_vs_AdamW_ResNet50_imagnet_NC12}, and confirm the conclusion that AdamW does not have full NC emergence.

\begin{figure}[h]
    \centering
    \includegraphics[width=0.35\linewidth]{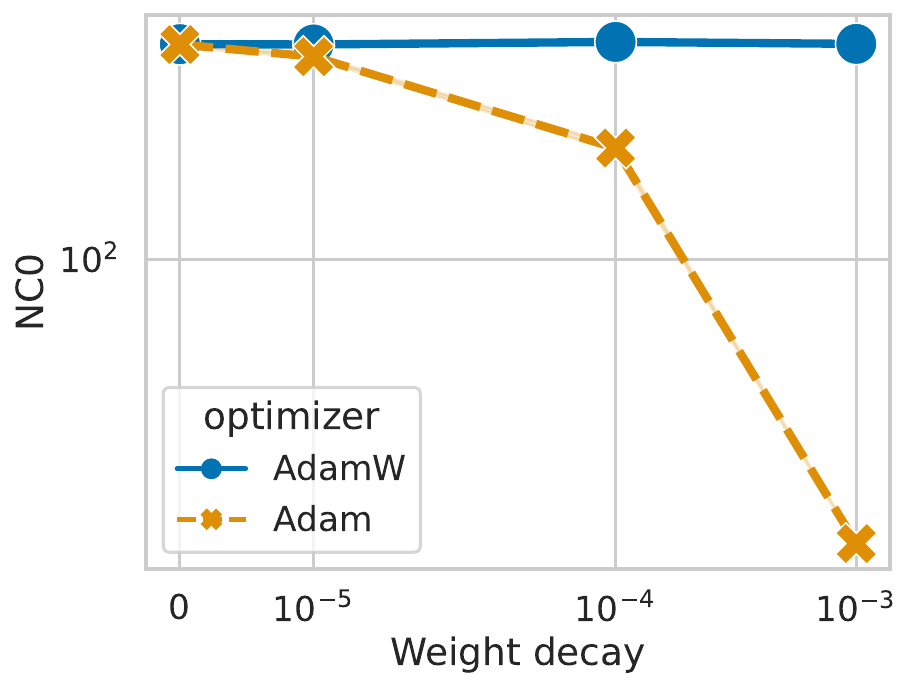}
    \includegraphics[width=0.35\linewidth]{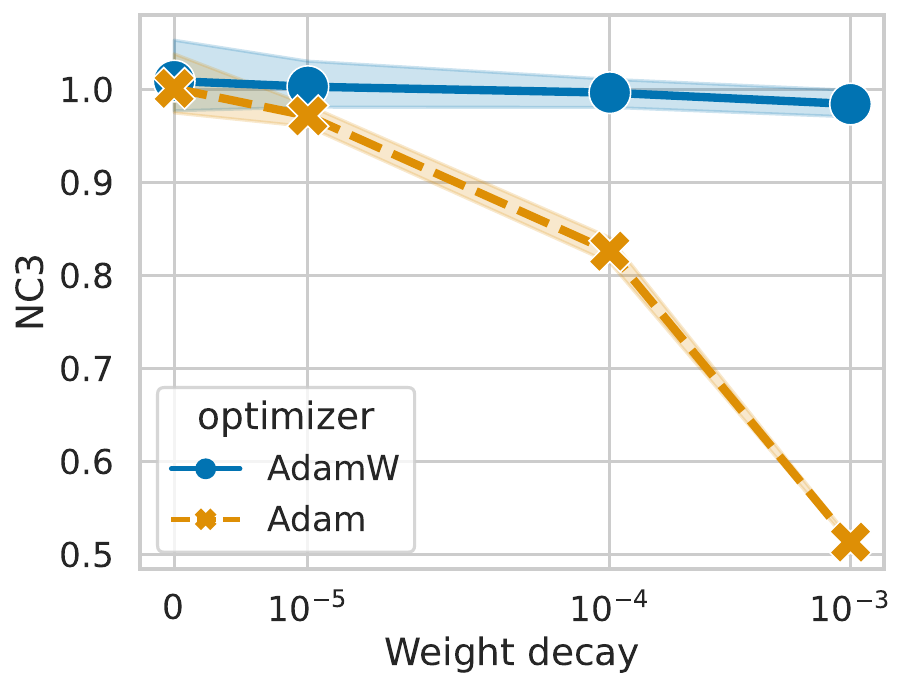} 
    \caption{NC0 (left) and NC3 (right) metrics plotted against weight decay on a ResNet50 trained on ImageNet1K for Adam and AdamW. Shaded area refers to one standard deviation across all trainings run with corresponding optimizer.}
    \label{fig:Adam_vs_AdamW_ResNet50_imagnet_NC03}
\end{figure}

\begin{figure}[h]
    \centering
    \includegraphics[width=0.35\linewidth]{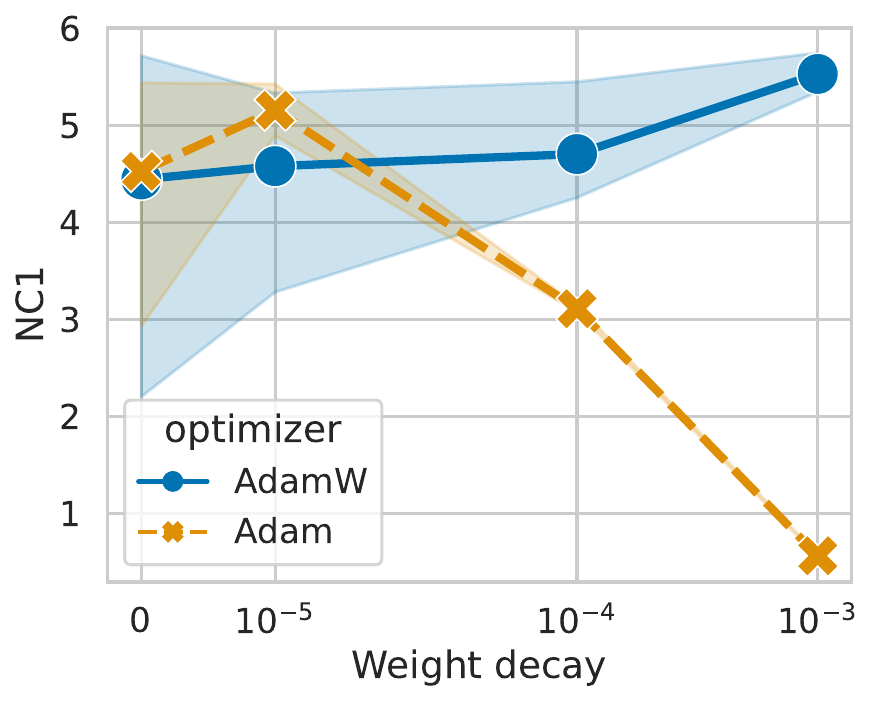}
    \includegraphics[width=0.35\linewidth]{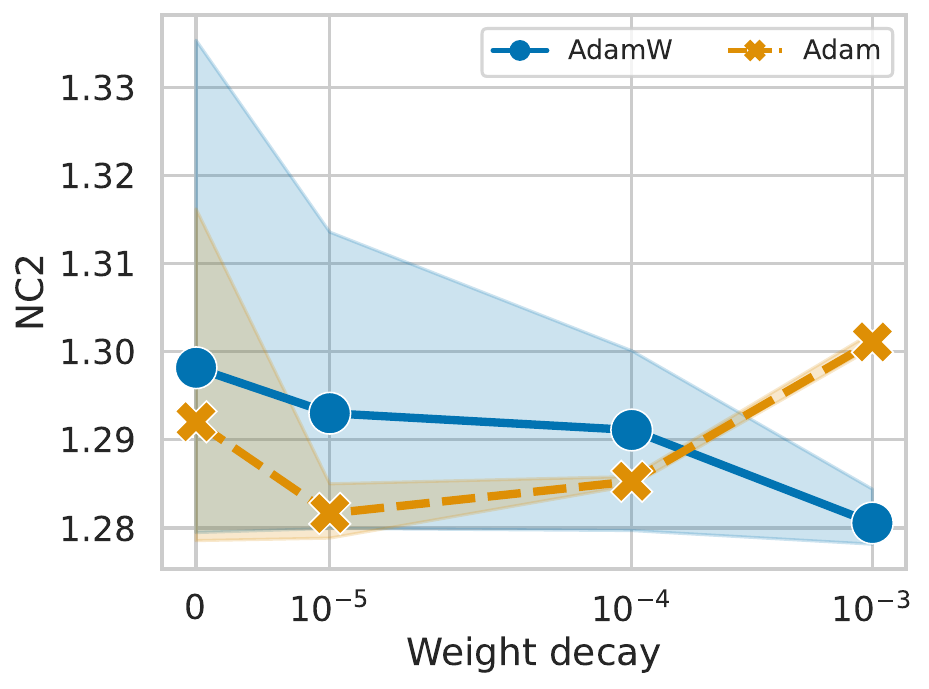} 
    \caption{NC1 (left) and NC2 (right) metrics plotted against weight decay on a ResNet50 trained on ImageNet1K for Adam and AdamW. Shaded area refers to one standard deviation across all trainings run with corresponding optimizer.}
    \label{fig:Adam_vs_AdamW_ResNet50_imagnet_NC12}
\end{figure}

\paragraph{VGG9 on FashionMNIST}

The comparison between coupled and decoupled weight decay on SGD, Adam, and Signum on a VGG9 trained on FashionMNIST can be found in \cref{fig:Adam_vs_AdamW_VGG9_fashion} and \cref{fig:Signum_vs_SignumW_VGG9_fashion}. The relation between NC0 and NC3 can be found in \cref{fig:NC0_vs_NC3_VGG9_fashion}, between NC0 and NC1 in \cref{fig:NC0_vs_NC1_VGG9_fashion}, and between NC0 and NC2 in \cref{fig:NC0_vs_NC2_VGG9_fashion}.

\begin{figure}[h]
    \centering
    \includegraphics[width=0.35\linewidth]{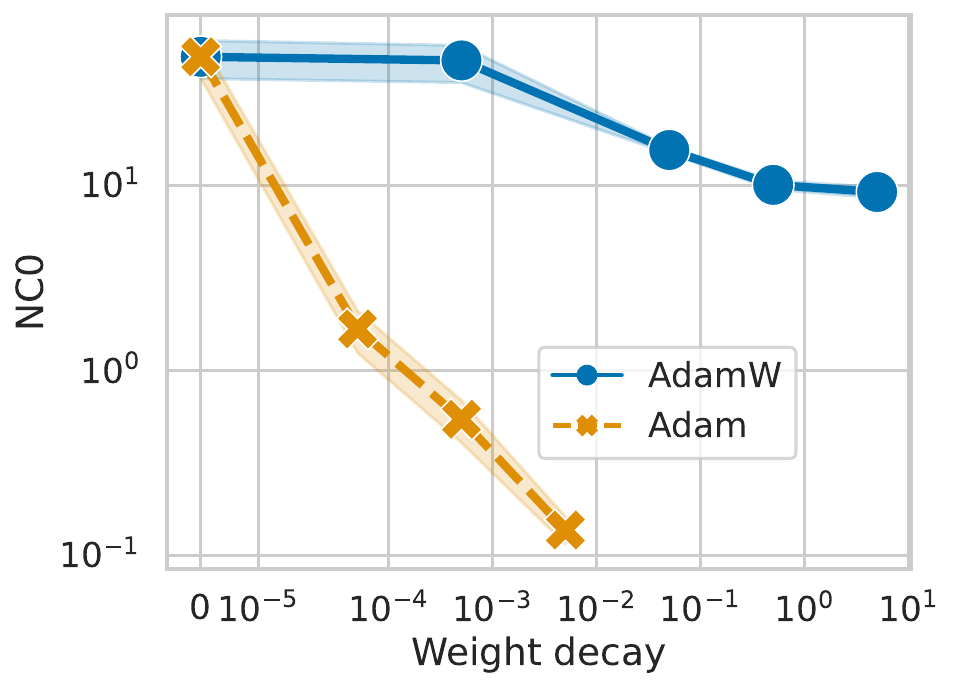}
    \includegraphics[width=0.35\linewidth]{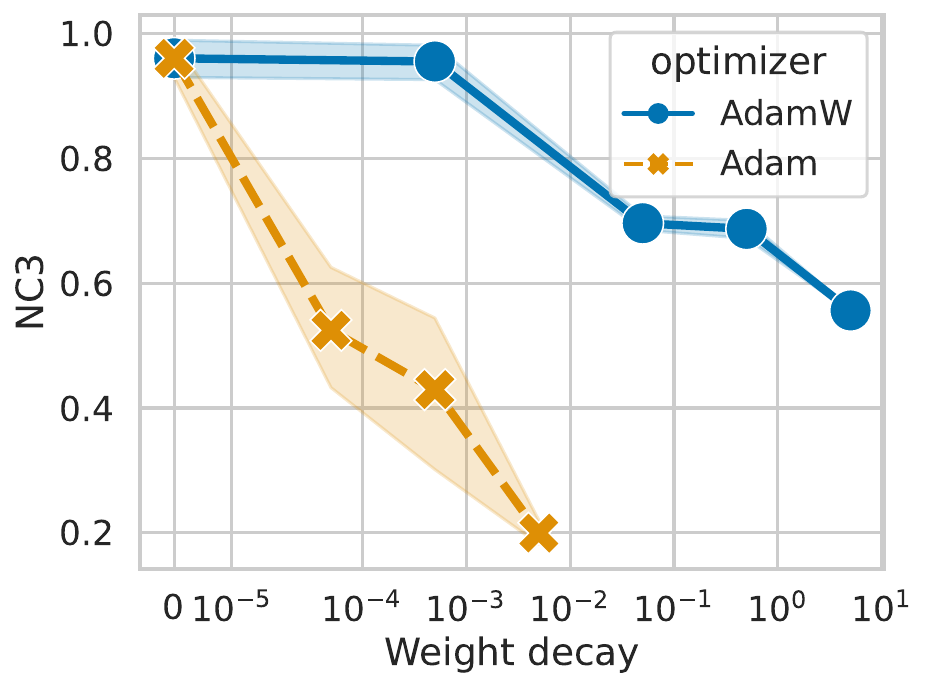} 
    \caption{NC0 (left) and NC3 (right) metrics plotted against weight decay on a VGG9 trained on FashionMNIST for Adam and AdamW. Shaded area refers to one standard deviation across all trainings run with corresponding optimizer.}
    \label{fig:Adam_vs_AdamW_VGG9_fashion}
\end{figure}

\begin{figure}[h]
    \centering
    \includegraphics[width=0.24\linewidth]{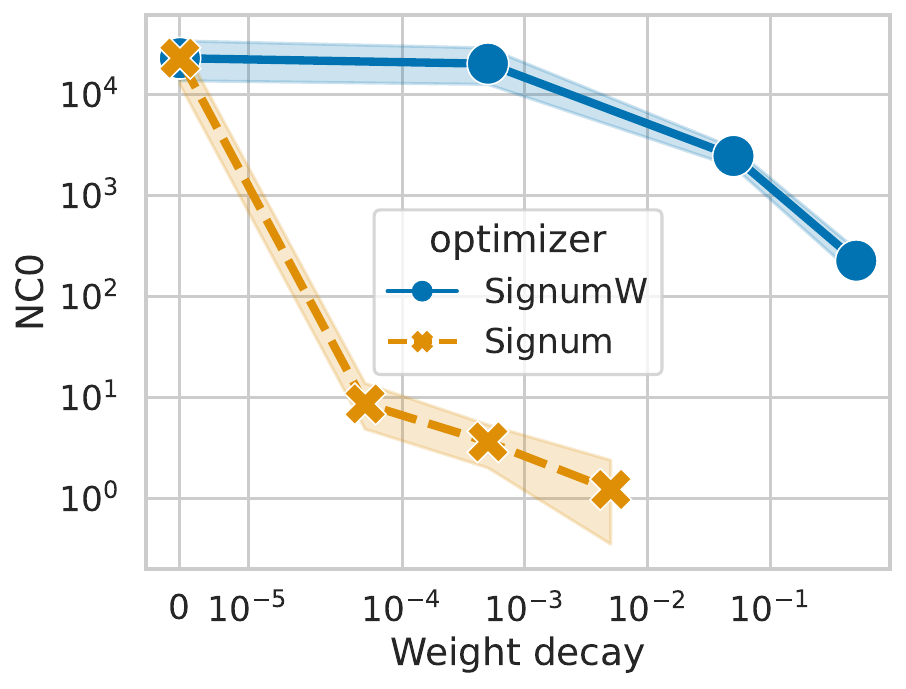}
    \includegraphics[width=0.24\linewidth]{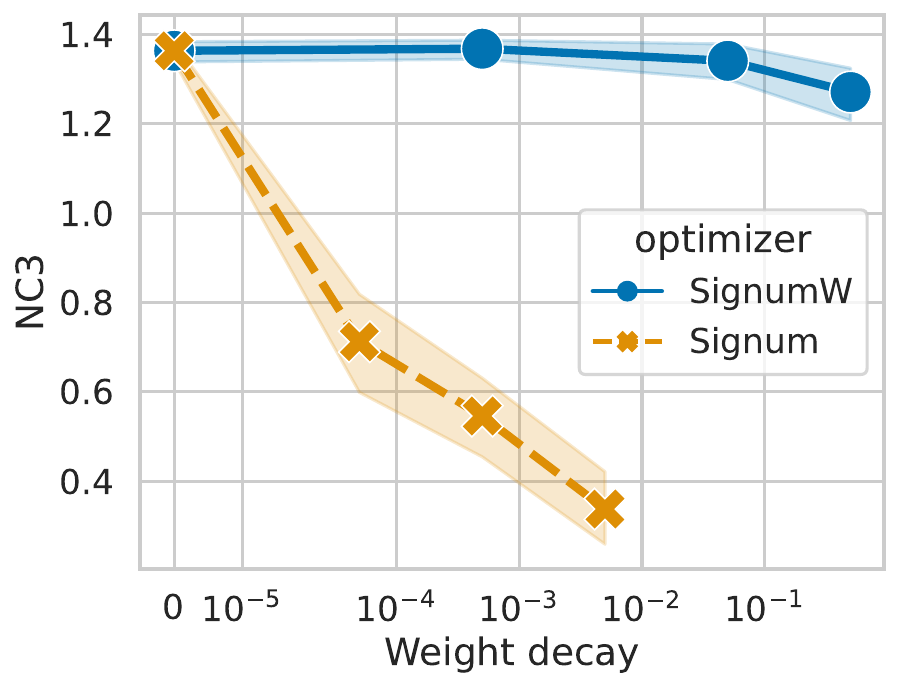} 
    \hspace{0.05cm} \vrule \hspace{0.05cm}
    \includegraphics[width=0.24\linewidth]{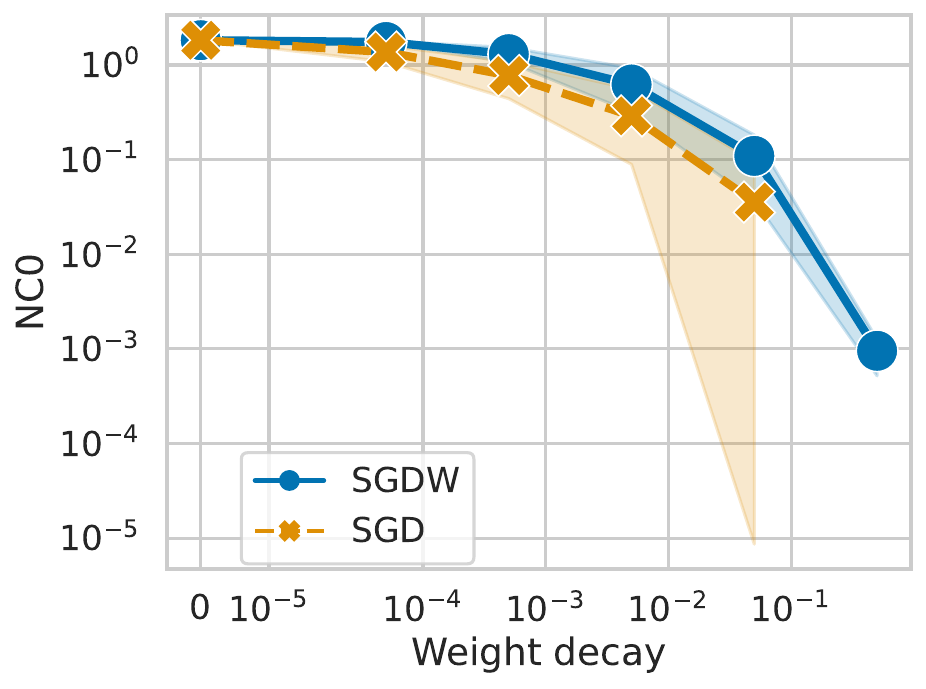}
    \includegraphics[width=0.24\linewidth]{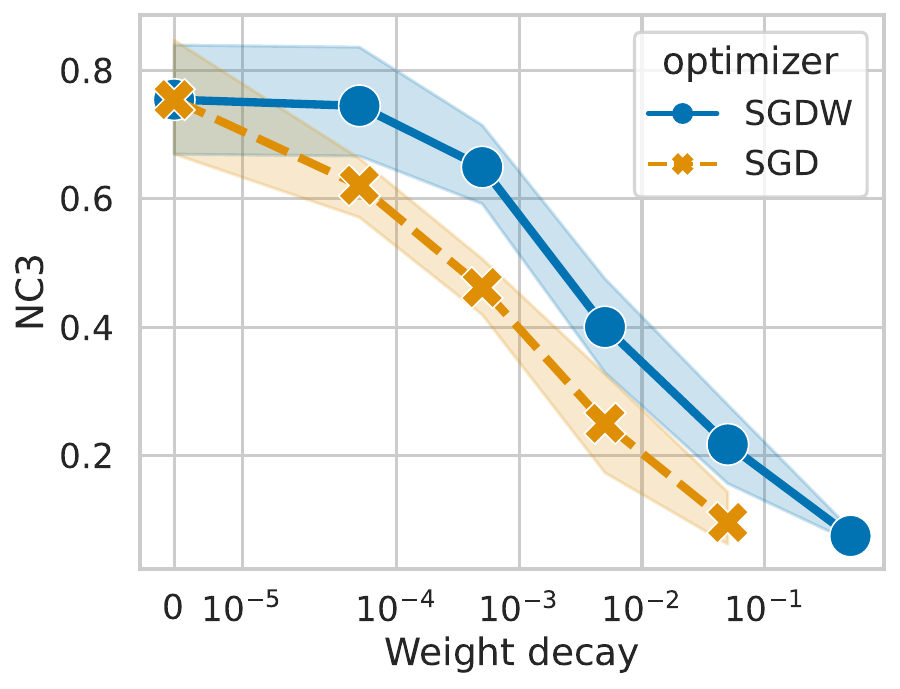}
    \caption{NC0 and NC3 metrics plotted against weight decay on a VGG9 trained on FashionMNIST for Signum and SignumW (left side) and SGD and SGDW (right side). Shaded area refers to one standard deviation across all trainings run with corresponding optimizer.}
    \label{fig:Signum_vs_SignumW_VGG9_fashion}
\end{figure}

\begin{figure}[ht]
    \centering
    \includegraphics[width=0.98\linewidth]{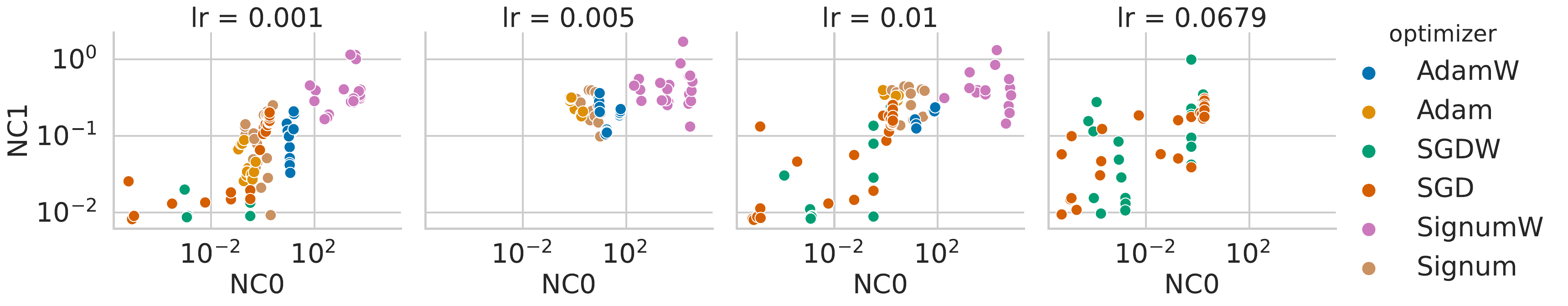}

    \caption{NC0 vs. NC1 on VGG9 trained on FashionMNIST. Note that the x-axis is plotted in log-scale.
    }
    \label{fig:NC0_vs_NC1_VGG9_fashion}
\end{figure}

\begin{figure}[ht]
    \centering
    \includegraphics[width=0.98\linewidth]{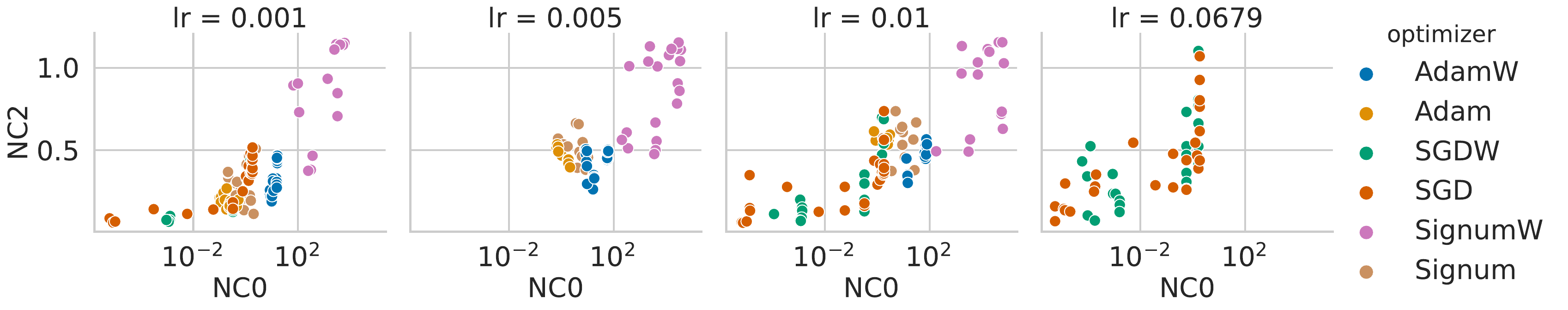}

    \caption{NC0 vs. NC2 on VGG9 trained on FashionMNIST. Note that the x-axis is plotted in log-scale.
    }
    \label{fig:NC0_vs_NC2_VGG9_fashion}
\end{figure}

\begin{figure}[ht]
    \centering
    \includegraphics[width=0.98\linewidth]{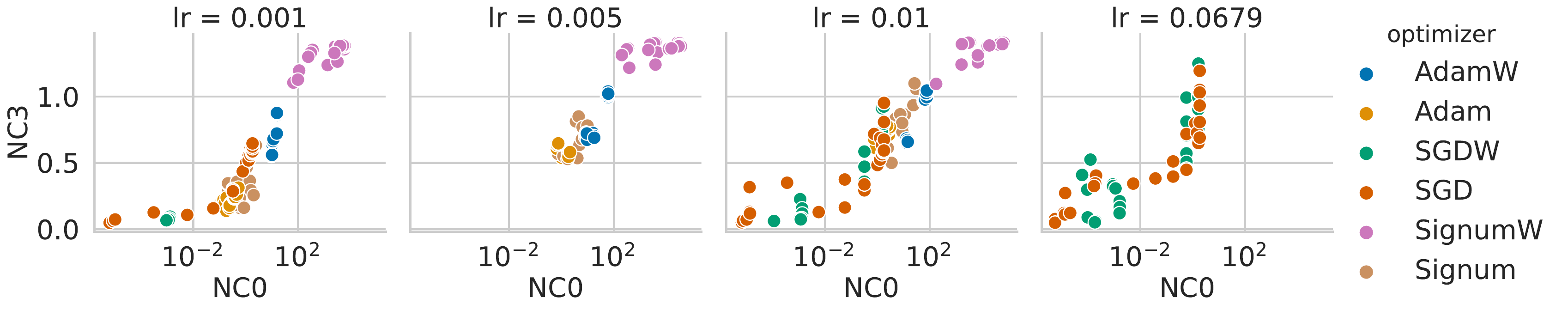}

    \caption{NC0 vs. NC3 on VGG9 trained on FashionMNIST. Note that the x-axis is plotted in log-scale.
    }
    \label{fig:NC0_vs_NC3_VGG9_fashion}
\end{figure}

\paragraph{ResNet9 on Cifar10}

The comparison between coupled and decoupled weight decay on SGD, Adam, and Signum on a ResNet9 trained on Cifar10 can be found in \cref{fig:Adam_vs_AdamW_ResNet9_cifar10} and \cref{fig:Signum_vs_SignumW_ResNet9_cifar10}. The relation between NC0 and NC3 can be found in \cref{fig:NC0_vs_NC3_ResNet9_cifar10}, between NC0 and NC1 in \cref{fig:NC0_vs_NC1_ResNet9_cifar10}, and between NC0 and NC2 in \cref{fig:NC0_vs_NC2_ResNet9_cifar10}.

\begin{figure}[h]
    \centering
    \includegraphics[width=0.35\linewidth]{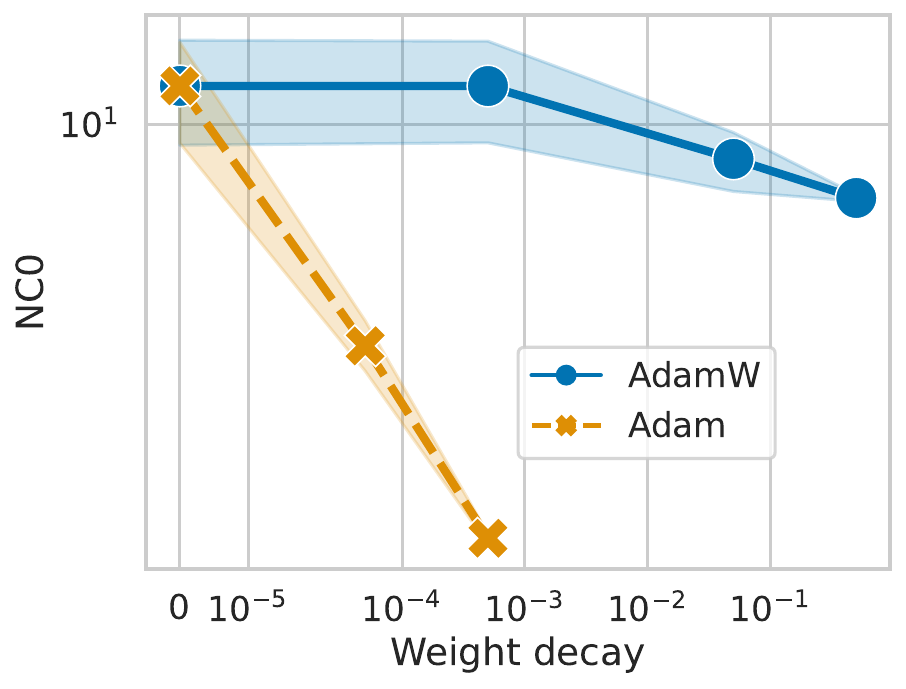}
    \includegraphics[width=0.35\linewidth]{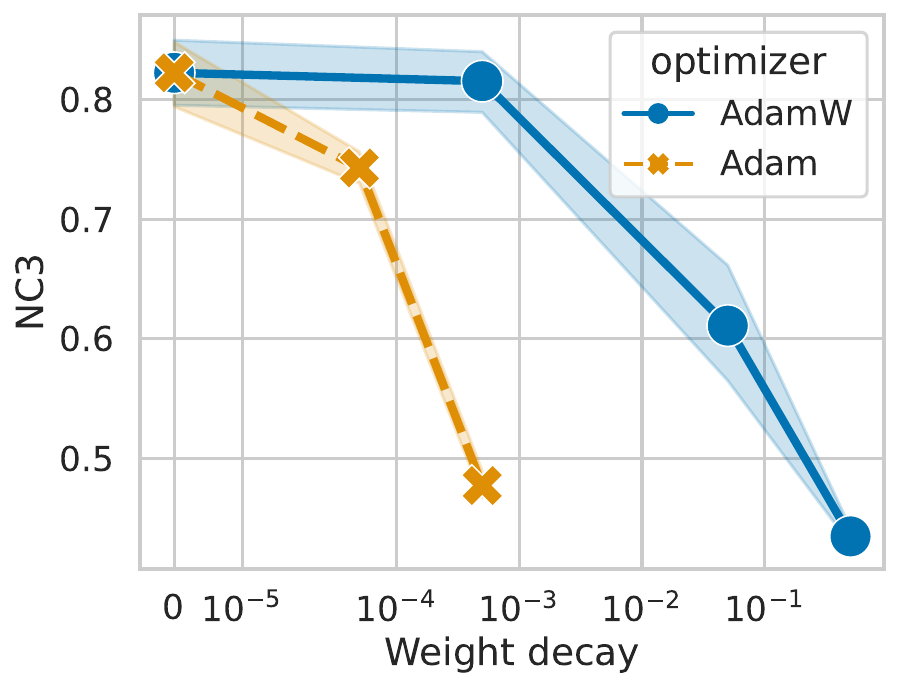} 
    \caption{NC0 (left) and NC3 (right) metrics plotted against weight decay on a ResNet9 trained on Cifar10 for Adam and AdamW. Shaded area refers to one standard deviation across all trainings run with corresponding optimizer.}
    \label{fig:Adam_vs_AdamW_ResNet9_cifar10}
\end{figure}

\begin{figure}[h]
    \centering
    \includegraphics[width=0.24\linewidth]{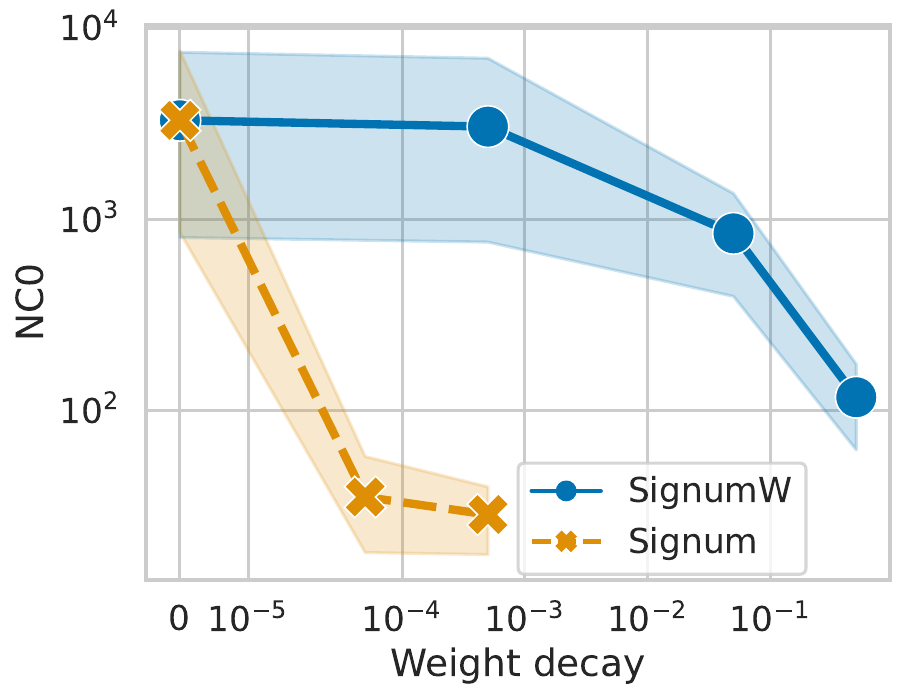}
    \includegraphics[width=0.24\linewidth]{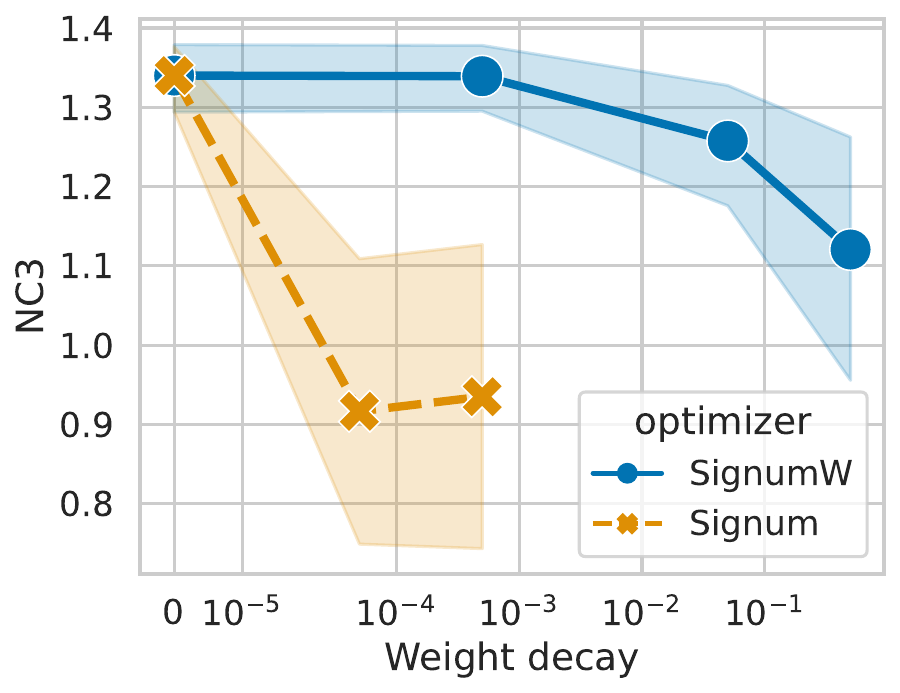} 
    \hspace{0.05cm} \vrule \hspace{0.05cm}
    \includegraphics[width=0.24\linewidth]{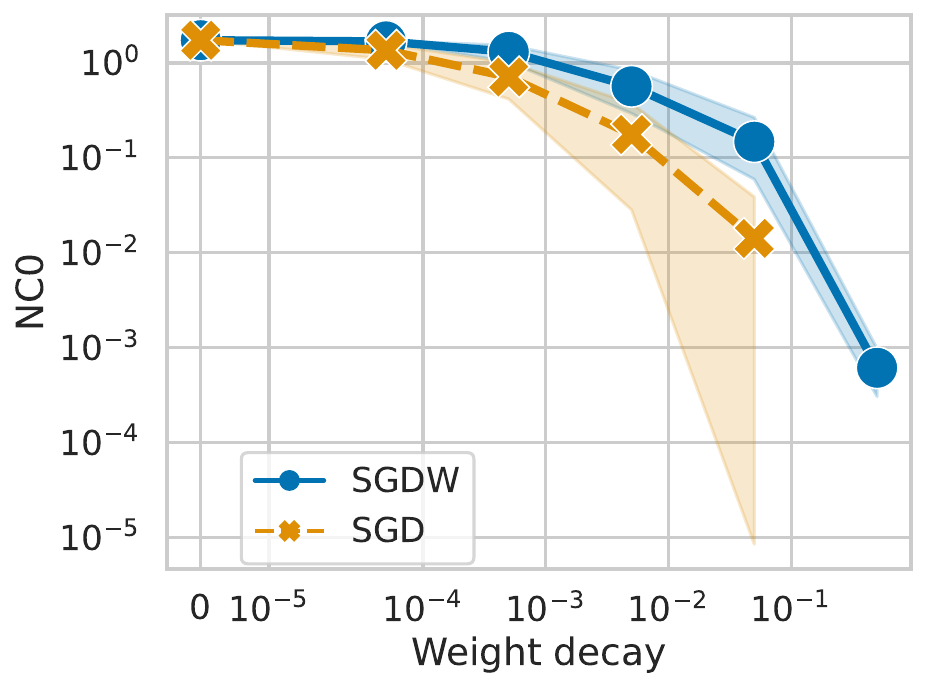}
    \includegraphics[width=0.24\linewidth]{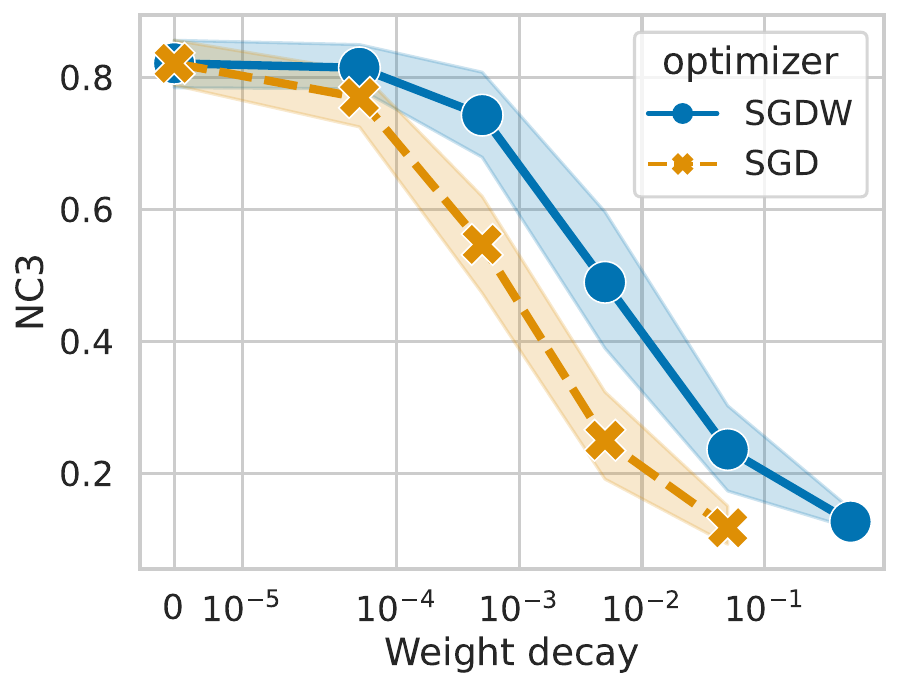}
    \caption{NC0 and NC3 metrics plotted against weight decay on a ResNet9 trained on Cifar10 for Signum and SignumW (left side) and SGD and SGDW (right side). Shaded area refers to one standard deviation across all trainings run with corresponding optimizer.}
    \label{fig:Signum_vs_SignumW_ResNet9_cifar10}
\end{figure}

\begin{figure}[ht]
    \centering
    \includegraphics[width=0.98\linewidth]{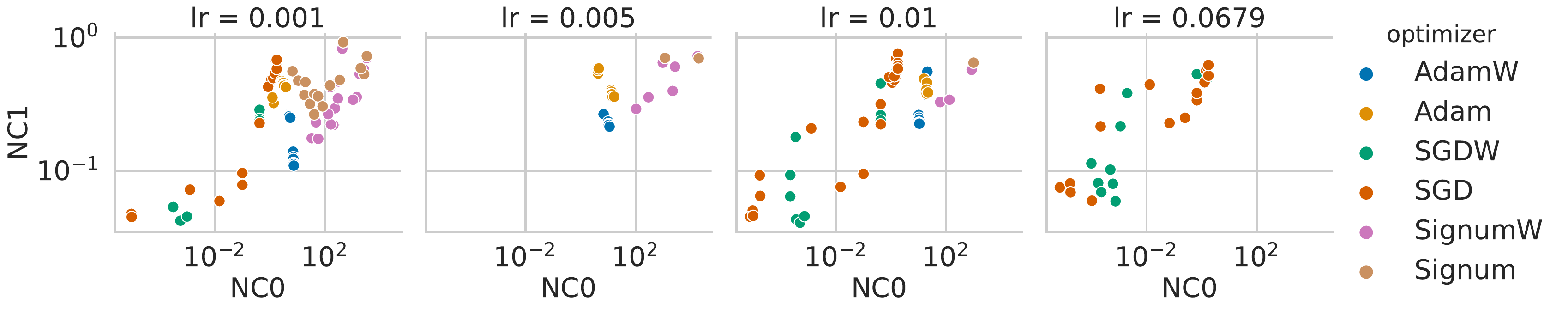}

    \caption{NC0 vs. NC1 on ResNet9 trained on Cifar10. Note that the x-axis is plotted in log-scale.
    }
    \label{fig:NC0_vs_NC1_ResNet9_cifar10}
\end{figure}

\begin{figure}[ht]
    \centering
    \includegraphics[width=0.98\linewidth]{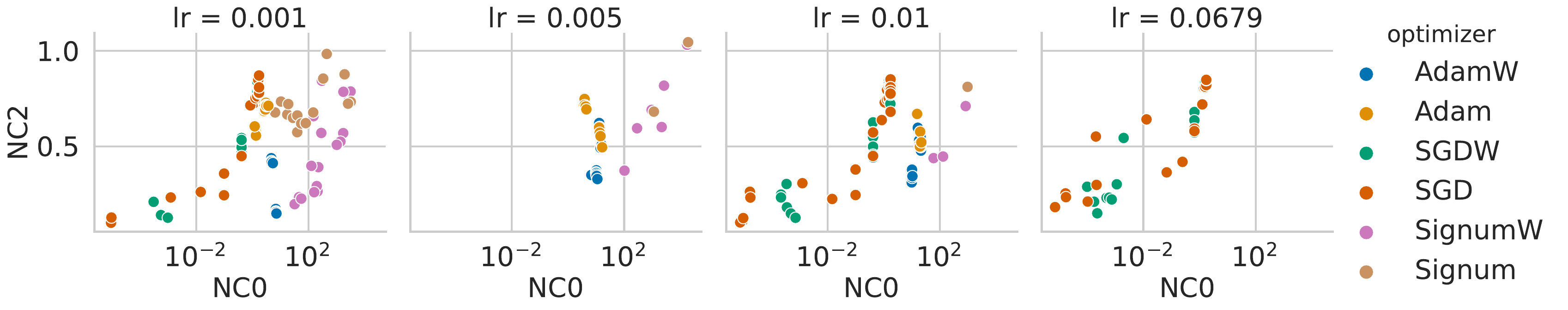}

    \caption{NC0 vs. NC2 on ResNet9 trained on Cifar10. Note that the x-axis is plotted in log-scale.
    }
    \label{fig:NC0_vs_NC2_ResNet9_cifar10}
\end{figure}

\begin{figure}[ht]
    \centering
    \includegraphics[width=0.98\linewidth]{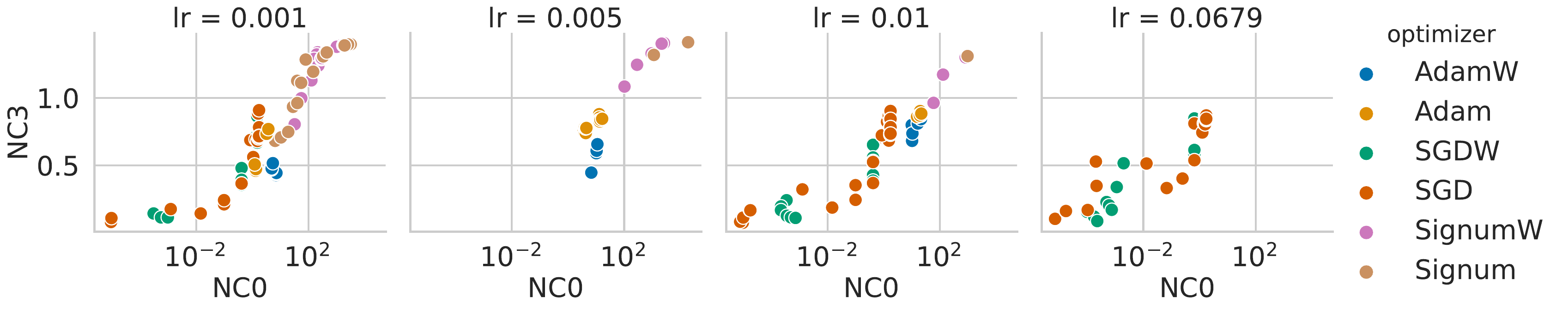}

    \caption{NC0 vs. NC3 on ResNet9 trained on Cifar10. Note that the x-axis is plotted in log-scale.
    }
    \label{fig:NC0_vs_NC3_ResNet9_cifar10}
\end{figure}

\paragraph{VGG9 on Cifar10}

The comparison between coupled and decoupled weight decay on SGD, Adam, and Signum can be found in \cref{fig:Adam_vs_AdamW_VGG9_cifar10} and \cref{fig:Signum_vs_SignumW_VGG9_cifar10}. The relation between NC0 and NC3 can be found in \cref{fig:NC0_vs_NC3_VGG9_cifar10}, between NC0 and NC1 in \cref{fig:NC0_vs_NC1_VGG9_cifar10}, and between NC0 and NC2 in \cref{fig:NC0_vs_NC2_VGG9_cifar10}.

\begin{figure}[h]
    \centering
    \includegraphics[width=0.35\linewidth]{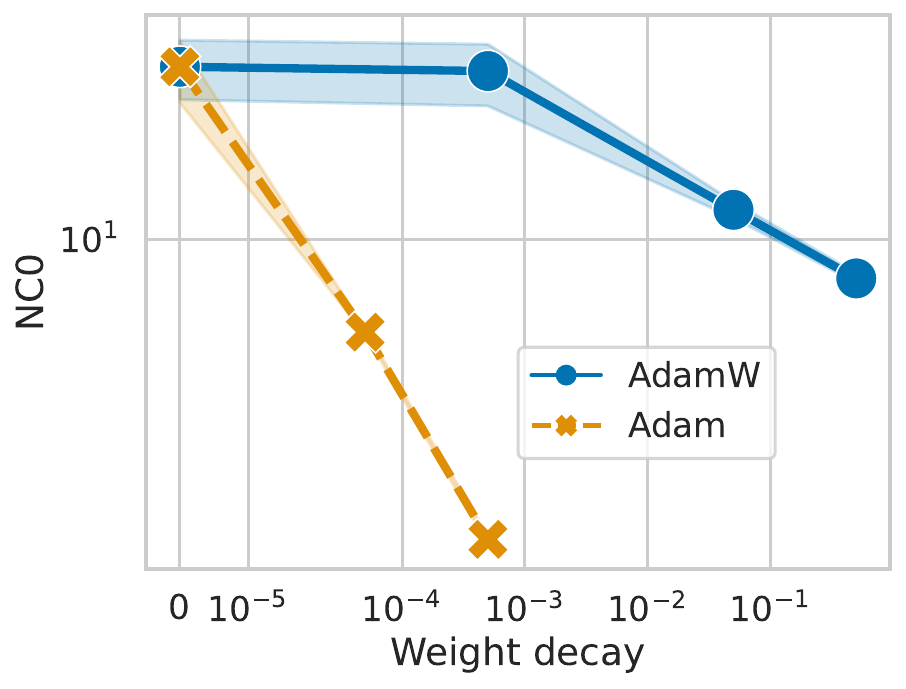}
    \includegraphics[width=0.35\linewidth]{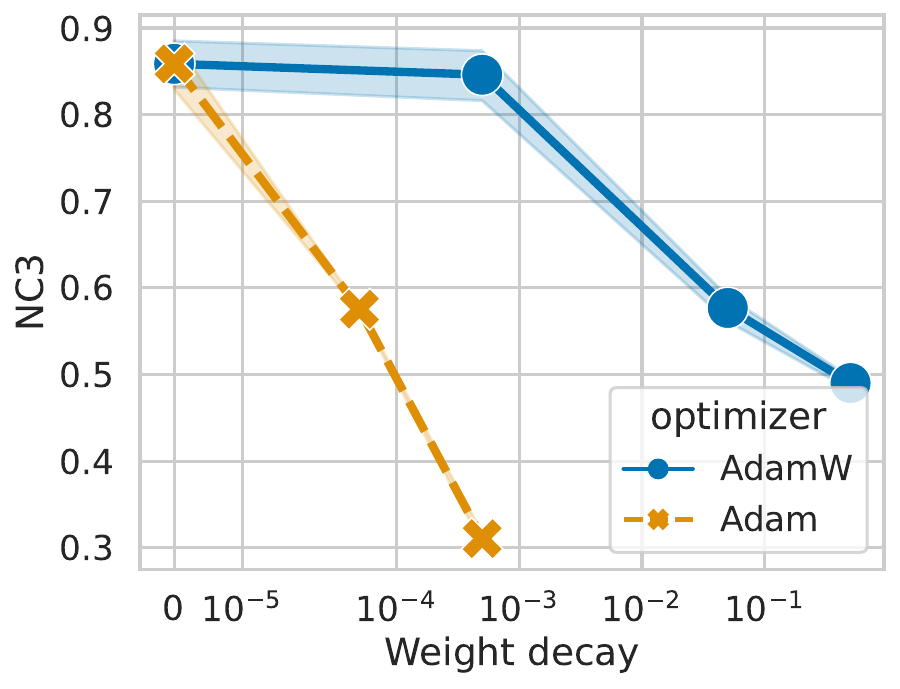} 
    \caption{NC0 (left) and NC3 (right) metrics plotted against weight decay on a VGG9 trained on Cifar10 for Adam and AdamW. Shaded area refers to one standard deviation across all trainings run with corresponding optimizer.}
    \label{fig:Adam_vs_AdamW_VGG9_cifar10}
\end{figure}

\begin{figure}[h]
    \centering
    \includegraphics[width=0.24\linewidth]{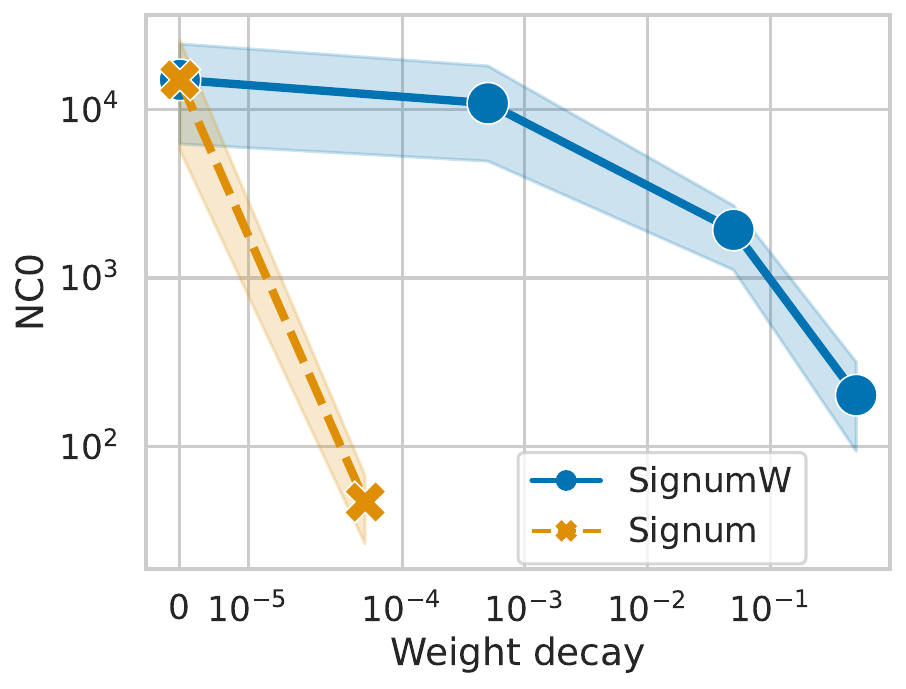}
    \includegraphics[width=0.24\linewidth]{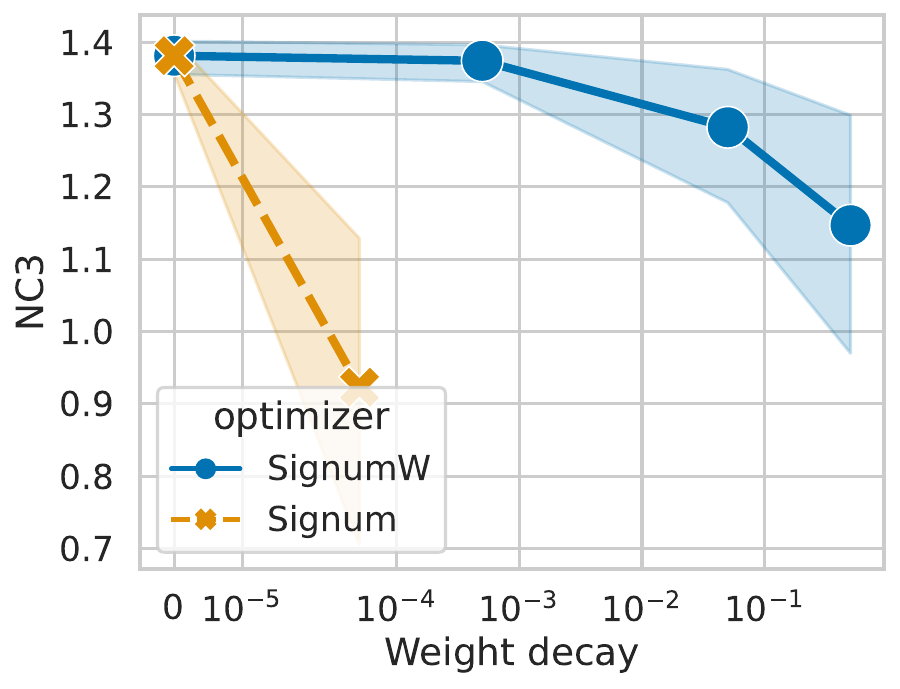} 
    \hspace{0.05cm} \vrule \hspace{0.05cm}
    \includegraphics[width=0.24\linewidth]{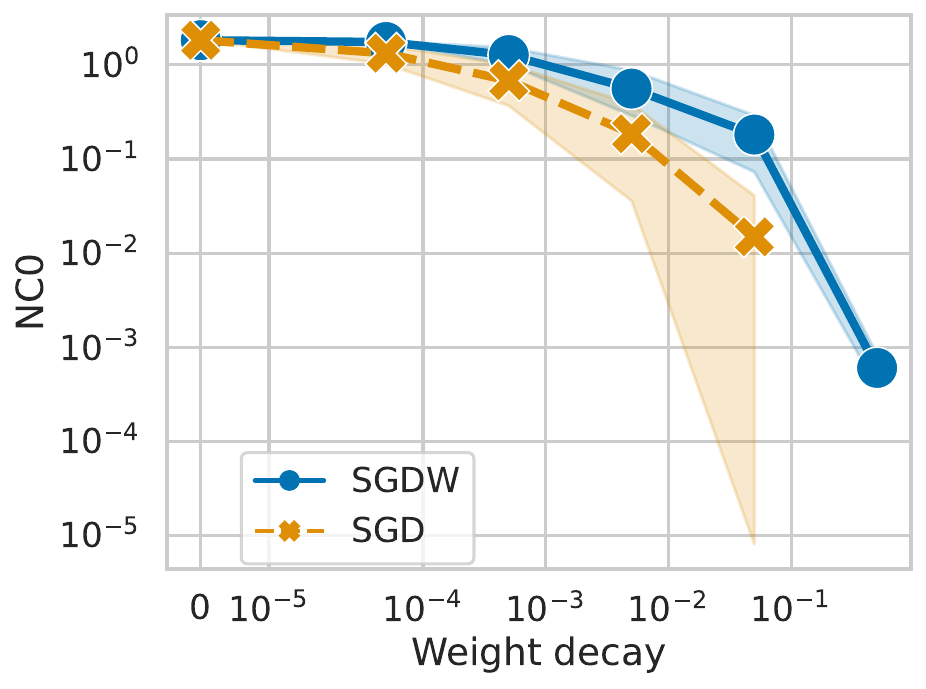}
    \includegraphics[width=0.24\linewidth]{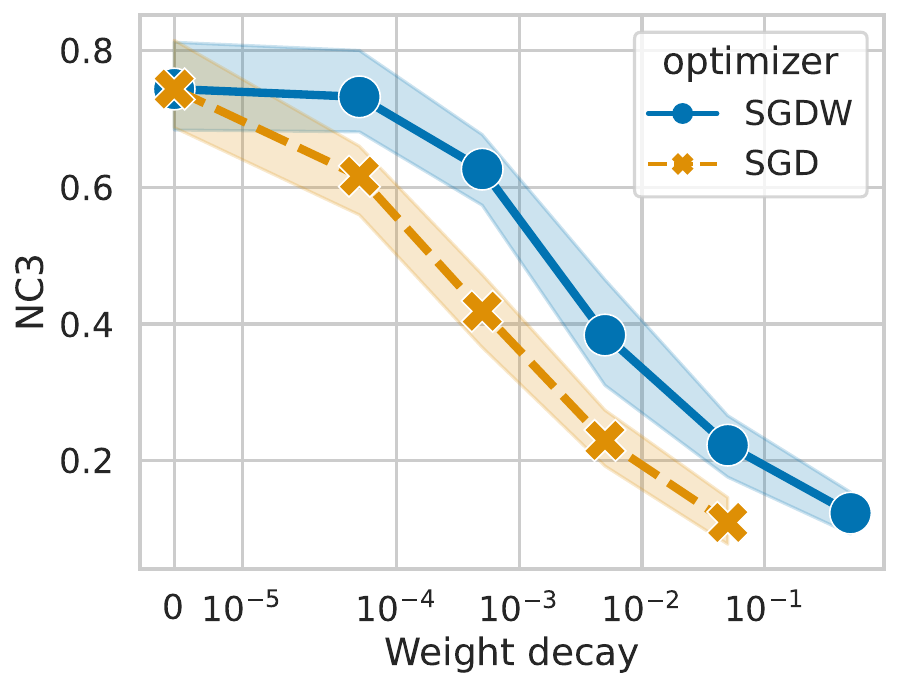}
    \caption{NC0 and NC3 metrics plotted against weight decay on a VGG9 trained on Cifar10 for Signum and SignumW (left side) and SGD and SGDW (right side). Shaded area refers to one standard deviation across all trainings run with corresponding optimizer.}
    \label{fig:Signum_vs_SignumW_VGG9_cifar10}
\end{figure}

\begin{figure}[ht]
    \centering
    \includegraphics[width=0.98\linewidth]{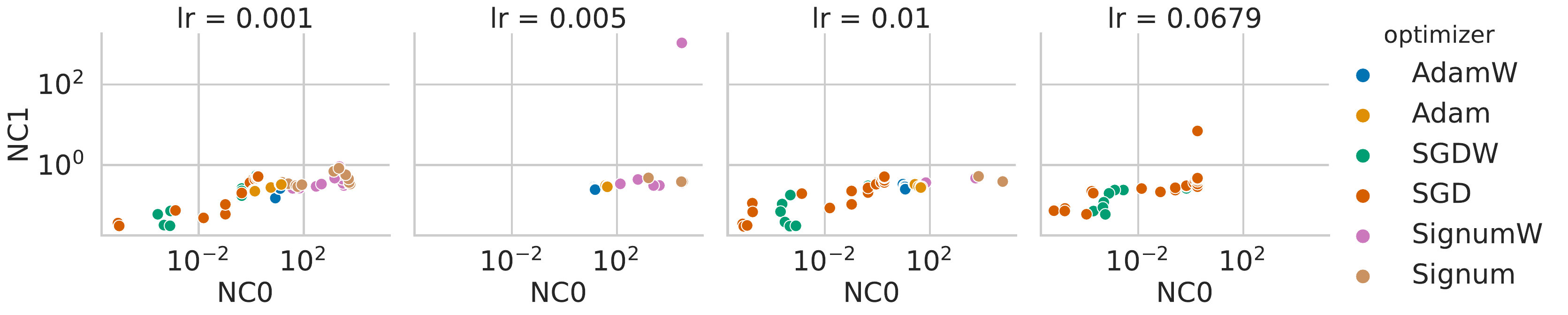}
    \caption{NC0 vs. NC1 on VGG9 trained on Cifar10. Note that the x-axis is plotted in log-scale.
    }
    \label{fig:NC0_vs_NC1_VGG9_cifar10}
\end{figure}

\begin{figure}[ht]
    \centering
    \includegraphics[width=0.98\linewidth]{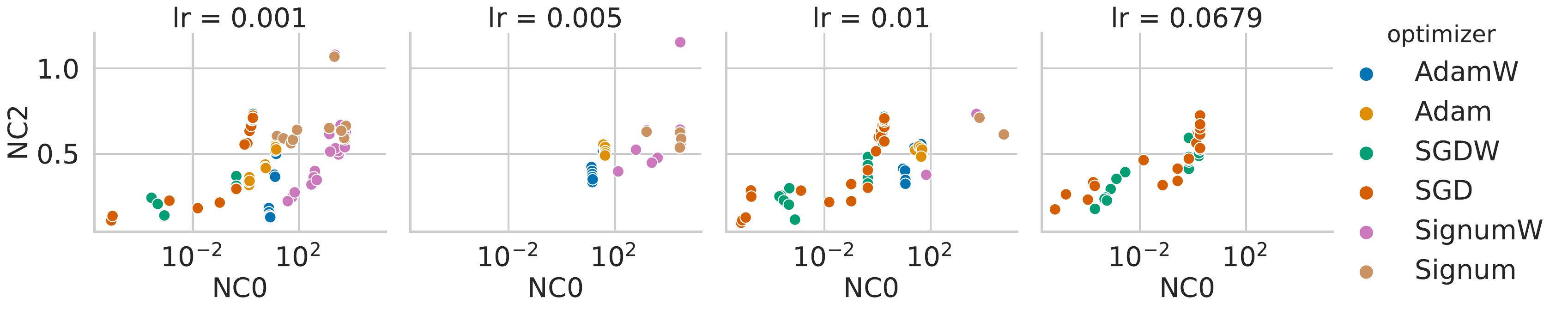}
    \caption{NC0 vs. NC2 on VGG9 trained on Cifar10. Note that the x-axis is plotted in log-scale.
    }
    \label{fig:NC0_vs_NC2_VGG9_cifar10}
\end{figure}

\begin{figure}[ht]
    \centering
    \includegraphics[width=0.98\linewidth]{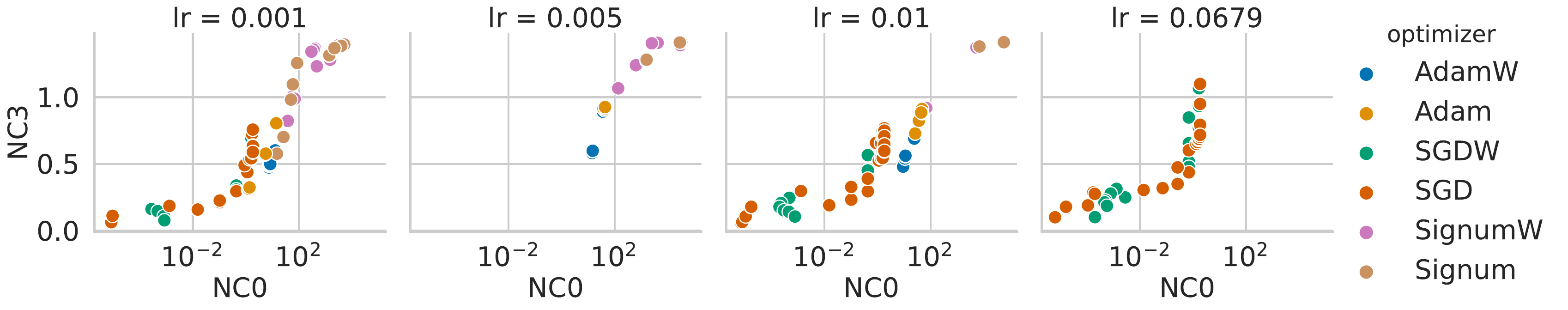}
    \caption{NC0 vs. NC3 on VGG9 trained on Cifar10. Note that the x-axis is plotted in log-scale.
    }
    \label{fig:NC0_vs_NC3_VGG9_cifar10}
\end{figure}

\paragraph{ResNet9 on MNIST}
The comparison between coupled and decoupled weight decay on SGD, Adam, and Signum on a ResNet9 trained on MNIST can be found in \cref{fig:Adam_vs_AdamW_ResNet9_mnist} and \cref{fig:Signum_vs_SignumW_ResNet9_mnist}. The relation between NC0 and NC3 can be found in \cref{fig:NC0_vs_NC3_ResNet9_mnist}, between NC0 and NC1 in \cref{fig:NC0_vs_NC1_ResNet9_mnist}, and between NC0 and NC2 in \cref{fig:NC0_vs_NC2_ResNet9_mnist}.

\begin{figure}[h]
    \centering
    \includegraphics[width=0.35\linewidth]{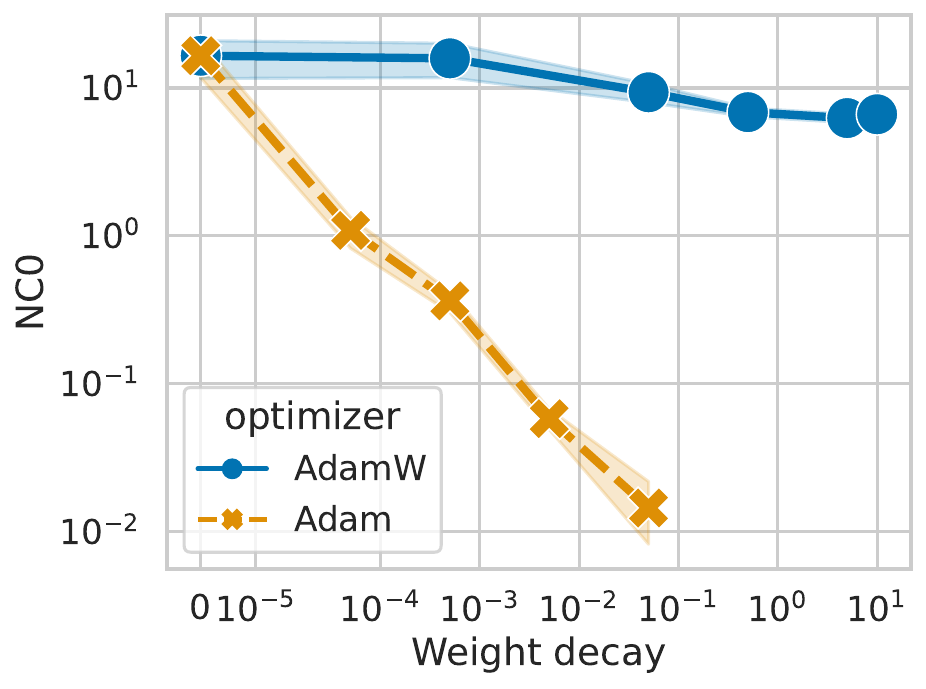}
    \includegraphics[width=0.35\linewidth]{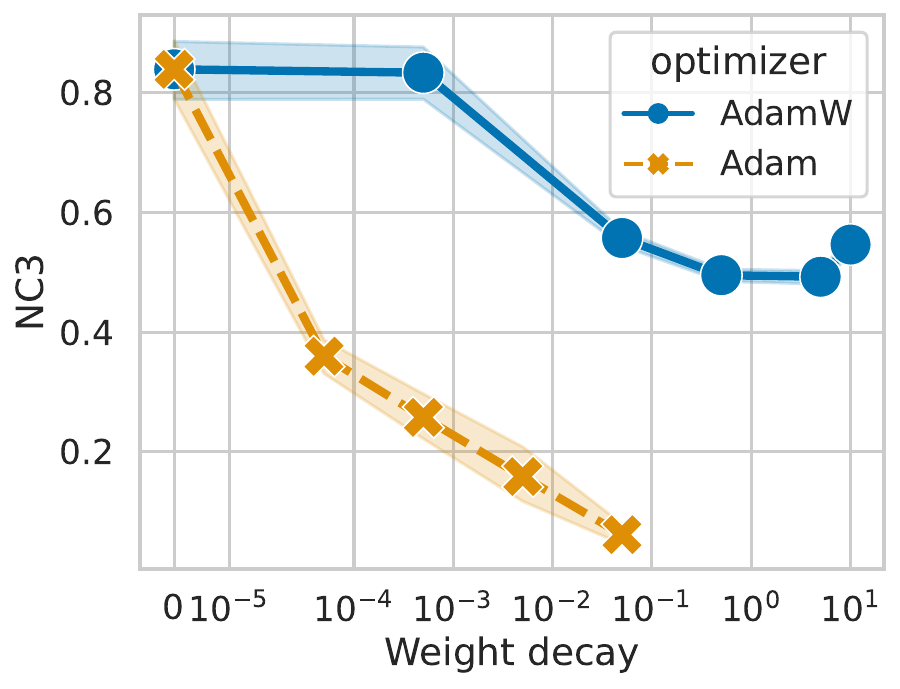} 
    \caption{NC0 (left) and NC3 (right) metrics plotted against weight decay on a ResNet9 trained on MNIST for Adam and AdamW. Shaded area refers to one standard deviation across all trainings run with corresponding optimizer.}
    \label{fig:Adam_vs_AdamW_ResNet9_mnist}
\end{figure}

\begin{figure}[h]
    \centering
    \includegraphics[width=0.24\linewidth]{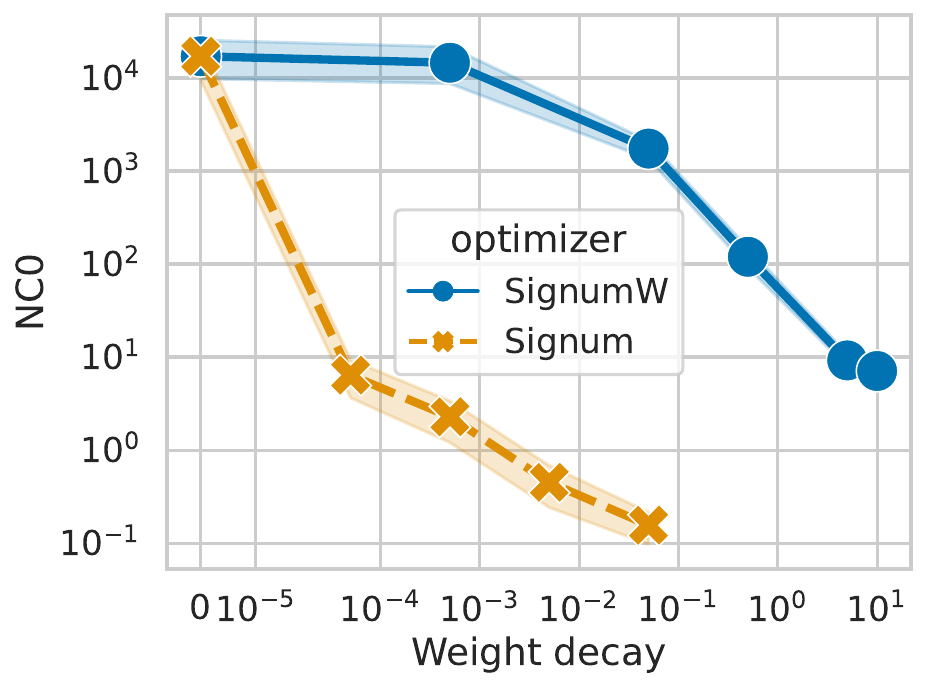}
    \includegraphics[width=0.24\linewidth]{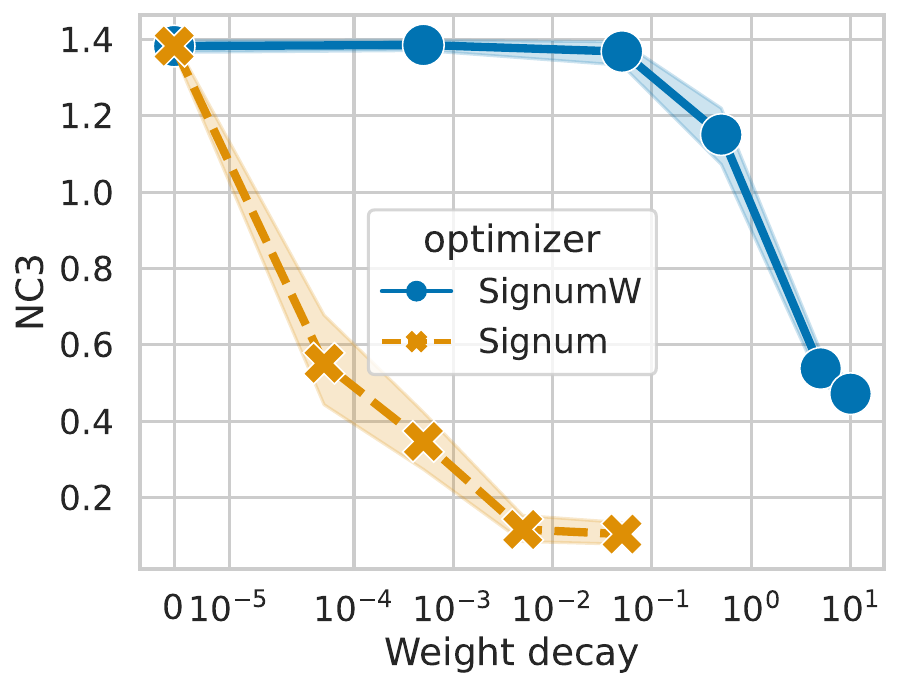} 
    \hspace{0.05cm} \vrule \hspace{0.05cm}
    \includegraphics[width=0.24\linewidth]{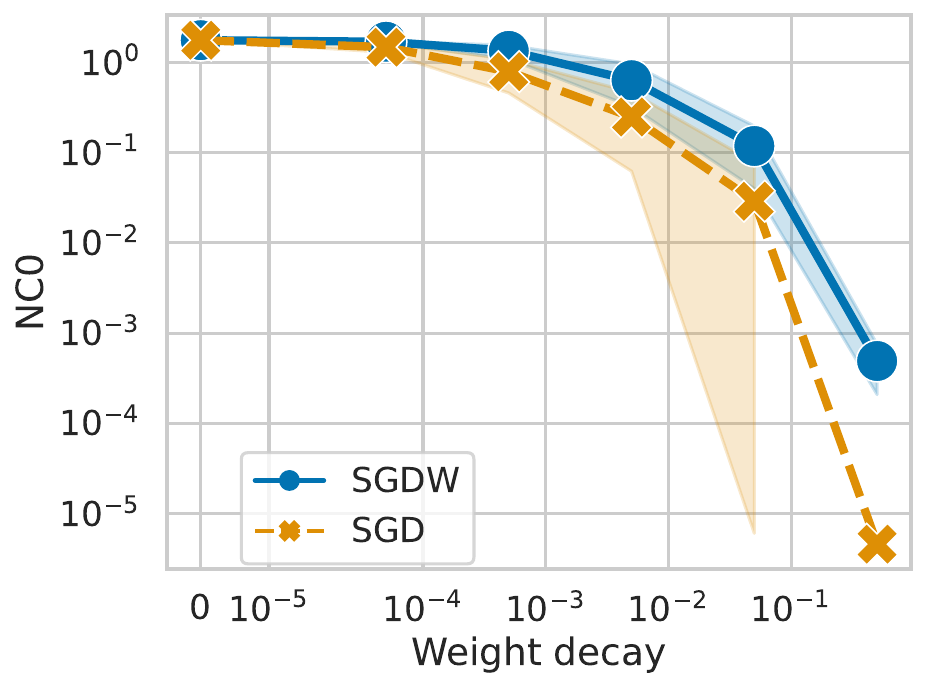}
    \includegraphics[width=0.24\linewidth]{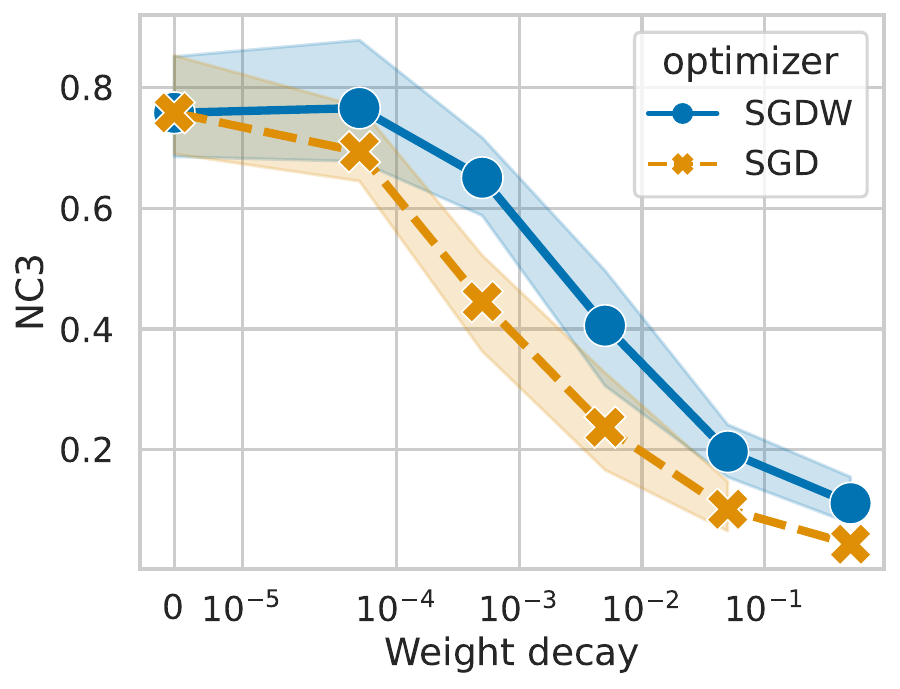}
    \caption{NC0 and NC3 metrics plotted against weight decay on a ResNet9 trained on MNIST for Signum and SignumW (left side) and SGD and SGDW (right side). Shaded area refers to one standard deviation across all trainings run with corresponding optimizer.}
    \label{fig:Signum_vs_SignumW_ResNet9_mnist}
\end{figure}

\begin{figure}[ht]
    \centering
    \includegraphics[width=0.98\linewidth]{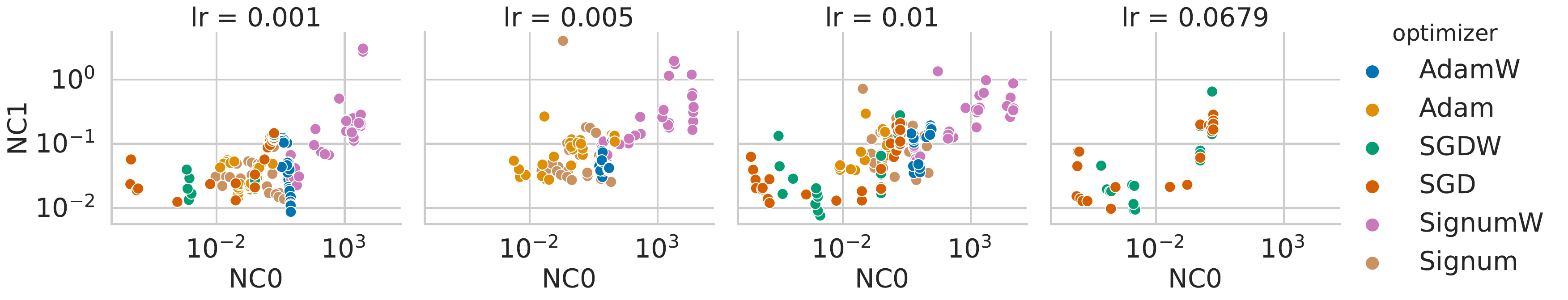}

    \caption{NC0 vs. NC1 on ResNet9 trained on MNIST. Note that the x-axis is plotted in log-scale.
    }
    \label{fig:NC0_vs_NC1_ResNet9_mnist}
\end{figure}

\begin{figure}[ht]
    \centering
    \includegraphics[width=0.98\linewidth]{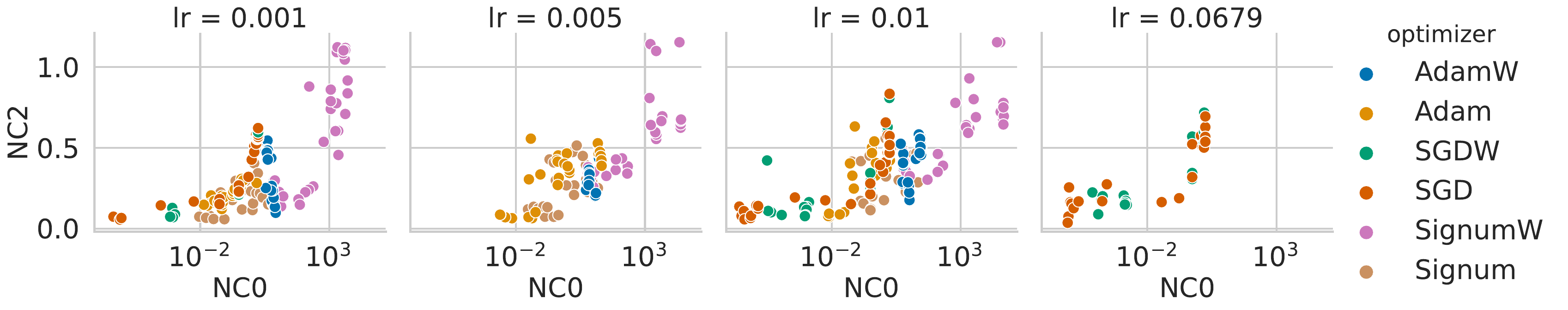}

    \caption{NC0 vs. NC2 on ResNet9 trained on MNIST. Note that the x-axis is plotted in log-scale.
    }
    \label{fig:NC0_vs_NC2_ResNet9_mnist}
\end{figure}

\begin{figure}[ht]
    \centering
    \includegraphics[width=0.98\linewidth]{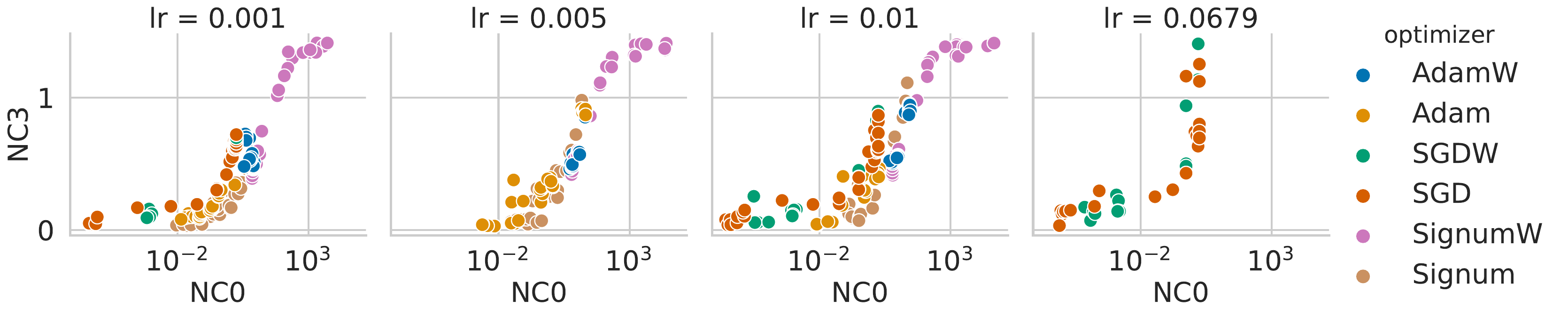}

    \caption{NC0 vs. NC3 on ResNet9 trained on MNIST. Note that the x-axis is plotted in log-scale.
    }
    \label{fig:NC0_vs_NC3_ResNet9_mnist}
\end{figure}

\paragraph{VGG9 on MNIST}
The comparison between coupled and decoupled weight decay on SGD, Adam, and Signum on a VGG9 trained on MNIST can be found in \cref{fig:Adam_vs_AdamW_VGG9_mnist} and \cref{fig:Signum_vs_SignumW_VGG9_mnist}. The relation between NC0 and NC3 can be found in \cref{fig:NC0_vs_NC3_VGG9_mnist}, between NC0 and NC1 in \cref{fig:NC0_vs_NC1_VGG9_mnist}, and between NC0 and NC2 in \cref{fig:NC0_vs_NC2_VGG9_mnist}.

\begin{figure}[h]
    \centering
    \includegraphics[width=0.35\linewidth]{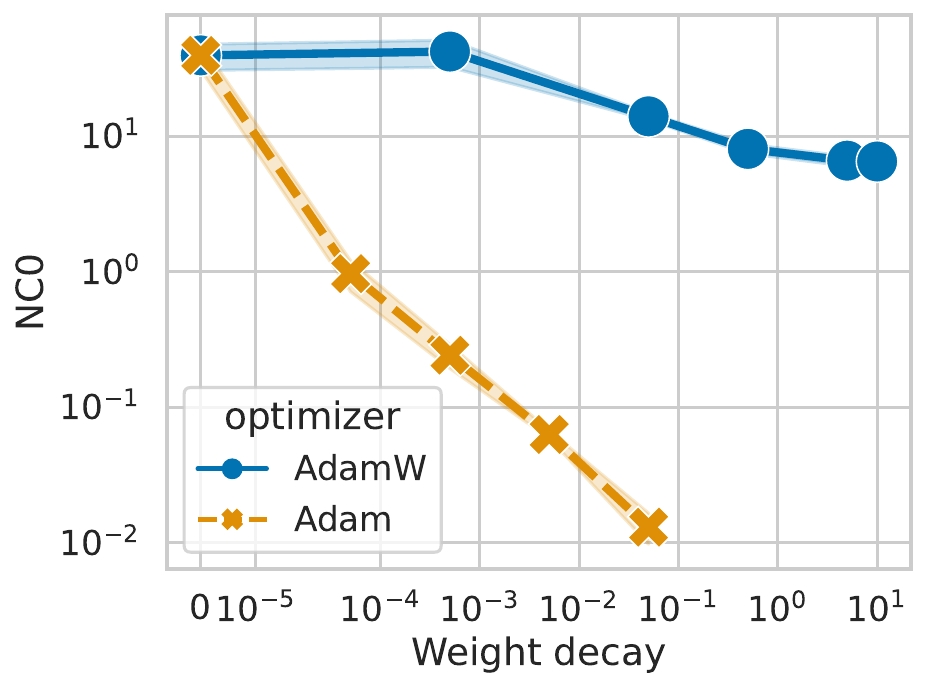}
    \includegraphics[width=0.35\linewidth]{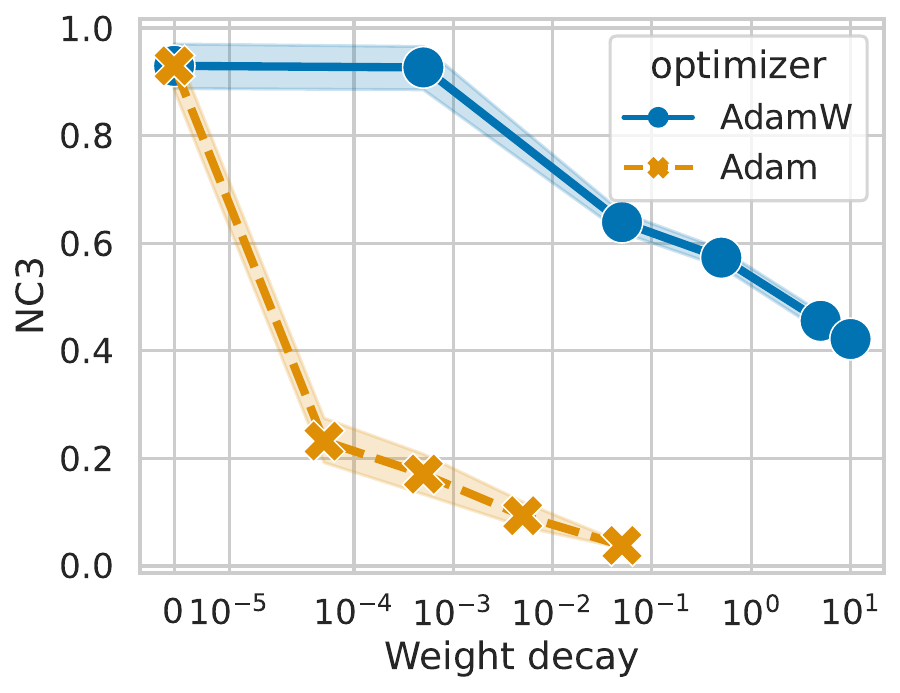} 
    \caption{NC0 (left) and NC3 (right) metrics plotted against weight decay on a VGG9 trained on MNIST for Adam and AdamW. Shaded area refers to one standard deviation across all trainings run with corresponding optimizer.}
    \label{fig:Adam_vs_AdamW_VGG9_mnist}
\end{figure}

\begin{figure}[h]
    \centering
    \includegraphics[width=0.24\linewidth]{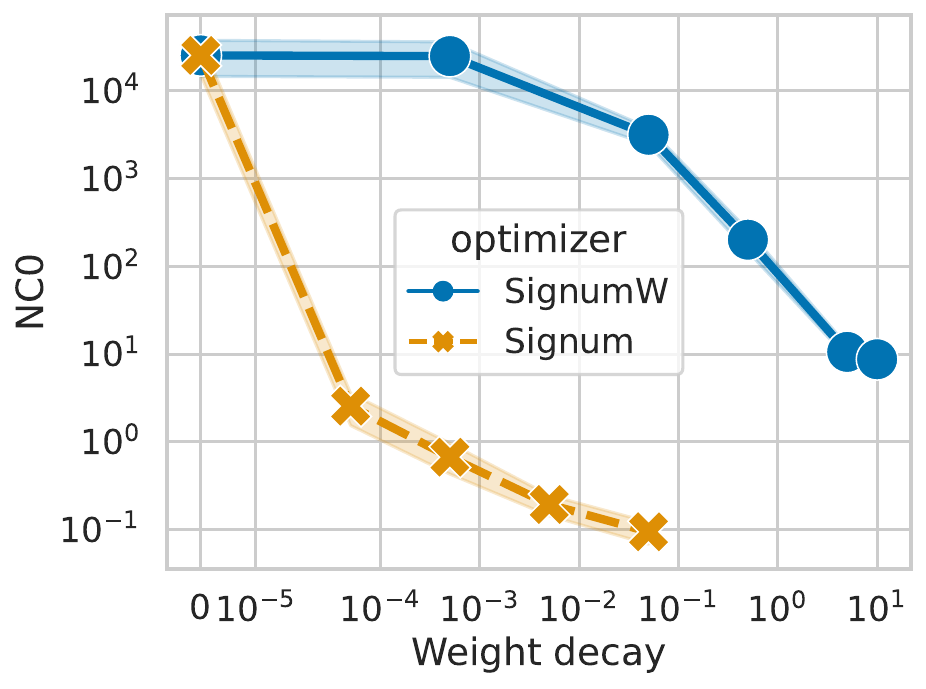}
    \includegraphics[width=0.24\linewidth]{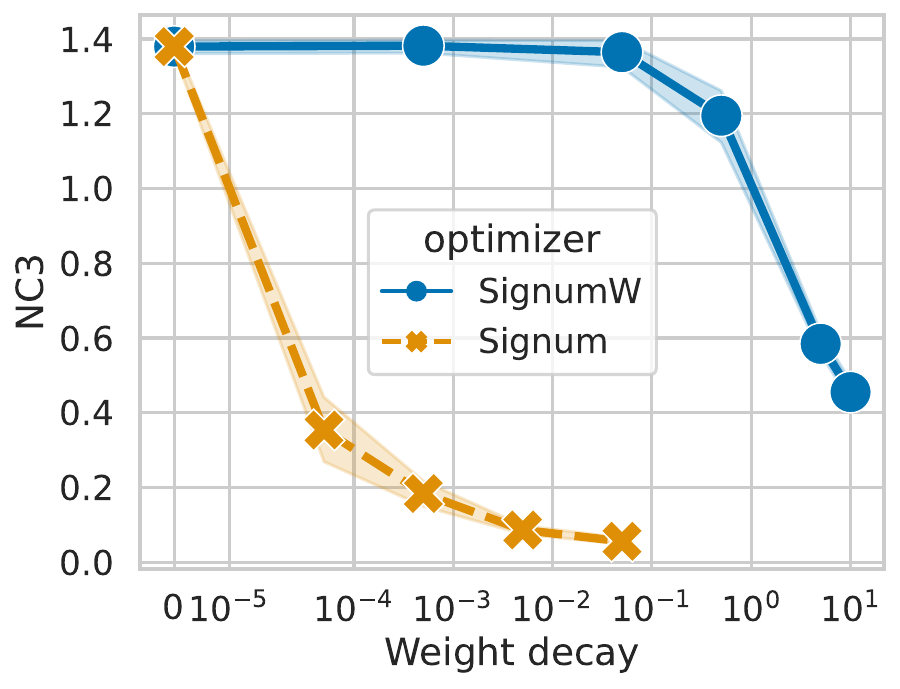} 
    \hspace{0.05cm} \vrule \hspace{0.05cm}
    \includegraphics[width=0.24\linewidth]{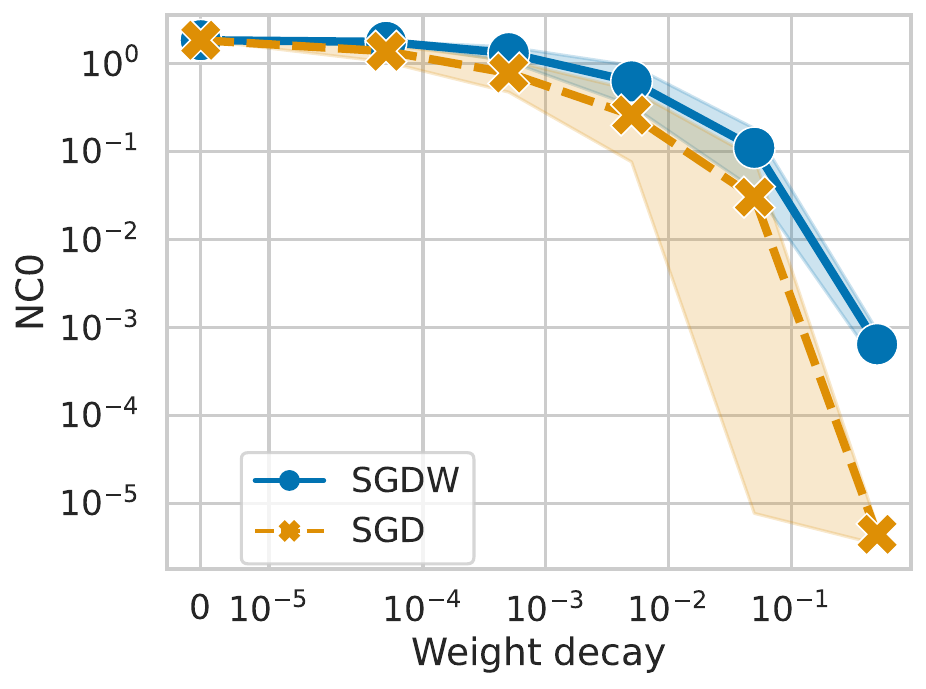}
    \includegraphics[width=0.24\linewidth]{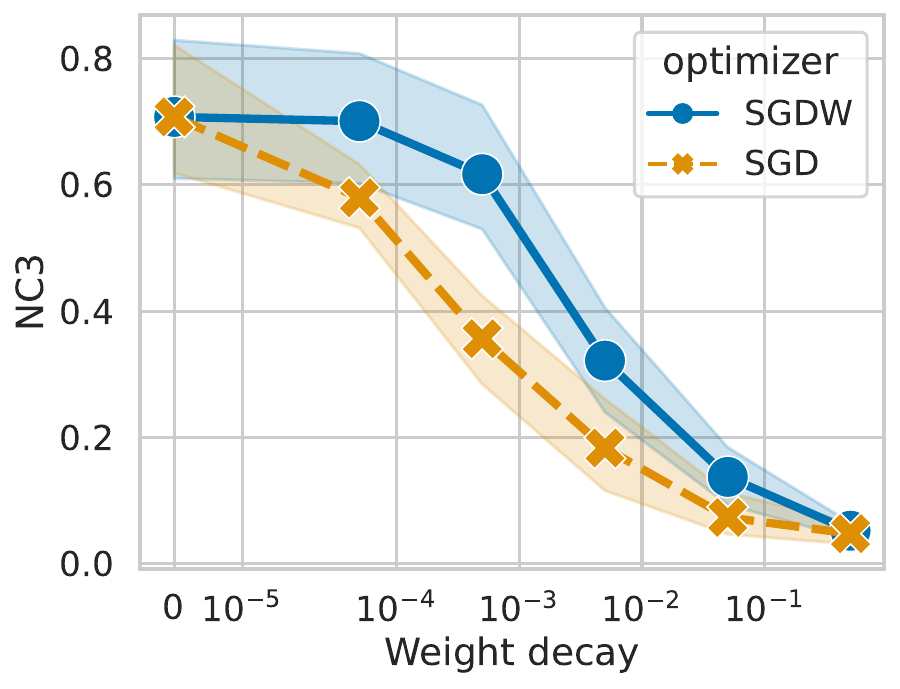}
    \caption{NC0 and NC3 metrics plotted against weight decay on a VGG9 trained on MNIST for Signum and SignumW (left side) and SGD and SGDW (right side). Shaded area refers to one standard deviation across all trainings run with corresponding optimizer.}
    \label{fig:Signum_vs_SignumW_VGG9_mnist}
\end{figure}

\begin{figure}[ht]
    \centering
    \includegraphics[width=0.98\linewidth]{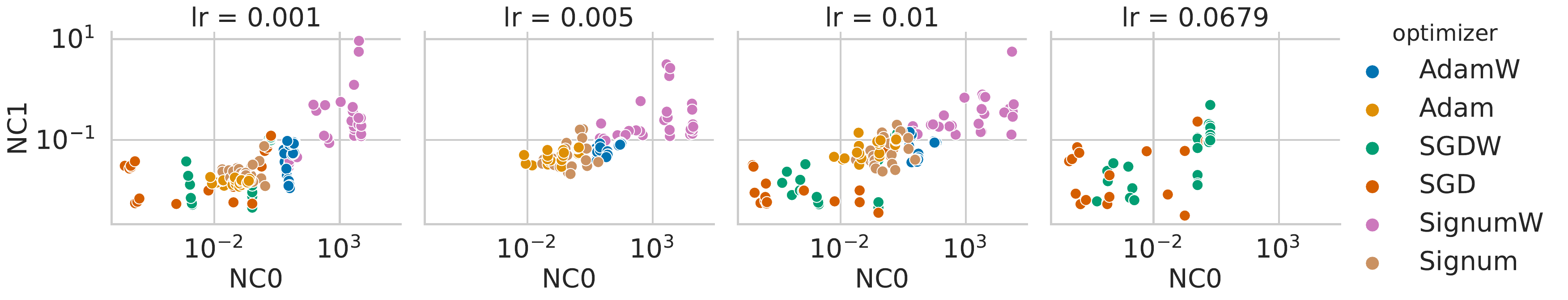}

    \caption{NC0 vs. NC1 on VGG9 trained on MNIST. Note that the x-axis is plotted in log-scale.
    }
    \label{fig:NC0_vs_NC1_VGG9_mnist}
\end{figure}

\begin{figure}[ht]
    \centering
    \includegraphics[width=0.98\linewidth]{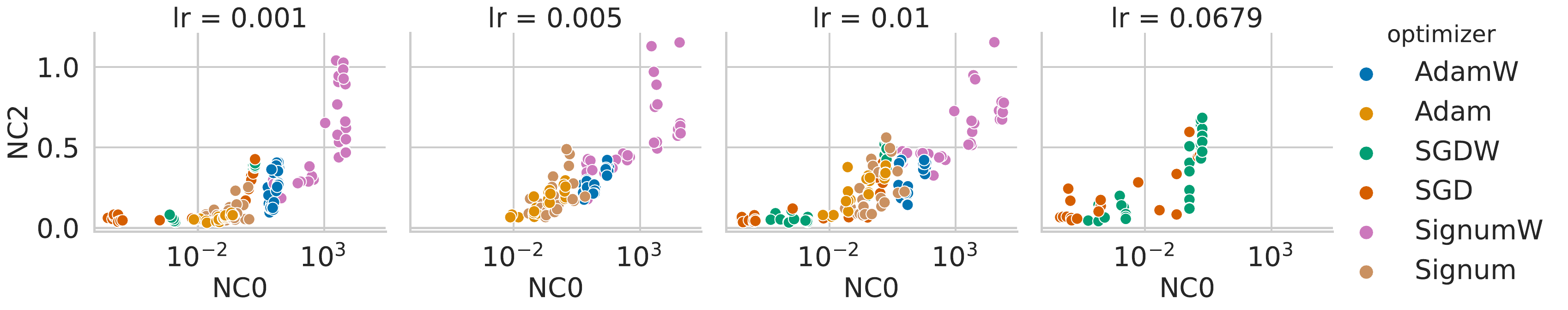}

    \caption{NC0 vs. NC2 on VGG9 trained on MNIST. Note that the x-axis is plotted in log-scale.
    }
    \label{fig:NC0_vs_NC2_VGG9_mnist}
\end{figure}

\begin{figure}[ht]
    \centering
    \includegraphics[width=0.98\linewidth]{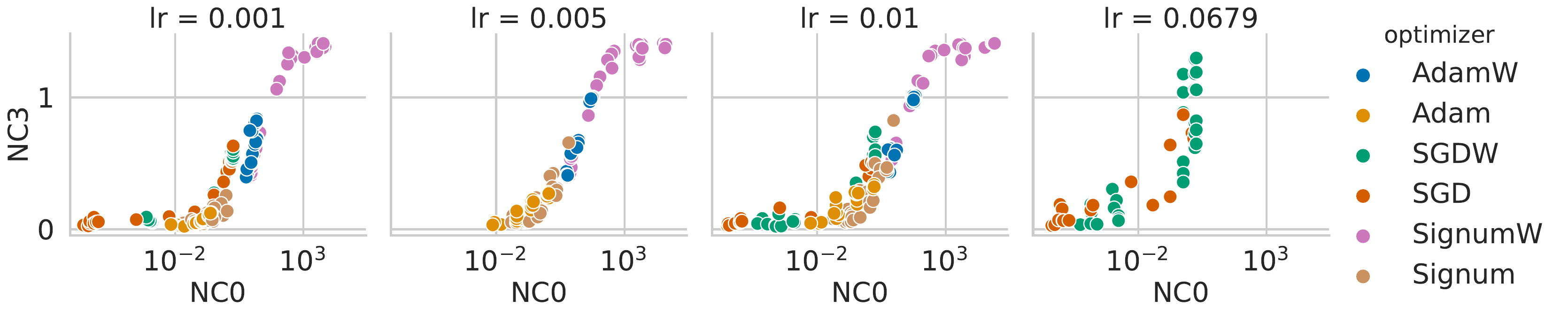}

    \caption{NC0 vs. NC3 on VGG9 trained on MNIST. Note that the x-axis is plotted in log-scale.
    }
    \label{fig:NC0_vs_NC3_VGG9_mnist}
\end{figure}

\begin{figure}[ht]
    \centering
    \includegraphics[width=0.98\linewidth]{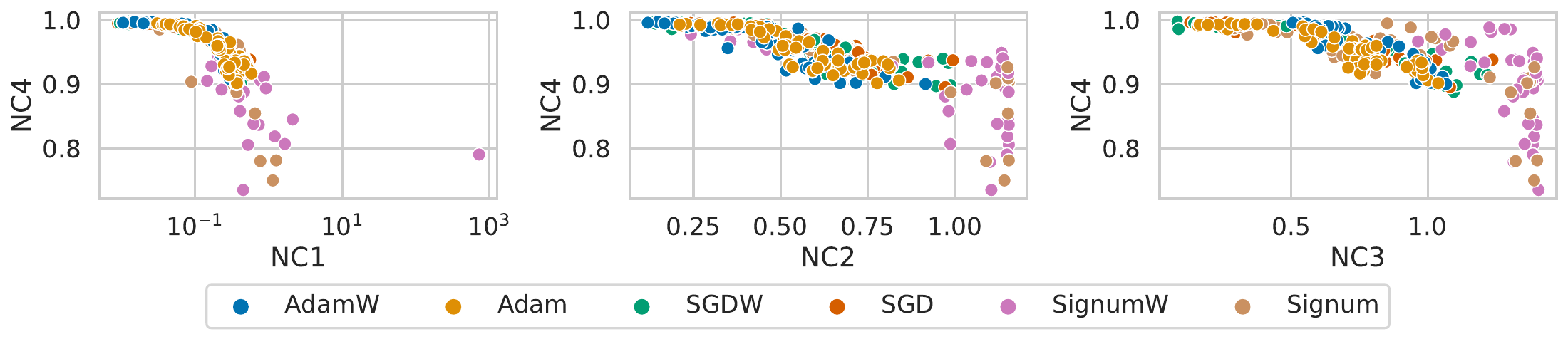}
    \caption{NC4 is largely uncorrelated with NC1-3 across different optimizers and learning rates.}
    \label{fig:NC4}
\end{figure}

\newpage

\subsubsection{Preliminary experimental results on Vision Transformer}\label{subsubsection:experiment:ViT}

We have also conducted preliminary experiments pretraining small Vision Transformers (ViT) on Cifar10 from scratch. Given that training ViTs is computationally much more expensive given the larger size of the model, we had to limit ourselves to a more restricted number of experiments. Specifically, we chose to train the ViT with Adam, AdamW, and SGD for 200 epochs with a batch size of 512 with momentum $\mu$ in the range $\mu \in \{0, 0.8, 0.9, 0.95\}$ and weight decay WD $\in \{0, 1e^{-5}, 1e^{-4},5e^{-4}, 1e^{-3}, 0.05, 0.5\}$ for Adam and SGD and WD $\in \{0, 1e^{-4}, 0.05, 0.5, 1, 2, 4\}$ for AdamW. We discarded all runs, which did not achieve a training accuracy of at least 50\%. This mainly corresponded to training runs of SGD and Adam either with momentum=0 or WD$\geq 0.05$.

The ViT implementation is based on code from \url{https://github.com/tintn/vision-transformer-from-scratch/tree/main}, which is published under the MIT license.
Specifically, the transformer model was chosen with a hidden dimension of 512, 6 hidden layers, and 8 attention heads, with no dropout applied.

Compared to the training procedure used in other settings, we employ a cosine-decay learning rate schedule with warm-up, where $5\%$ of the total training steps are allocated to warm-up, and the base learning rate is set to $1 \times 10^{-3}$. Weight decay is applied to all layers except for LayerNorm and biases, which is standard practice.

The highest final test accuracy across all trainings was achieved by AdamW ($\beta_1 = 0.95, \text{WD}=0.5)$ with $83.67\%$, with a final test loss of $0.895$. Notably, higher accuracy levels can be attained by increasing the network size and applying data augmentation or by using a pre-trained model as in \citet{ammar2024neco}. However, to ensure consistency with the experiments in the main study, we do not perform data augmentation due to limited computational resources. This likely explains the relatively lower test accuracy. Investigating the impact of data augmentation on the convergence to NC remains an interesting avenue for future work.

While we observe the general trend of decreasing NC metrics with increasing values of weight decay for SGD (\cref{fig:NC0_NC3_ViT_SGD}), we note that in the case of ViTs the NC0 metric for both Adam and AdamW first increases before decreasing (\cref{fig:NC0_NC3_ViT_Adam_vs_AdamW}, left), while the NC3 metric for both Adam and AdamW has a U-shape (\cref{fig:NC0_NC3_ViT_Adam_vs_AdamW}, right). 
We also note that the ViT is much more sensitive to the choice of weight decay and the training and validation accuracy degrades quickly due to overregularization, as can be seen in \cref{fig:train_and_val_accuracy_ViT}.
A further investigation of these observations is left for future work. 

\begin{figure}[h!]
    \begin{subfigure}{0.95\textwidth}
    \includegraphics[width=\textwidth]{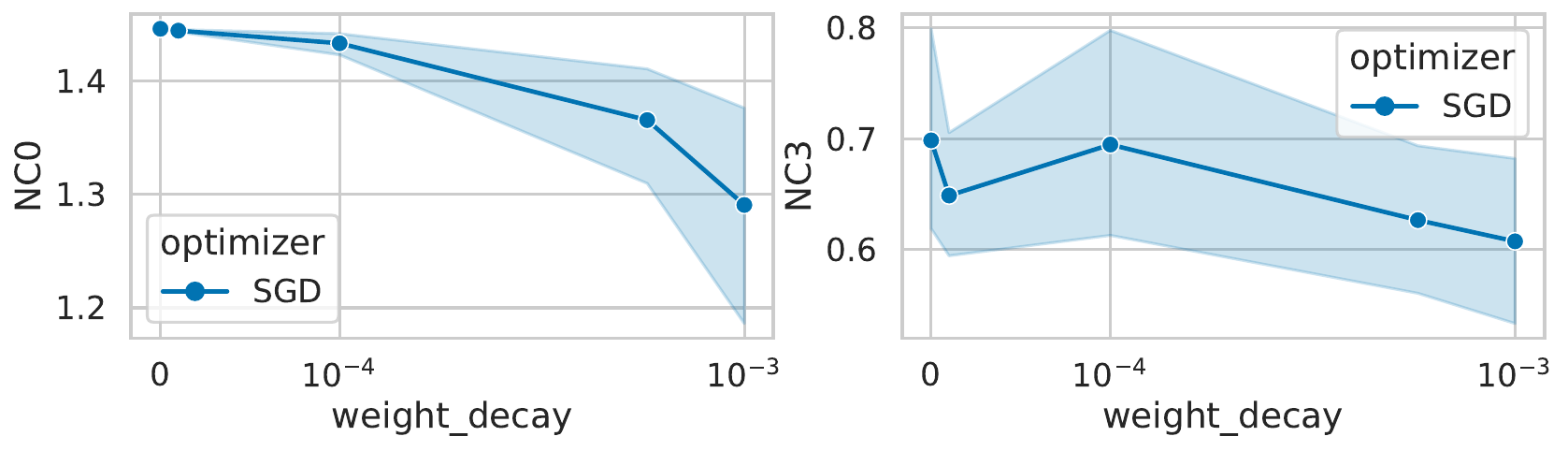}
    \caption{NC0 (left) and NC3 (right) metric for varying values of weight decay on a ViT trained with SGD on Cifar10.}
    \label{fig:NC0_NC3_ViT_SGD}
\end{subfigure}
\hfill
\begin{subfigure}{0.95\textwidth}
    \includegraphics[width=\textwidth]{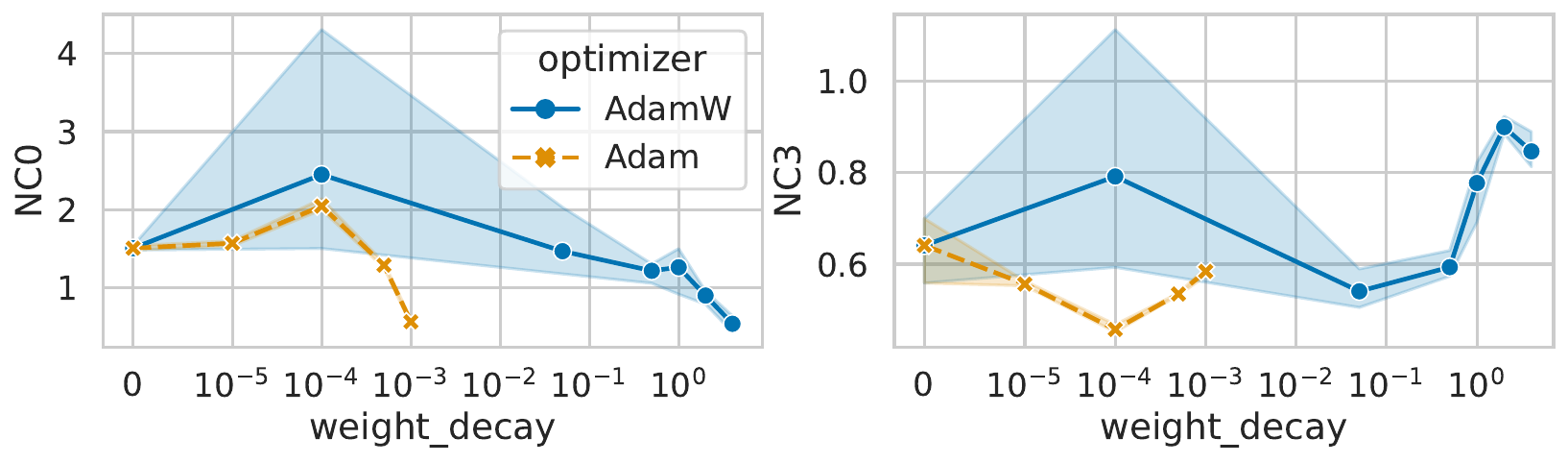}
    \caption{NC0 (left) and NC3 (right) metric for varying values of weight decay on a ViT trained with Adam and AdamW on Cifar10.}
    \label{fig:NC0_NC3_ViT_Adam_vs_AdamW}
\end{subfigure}
\hfill
\begin{subfigure}{0.95\textwidth}
    \includegraphics[width=\textwidth]{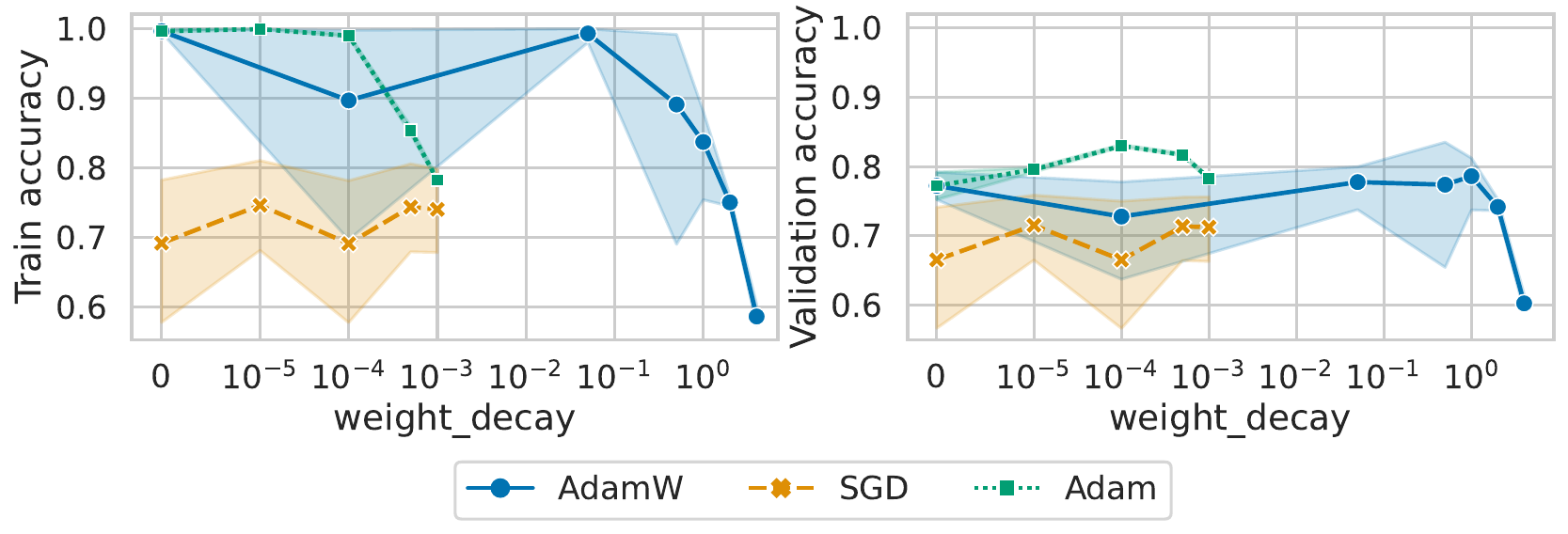}
    \caption{Training accuracy (left) and validation accuracy (right) for varying values of weight decay on a ViT trained on Cifar10.}
    \label{fig:train_and_val_accuracy_ViT}
\end{subfigure}
\end{figure}

\clearpage
\newpage

\section{Proofs} \label{section:proof}
In this section, we will present the proof which is omitted in the main text.

\begin{theorem}[Effect of decoupled SGD update on NC0]\label{theorem:sgd_decoupled_appendix}
Assume a model of the form $f(\W, \theta, x) = \W h_\theta(x)$ is trained using
cross-entropy loss with SGD with decoupled weight decay for all parameters $\W,\theta$. For instance, the last layer weight $\W$ has the following update rule:
\begin{align*}
\begin{split}
    & \V_{t+1} =  \beta \V_t + \nabla_{\W_t} L_{\mathrm{CE}}, \\
    & \W_{t+1} = (1 - \eta\lambda)\,\W_t - \eta \V_{t+1},
\end{split}
\end{align*}
where $\beta \in [0,1)$, $\eta > 0$, and $\lambda \in \mathbb{R}$. Define the
NC0 metric
\[
\alpha_t \coloneqq \frac{1}{K}\bigl\|\W_t^\top \mathbf{1}\bigr\|_2^2.
\]
Then, for all $t \ge 0$,
\[
\alpha_t = (1-\eta\lambda)^{2t} \,\alpha_0.
\]
In particular, if $0 < \eta\lambda < 2$, then $\alpha_t$ decays exponentially to zero:
    \[
        \alpha_t = (1-\eta\lambda)^{2t}\,\alpha_0 \xrightarrow[t\to\infty]{} 0.
    \]
\end{theorem}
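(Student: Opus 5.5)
The plan is to project the whole recursion onto the direction $\mathbf{1}\in\R^K$. Set $\mathbf{u}_t \coloneqq \W_t^\top\mathbf{1}\in\R^P$ and $\mathbf{v}_t \coloneqq \V_t^\top\mathbf{1}$. Then $\alpha_t = \frac{1}{K}\|\mathbf{u}_t\|^2$, and it suffices to show that $\mathbf{u}_{t+1} = (1-\eta\lambda)\mathbf{u}_t$ for all $t$.

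\textbf{Step 1 (key observation).} I will show that $\nabla_{\W} L_{\mathrm{CE}}^\top \mathbf{1} = 0$ for every $\W$, every $\theta$ and every minibatch. For a single sample with logits $\mathbf{z} = \W h_\theta(x)$ and label $y$, the chain rule gives
\[
\nabla_{\W}\ell = (\operatorname{softmax}(\mathbf{z}) - \mathbf{e}_y)\, h_\theta(x)^\top .
\]
Since $\mathbf{1}^\top(\operatorname{softmax}(\mathbf{z})-\mathbf{e}_y) = 1-1 = 0$, we get $(\nabla_{\W}\ell)^\top\mathbf{1} = h_\theta(x)\,(\operatorname{softmax}(\mathbf{z})-\mathbf{e}_y)^\top\mathbf{1} = 0$. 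Summing or averaging over any minibatch preserves this. The argument does not depend on $\theta$, so it does not matter how the backbone is updated at the same time; that is why the statement holds for \emph{all} parameters being trained. The same point is the translation invariance of softmax under $\mathbf{z}\mapsto\mathbf{z}+c\mathbf{1}$.

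\textbf{Step 2 (momentum).} Multiply the momentum recursion by $\mathbf{1}$ after transposing. This gives $\mathbf{v}_{t+1} = \beta\mathbf{v}_t + 0$. I will assume the standard initialization $\V_0 = 0$ (or at least $\V_0^\top\mathbf{1}=0$), and by induction $\mathbf{v}_t = 0$ for all $t$. This is the only place where care is needed. Without this initialization one would get $\mathbf{v}_t = \beta^t\mathbf{v}_0$, which breaks the exact formula. So I will state the zero-initialization convention explicitly, since the claimed identity needs it.

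\textbf{Step 3 (weight recursion).} Transposing the weight update gives
\[
\mathbf{u}_{t+1} = (1-\eta\lambda)\mathbf{u}_t - \eta\mathbf{v}_{t+1} = (1-\eta\lambda)\mathbf{u}_t .
\]
Hence $\mathbf{u}_t = (1-\eta\lambda)^t\mathbf{u}_0$ and $\alpha_t = (1-\eta\lambda)^{2t}\alpha_0$. If $0<\eta\lambda<2$, then $|1-\eta\lambda|<1$, which yields exponential decay to zero.

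I expect no real obstacle beyond justifying Step 1 cleanly, including for stochastic minibatches, and making the $\V_0$ convention explicit in Step 2.
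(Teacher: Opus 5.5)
Your proposal is correct and follows the paper's proof step for step: the zero column-sum property of the cross-entropy gradient (Lemma~\ref{lemma:CE}), the induction $\V_t^\top\mathbf{1}=\beta^t\V_0^\top\mathbf{1}=\mathbf{0}$, and the resulting recursion $\W_{t+1}^\top\mathbf{1}=(1-\eta\lambda)\W_t^\top\mathbf{1}$. You are right to flag the $\V_0=\mathbf{0}$ convention explicitly. The paper's proof also relies on it, even though it is not stated in the theorem.
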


\begin{proof}
We track the evolution of the \emph{row sums} of $\W_t$ and $\V_t$. Define
\[
\mathbf{m}_t \coloneqq \W_t^\top \mathbf{1} \in \mathbb{R}^K,
\qquad
\mathbf{q}_t \coloneqq \V_t^\top \mathbf{1} \in \mathbb{R}^K.
\]
By definition of $\alpha_t$ we have
\[
\alpha_t = \frac{1}{K} \|\mathbf{m}_t\|_2^2.
\]

Note that by Lemma~\ref{lemma:CE}, the cross-entropy gradient with respect to
the last layer satisfies
\[
\bigl(\nabla_{\W_t} L_{\mathrm{CE}}\bigr)^\top \mathbf{1} = \mathbf{0}
\]
for all $\W_t$. 
Consider the momentum update
\[
\V_{t+1} = \beta \V_t + \nabla_{\W_t} L_{\mathrm{CE}}.
\]
Multiplying on the right by $\mathbf{1}$ and using the above result, we obtain
\[
\mathbf{q}_{t+1}
= \V_{t+1}^\top \mathbf{1}
= \bigl(\beta \V_t + \nabla_{\W_t} L_{\mathrm{CE}}\bigr)^\top \mathbf{1}
= \beta \V_t^\top \mathbf{1} + \bigl(\nabla_{\W_t} L_{\mathrm{CE}}\bigr)^\top \mathbf{1}
= \beta \mathbf{q}_t + \mathbf{0}.
\]
Thus $\mathbf{q}_{t+1} = \beta \mathbf{q}_t$, and by induction
\[
\mathbf{q}_t = \beta^t \mathbf{q}_0.
\]
Since $\V_0 = \mathbf{0}$, we have
$\mathbf{q}_0 = \mathbf{0}$, hence
\[
\mathbf{q}_t = \mathbf{0} \qquad \text{for all } t \ge 0.
\]
Consider now the decoupled weight update
\[
\W_{t+1} = (1-\eta\lambda)\,\W_t - \eta \V_{t+1}.
\]
Multiplying on the right by $\mathbf{1}$ gives
\[
\mathbf{m}_{t+1}
= \W_{t+1}^\top \mathbf{1}
= \bigl((1-\eta\lambda)\W_t - \eta \V_{t+1}\bigr)^\top \mathbf{1}
= (1-\eta\lambda)\W_t^\top \mathbf{1} - \eta \V_{t+1}^\top \mathbf{1}
= (1-\eta\lambda)\mathbf{m}_t - \eta \mathbf{q}_{t+1}.
\]
Using $\mathbf{q}_{t+1} = \mathbf{0}$ for all $t$, we obtain the simple linear
recursion
\[
\mathbf{m}_{t+1} = (1-\eta\lambda)\,\mathbf{m}_t.
\]
Solving this recursion yields
\[
\mathbf{m}_t = (1-\eta\lambda)^t \mathbf{m}_0.
\]
Substituting the expression for $\mathbf{m}_t$ into the definition of
$\alpha_t$ gives
\[
\alpha_t
= \frac{1}{K} \|\mathbf{m}_t\|_2^2
= \frac{1}{K} \bigl\|(1-\eta\lambda)^t \mathbf{m}_0\bigr\|_2^2
= (1-\eta\lambda)^{2t} \frac{1}{K} \|\mathbf{m}_0\|_2^2
= (1-\eta\lambda)^{2t} \alpha_0.
\]
This establishes the exact formula claimed in the theorem.

\end{proof}

\begin{theorem}[Effect of SGD update with coupled weight decay on NC0]
\label{theorem:sgd_coupled_rowsum}

Assume a model of the form $f(\W, \theta, x) = \W h_\theta(x)$ is trained using cross-entropy loss with stochastic gradient descent (SGD) and momentum $\beta \in [0,1)$, weight decay $\lambda \in [0,1)$, and learning rate $\eta > 0$ sufficiently small. The last-layer weights $\W_t$ are updated according to:
\begin{align} \label{eq:SGD-coupled}
\begin{split}
    & \V_{t+1} =  \beta \V_t + \nabla_{\W_t} L_{\mathrm{CE}} + \lambda \W_t, \\
    & \W_{t+1} = \W_t - \eta \V_{t+1},
\end{split}
\end{align}
where $\beta \in [0,1)$, $\eta > 0$, and $\lambda \in \mathbb{R}$.
Then there exists a constant $C \ge 1$ such that
\begin{equation}
\label{eq:alpha-bound-rho}
\alpha_t
= \frac{1}{K}\bigl\|\mathbf{m}_t\bigr\|_2^2
\;\le\;
C\,\rho^{2t}\,\alpha_0
\qquad\text{for all } t \ge 0,
\end{equation}
where $\rho \coloneqq \max\{|r_+|,|r_-|\}$ and $r_\pm$ are the roots of
\begin{equation}
\label{eq:char-poly}
r^2 - (1+\beta - \eta\lambda)\,r + \beta = 0.
\end{equation}

In particular: if $\eta\lambda < 2(1+\beta)$, then $\rho<1$ and the NC0 metric $\alpha_t$ decays exponentially in $t$.
\end{theorem}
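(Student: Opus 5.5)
Following the proof of \cref{theorem:sgd_decoupled_appendix}, we track the row sums $\mathbf{m}_t \coloneqq \W_t^\top\mathbf{1}$ and $\mathbf{q}_t \coloneqq \V_t^\top\mathbf{1}$. Multiplying the coupled update \eqref{eq:SGD-coupled} on the right by $\mathbf{1}$ and using Lemma~\ref{lemma:CE}, i.e.\ $(\nabla_{\W_t}L_{\mathrm{CE}})^\top\mathbf{1}=\mathbf{0}$, gives the closed linear system
\[
\mathbf{q}_{t+1} = \beta\mathbf{q}_t + \lambda\mathbf{m}_t,\qquad \mathbf{m}_{t+1} = \mathbf{m}_t - \eta\mathbf{q}_{t+1} = (1-\eta\lambda)\mathbf{m}_t - \eta\beta\mathbf{q}_t .
\]
The loss no longer appears, so the NC0 dynamics decouple from the rest of training. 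This holds for any backbone and any (stochastic) minibatch, since the zero-row-sum property holds per sample.

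Next, eliminate $\mathbf{q}_t$. From the second equation, $\eta\mathbf{q}_{t+1} = \mathbf{m}_t - \mathbf{m}_{t+1}$. Substituting this into the first equation (multiplied by $\eta$) yields the scalar-coefficient second-order recursion
\[
\mathbf{m}_{t+2} - (1+\beta-\eta\lambda)\mathbf{m}_{t+1} + \beta\mathbf{m}_t = \mathbf{0},
\]
whose characteristic polynomial is exactly \eqref{eq:char-poly}. Equivalently, $(\mathbf{m}_{t},\eta\mathbf{q}_{t})$ evolves by $A\otimes\I_K$ with
\[
A=\begin{pmatrix}1-\eta\lambda & -\beta\\ \eta\lambda & \beta\end{pmatrix},
\]
where $\operatorname{tr}A = 1+\beta-\eta\lambda$ and $\det A=\beta$.

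We then bound $A^t$. If $r_+\neq r_-$, diagonalize $A$, so $\mathbf{m}_t = c_+ r_+^t + c_- r_-^t$ with coefficient vectors determined by $\mathbf{m}_0$ and $\mathbf{q}_0=\mathbf{0}$ (taking $\V_0=\mathbf{0}$). Then $\mathbf{m}_1=(1-\eta\lambda)\mathbf{m}_0$, and the coefficients are linear in $\mathbf{m}_0$ with norm at most a constant depending only on $\beta,\eta\lambda$. This gives $\|\mathbf{m}_t\|\le \sqrt{C}\rho^t\|\mathbf{m}_0\|$, and squaring yields \eqref{eq:alpha-bound-rho}.

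The main obstacle is the repeated-root case $r_+=r_-$, where the Jordan block introduces a factor $t$. Here I would either absorb $t\rho^t\le C'\tilde\rho^t$ with a slightly larger $\tilde\rho<1$, or note that $\rho^{2t}$ with an extra polynomial factor is still exponential decay; I would state the constant accordingly. The paper's $C\ge1$ suggests the distinct-root case is the intended one. Alternatively, I would avoid case splits by using a Jury/Schur stability argument together with a norm adapted to $A$.

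Finally, we verify $\rho<1$ iff $\eta\lambda<2(1+\beta)$ (given $\eta\lambda>0$, $0\le\beta<1$) via the Jury conditions for $p(r)=r^2-(1+\beta-\eta\lambda)r+\beta$. The condition $|\beta|<1$ holds by assumption. We have $p(1)=\eta\lambda>0$. And $p(-1)=2(1+\beta)-\eta\lambda>0$ is exactly the stated condition. This gives exponential decay of $\alpha_t$.
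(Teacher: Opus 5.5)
Your proposal is correct and is essentially the paper's proof. It uses the same row-sum recursion for $(\mathbf{m}_t,\mathbf{q}_t)$ via Lemma~\ref{lemma:CE}, the same elimination to the second-order recursion with characteristic polynomial \eqref{eq:char-poly}, the same closed form fixed by $\mathbf{m}_1=(1-\eta\lambda)\mathbf{m}_0$, and the same Jury conditions. Your worry about the repeated root is justified and more careful than the paper, which silently assumes $r_+\neq r_-$: at $\eta\lambda=(1\mp\sqrt\beta)^2$ with $\beta>0$ one gets $\mathbf{m}_t=(1+ct)(\pm\sqrt\beta)^t\mathbf{m}_0$ with $c=1\mp\sqrt\beta\neq0$, so no constant $C$ rescues the bound with exactly $\rho$. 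Your first fix, replacing $\rho$ by any $\tilde\rho\in(\rho,1)$, is the right one; adjusting the constant is not, and these points are excluded only if \emph{sufficiently small} $\eta$ means $\eta\lambda<(1-\sqrt\beta)^2$.
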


\begin{proof}
We follow the same strategy as in the decoupled case: track the evolution
of the row sums of $\V_t$ and $\W_t$.

From \eqref{eq:SGD-coupled},
\[
\V_{t+1} = \beta \V_t + \nabla_{\W_t} L_{\mathrm{CE}} + \lambda \W_t.
\]
Right-multiplying by $\mathbf{1}$ and using Lemma \eqref{lemma:CE}, we get
\begin{align*}
\mathbf{q}_{t+1}
&= \V_{t+1}^\top \mathbf{1}\\
&= \bigl(\beta \V_t + \nabla_{\W_t} L_{\mathrm{CE}} + \lambda \W_t\bigr)^\top \mathbf{1}\\
&= \beta \V_t^\top \mathbf{1}
   + \bigl(\nabla_{\W_t} L_{\mathrm{CE}}\bigr)^\top \mathbf{1}
   + \lambda \W_t^\top \mathbf{1}\\
&= \beta\,\mathbf{q}_t + \lambda\,\mathbf{m}_t.
\end{align*}
Thus
\begin{equation}
\label{eq:q-rec}
\mathbf{q}_{t+1} = \beta\,\mathbf{q}_t + \lambda\,\mathbf{m}_t.
\end{equation}

From the weight update
\[
\W_{t+1} = \W_t - \eta \V_{t+1},
\]
we obtain
\begin{align*}
\mathbf{m}_{t+1}
&= \W_{t+1}^\top \mathbf{1}
 = \bigl(\W_t - \eta \V_{t+1}\bigr)^\top \mathbf{1}\\
&= \W_t^\top \mathbf{1} - \eta \V_{t+1}^\top \mathbf{1}
 = \mathbf{m}_t - \eta\,\mathbf{q}_{t+1}.
\end{align*}
Using \eqref{eq:q-rec} this becomes
\begin{equation}
\label{eq:m-first-order}
\mathbf{m}_{t+1}
= \mathbf{m}_t - \eta\bigl(\beta\,\mathbf{q}_t + \lambda\,\mathbf{m}_t\bigr)
= (1-\eta\lambda)\,\mathbf{m}_t - \eta\beta\,\mathbf{q}_t.
\end{equation}

We also have, from the weight update at time $t$,
\[
\mathbf{m}_t = \mathbf{m}_{t-1} - \eta\,\mathbf{q}_t,
\]
which is just \eqref{eq:m-first-order} with index shifted by one.
Hence
\begin{equation}
\label{eq:q-from-m}
\mathbf{q}_t = \frac{1}{\eta}\bigl(\mathbf{m}_{t-1} - \mathbf{m}_t\bigr).
\end{equation}
Substitute \eqref{eq:q-from-m} into \eqref{eq:m-first-order}:
\begin{align*}
\mathbf{m}_{t+1}
&= (1-\eta\lambda)\,\mathbf{m}_t
   - \eta\beta \cdot \frac{1}{\eta}\bigl(\mathbf{m}_{t-1} - \mathbf{m}_t\bigr)\\
&= (1-\eta\lambda)\,\mathbf{m}_t
   - \beta\bigl(\mathbf{m}_{t-1} - \mathbf{m}_t\bigr)\\
&= (1-\eta\lambda)\,\mathbf{m}_t - \beta\mathbf{m}_{t-1} + \beta\mathbf{m}_t\\
&= (1+\beta - \eta\lambda)\,\mathbf{m}_t - \beta\,\mathbf{m}_{t-1}.
\end{align*}

We are given $\mathbf{m}_0 = \W_0^\top \mathbf{1}$ and
$\mathbf{q}_0 = \V_0^\top \mathbf{1} = \mathbf{0}$.
Then
\[
\mathbf{q}_1 = \beta \mathbf{q}_0 + \lambda \mathbf{m}_0 = \lambda \mathbf{m}_0,
\]
and hence from the weight update
\[
\mathbf{m}_1
= \mathbf{m}_0 - \eta \mathbf{q}_1
= \mathbf{m}_0 - \eta \lambda \mathbf{m}_0
= (1-\eta\lambda)\,\mathbf{m}_0.
\]

The recurrence is linear and homogeneous with
constant coefficients. For each coordinate of $\mathbf{m}_t$, say
$(\mathbf{m}_t)_k$, we have a scalar second-order recursion
\[
(\mathbf{m}_{t+1})_k
= (1+\beta - \eta\lambda)\,(\mathbf{m}_t)_k
  - \beta\,(\mathbf{m}_{t-1})_k.
\]
The characteristic polynomial is
\[
r^2 - (1+\beta - \eta\lambda)\,r + \beta = 0,
\]
with roots $r_+$ and $r_-$ given by
\[
r_{\pm}
= \frac{1+\beta - \eta\lambda
       \pm \sqrt{(1+\beta - \eta\lambda)^2 - 4\beta}}{2}.
\]
Thus each coordinate can be written as
\[
(\mathbf{m}_t)_k = c_{+,k} r_+^t + c_{-,k} r_-^t,
\]
for some coefficients $c_{+,k},c_{-,k}$ determined by
$(\mathbf{m}_0)_k$ and $(\mathbf{m}_1)_k$.
Let
\[
\rho \coloneqq \max\{|r_+|,|r_-|\}
\]
be the spectral radius of the recursion. Then there exists a constant
$C\ge 1$ (depending only on $\beta,\lambda,\eta$) such that
\[
\bigl\|\mathbf{m}_t\bigr\|_2
\;\le\; C\,\rho^t\,\bigl\|\mathbf{m}_0\bigr\|_2,
\]
and therefore
\[
\alpha_t
= \frac{1}{K}\bigl\|\mathbf{m}_t\bigr\|_2^2
\;\le\;
C^2\,\rho^{2t}\,\frac{1}{K}\bigl\|\mathbf{m}_0\bigr\|_2^2
= C'\,\rho^{2t}\,\alpha_0
\]
for some $C'\ge 1$, which is \eqref{eq:alpha-bound-rho}.
Finally, for a general quadratic equation $r^2+br+c=0$, the roots are in the unit circle if $|c|<1$, $1+b+c>0$ and $1-b+c>0$.
Thus it is not difficult to check from the characteristic polynomial that $\eta\lambda < 2(1+\beta)$ implies $\rho < 1$. 

\end{proof}

Note that the above Theorem holds for any model $f(\W, \theta, x) = \W h_\theta(x)$ with last layer as linear classifier and with any backbone $h_\theta$ parameterized by $\theta$. 

However, the dynamics of Adam is more complicated, hence we further restrict the setting to SignGD, a special case of Adam, training a UFM.

Here, we assume a balanced dataset with only one element in each class $k\in[K]$. It is obvious to extend our result to multiple elements per class. Hence the total input $N=K$ is equal to the number of classes and the UFM loss can be written as
$$
    L_\text{CE}(\W\H,\I) = \sum_{n=1}^N  L_\text{CE}(\W\textbf{h}_n,\mathbf{e}_n),
$$
where we can decouple the regularization $\frac{\lambda}{2} \|\W\|^2 + \frac{\lambda}{2} \|\H\|^2$ into weight decay.

By \cite{zhu2021geometric}, we know that the UFM 
\begin{equation*} %
    \max_{\W,\H} \sum_{n=1}^NL_\text{CE}(\W \mathbf{h}_n, \y_n) + \frac{\lambda}{2} \|\W\|^2 + \frac{\lambda}{2} \|\H\|^2,
\end{equation*}
has unique global minimum $\W,\H$ and no strict saddle points.  
In particular, $\H= \mathbf{U} \M^*$ for some orthogonal matrix $\mathbf{U}\in O(P)$. To further simplify the analysis, we assume that $P=N=K$ with $\H=\M^*$. Then we have the followings: 

\begin{theorem} %
    Consider sign GD with (decoupled) weight decay $\lambda>0$ and step size $\eta>0$ on the UFM loss
    $$
    L_\text{CE}(\W\H,\I) = \sum_{n=1}^N  L_\text{CE}(\W\textbf{h}_n,\mathbf{e}_n),
    $$
    where the feature $\H=\M^*$ is fixed to an NC solution and only the weight $\W$ is trained:
    \[
    \W_{t+1} = \W_t - \eta (\text{sign}(\nabla_{\W_t}L_\text{CE}) + \lambda \W_t) 
    \]
    with initialization $\W_0 = 0 \in\R^{K\times K}$. 
    We define the covariance matrix
    \(
    \mathbf{C}_t = \mathbf{W}_t \mathbf{W}_t^\top
    \)
    and the scalar
    \(
    \mathcolor{blue}{\alpha_t = \langle \mathbf{C}_t, \hat{\mathbf{J}}\rangle_F
    \quad\text{where}\quad
    \hat{\mathbf{J}} =\frac{1}{K} \mathbf{1} \mathbf{1}^\top.}
    \)
    Then $\alpha_t$ will increase monotonically from zero to the limit:
    \[
    \lim_{t\to\infty}\alpha_{t}=\frac{(K-2)^2}{\lambda^2}.
    \] 
    In particular, $\alpha_t$ does not vanish as $t\to\infty$.
\end{theorem}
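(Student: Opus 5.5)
The plan is to show that the signed gradient $\mathbf{S}_t\coloneqq\text{sign}(\nabla_{\W_t}L_\text{CE})$ is the same matrix $\mathbf{S}$ for every $t$. Once this holds, the recursion is affine with constant forcing and can be solved in closed form. I would argue by induction on $t$ with the hypothesis that $\W_t = c_t(2\I-\J)$ for some scalar $c_t\ge 0$. The base case is $c_0=0$.

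\textbf{Step 1 (structure of the gradient).} For the UFM with $\H=\M^*$ and one sample per class, the CE gradient is $\nabla_{\W}L_\text{CE}=(\mathbf{P}(\W\H)-\I)\H^\top$. Here $\mathbf{P}(\cdot)$ is the column-wise softmax, and $\H^\top=\M^*$ because $\M^*$ is symmetric. Under the induction hypothesis, the relation $\J\M^*=0$ gives
\[
\W_t\H=c_t(2\I-\J)\M^*=2c_t\M^*=\tfrac{2c_t}{\sqrt{K-1}}\bigl(\I-\tfrac1K\J\bigr).
\]
All diagonal logits are therefore equal, and all off-diagonal logits are equal and smaller. Hence $\mathbf{P}=(p-q)\I+q\J$, with $p\ge 1/K\ge q>0$ and $p+(K-1)q=1$. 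This gives $\mathbf{P}-\I=(p-q-1)\I+q\J$. Multiplying by $\M^*$ kills the $\J$ term, so
\[
\nabla_{\W_t}L_\text{CE}=\tfrac{p-q-1}{\sqrt{K-1}}\bigl(\I-\tfrac1K\J\bigr),
\qquad p-q-1<0 .
\]
The gradient thus has negative diagonal and positive off-diagonal entries, so $\mathbf{S}_t=\J-2\I$ regardless of $c_t$. At $t=0$ this uses $p=q=1/K$, and the prefactor $-1/\sqrt{K-1}$ is nonzero. I expect this step to be the main point to check carefully. The subtleties are the sign of $p-q-1$, and the requirement that no entry of the gradient is exactly zero, so that the sign is well defined.

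\textbf{Step 2 (solve the recursion).} With $\mathbf{S}=\J-2\I$, the update becomes $\W_{t+1}=(1-\eta\lambda)\W_t+\eta(2\I-\J)$. This preserves the form $\W_{t+1}=c_{t+1}(2\I-\J)$ with $c_{t+1}=(1-\eta\lambda)c_t+\eta$, which closes the induction. Starting from $c_0=0$, the solution is
\[
c_t=\frac{1-(1-\eta\lambda)^t}{\lambda}.
\]
For $0<\eta\lambda<1$ (the regime I would assume, since the step size is small), $c_t\ge0$ and $c_t$ increases monotonically to $1/\lambda$. For $1<\eta\lambda<2$ one still has $c_t\to 1/\lambda$, but the approach oscillates, and nonnegativity of $c_t$ must then be checked separately. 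Step 1 in fact only needs $c_t\ge 0$, and even that is not essential.

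\textbf{Step 3 (compute $\alpha_t$).} We have $\W_t^\top\mathbf 1=c_t(2-K)\mathbf 1$. Therefore
\[
\alpha_t=\langle \W_t\W_t^\top,\hat{\J}\rangle_F=\tfrac1K\|\W_t^\top\mathbf 1\|^2=c_t^2(K-2)^2 .
\]
This quantity starts at $0$, increases monotonically whenever $c_t$ does, and converges to $(K-2)^2/\lambda^2$, which is nonzero for $K\ge3$.
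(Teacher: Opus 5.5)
Your proposal is correct and follows the paper's route: show the signed gradient is the constant matrix $\mathbf{J}-2\mathbf{I}$, reduce to the scalar recursion $c_{t+1}=(1-\eta\lambda)c_t+\eta$ for $\mathbf{W}_t=c_t(2\mathbf{I}-\mathbf{J})$, and read off $\alpha_t=(K-2)^2c_t^2$. Two of your choices make the argument tighter than the paper's. First, your induction on the form of $\mathbf{W}_t$ is needed. The paper instead claims the constant sign for every $\mathbf{W}$, using $(\mathrm{softmax}(\mathbf{W}\mathbf{H})-\mathbf{I})\mathbf{J}=0$; that identity holds only when the softmax matrix is doubly stochastic, which is true along your trajectory but not in general. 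Second, your caveats are genuinely required and are left implicit in the paper: $0<\eta\lambda\le 1$ for monotone convergence, and $K\ge 3$ for the limit to be nonzero.
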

\begin{proof}\label{proof:signsgd:decoupled_appdx}
By Lemma \ref{lemma:CE}, we have $\nabla L_\text{CE}(\W)=\frac1N (\mathbf{S}-\Y)\H^\top = \frac1N (\mathbf{S}-\I)\cdot \frac{1}{\sqrt{K-1}} (\I - \frac{1}{K}\J) = \frac{1}{N\sqrt{K-1}}(\text{softmax}(\W\H)-\I)$ since $(\text{softmax}(\W\H)-\I)\J=0$. Since softmax has range between 0 and 1, we have 
\[
\text{sign}\left(\nabla L_\text{CE}(\W\H)\right) = \J - 2\I,
\]
that is, the signed gradient is $-1$ on the diagonal and $+1$ elsewhere.
Note that this holds for all $\W\in\R^{K\times K}$. The sign GD updates can hence be written as:
\begin{align} \label{eq:sign_SGD}
\begin{split}
& \W_{t+1}
=
\W_{t} 
-
\eta
\Bigl[
\underbrace{\J -2\I}_{\text{sign}(\nabla_{\W_t}L_\text{CE})}
+
\lambda \W_t
\Bigr].
\end{split}
\end{align}

Since \(\text{sign}\bigl(\nabla L_{\text{CE}}(\mathbf{W}_t)\bigr)\) is constant, the dynamics collapse onto a scalar \(w_t\):
\[
\mathbf{W}_t = w_t \bigl(\mathbf{J}-2 \mathbf{I}\bigr),
\]
which has the following recursive form:
\[
w_{t+1} = (1 - \eta\lambda) w_t - \eta,
\quad
w_0 = 0.
\]
Solve it and obtain
\[
w_t
= -\frac{1}{\lambda}\Bigl[ 1 - (1-\eta\lambda)^t\Bigr].
\]
Recall the  definition: 
\[
\mathbf{C}_t = \mathbf{W}_t \mathbf{W}_t^\top
\quad
\hat{\mathbf{J}} =\frac{1}{K} \mathbf{1} \mathbf{1}^\top \text{and}\quad
\alpha_t = \langle \mathbf{C}_t, \hat{\mathbf{J}}\rangle_F.
\]
Since \(\|\bigl(\mathbf{J}-2 \mathbf{I}\bigr)^\top \mathbf{1}\|^2 = (K-2)^2 K\) and the factor of \(1/K\) gives \((K-2)^2\), we have
\[
\alpha_t = (K - 2)^2 w_t^2
\]
Therefore
\[
\alpha_t 
=(K-2)^2 \left[-\frac{1}{\lambda} \Bigl(1 - (1-\eta\lambda)^t\Bigr)\right]^2
=\frac{(K-2)^2}{\lambda^2} \Bigl[ 1 - \bigl(1-\eta\lambda\bigr)^t\Bigr]^2.
\]
As \(t\to\infty\), \(\bigl(1-\eta\lambda\bigr)^t \to 0,\) so
\[
\alpha_\infty 
=\frac{(K-2)^2}{\lambda^2}.
\]  
\end{proof}

\begin{theorem} %
\textcolor{blue}{
    Consider sign GD with (coupled) weight decay $\lambda>0$ and step size $\eta>0$ on the UFM loss
    $$
    L_\text{CE}(\W\H,\I) = \sum_{n=1}^N  L_\text{CE}(\W\textbf{h}_n,\mathbf{e}_n),
    $$
    where the feature $\H=\M^*$ is fixed to an NC solution and only the weight $\W$ is trained
    :
    \[
    \W_{t+1} = \W_t - \eta (\text{sign}(\nabla_{\W_t}L_\text{CE} + \lambda \W_t)) 
    \]
    with initialization $\W_0 = 0 \in\R^{K\times K}$. 
    We define the covariance matrix
    \(
    \mathbf{C}_t = \mathbf{W}_t \mathbf{W}_t^\top
    \)
    and the scalar
    \(
    \alpha_t = \langle \mathbf{C}_t, \hat{\mathbf{J}}\rangle_F
    \quad\text{where}\quad
    \hat{\mathbf{J}} =\frac{1}{K} \mathbf{1} \mathbf{1}^\top.
    \)
    Then there exists a learning rate decay scheme $\eta=\eta(t)\xrightarrow[t\to\infty]{}0$ such that $\alpha_t\xrightarrow[t\to\infty]{}0$.
    }
    
\end{theorem}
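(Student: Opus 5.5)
The plan is to avoid tracking the sign dynamics entry by entry. Coupled SignGD is exactly sign descent on the \emph{regularized} objective
\[
F(\W) \;=\; L_\text{CE}(\W\M^*,\I) + \tfrac{\lambda}{2}\|\W\|^2 ,
\]
since $\nabla F(\W)=\nabla_{\W}L_\text{CE}+\lambda\W$. Here $F$ is $\lambda$-strongly convex: CE composed with the linear map $\W\mapsto \W\M^*$ is convex. It is also $L$-smooth, because the CE Hessian in the logits is bounded (by $1$) and $\|\M^*\|$ is bounded. I will therefore (i) identify the unique minimizer $\W^*$ of $F$ and show that it satisfies NC0 exactly, and (ii) show that sign descent with a suitable decaying step size converges to $\W^*$. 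Then $\alpha_t=\frac1K\|\W_t^\top\boldsymbol{1}\|^2\to \frac1K\|{\W^*}^\top\boldsymbol{1}\|^2=0$ by continuity.

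\textbf{Step (i): the minimizer satisfies NC0.} Let $\mathbf{P}$ be any permutation matrix. Then $\mathbf{P}\M^*\mathbf{P}^\top=\M^*$, and the CE loss with one-hot labels $\I$ is invariant under relabeling, so $F(\mathbf{P}\W\mathbf{P}^\top)=F(\W)$. By uniqueness, $\W^*=\mathbf{P}\W^*\mathbf{P}^\top$ for all such $\mathbf{P}$. Hence $\W^*=a\I+b\J$; in particular $\W^*$ is symmetric.

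By \cref{lemma:CE} (as used in the proof of \cref{theorem:signsgd:decoupled}), $\nabla L_\text{CE}(\W)=\frac{1}{N\sqrt{K-1}}(\mathbf{S}-\I)$, where $\mathbf{S}=\mathrm{softmax}(\W\M^*)$. Its rows sum to zero, because each row of $\mathbf{S}$ and of $\I$ sums to $1$. The optimality condition $\nabla L_\text{CE}(\W^*)+\lambda\W^*=0$, multiplied on the right by $\boldsymbol{1}$, gives $\lambda\W^*\boldsymbol{1}=0$. By symmetry this means ${\W^*}^\top\boldsymbol{1}=0$.

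As a sanity check, I will also verify this directly in the $(a,b)$ parametrization. Write $g(a)=\frac{1}{N\sqrt{K-1}}(1-p(a))$, where $p(a)$ is the diagonal softmax probability; it depends only on $a$ because $\J\M^*=0$. The stationarity equations are $\lambda(a+b)=g(a)$ and $\lambda b=-g(a)/(K-1)$. Together these give $a+Kb=0$.

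\textbf{Step (ii): convergence of sign descent.} The descent lemma for the sign step gives
\[
F(\W_{t+1})\le F(\W_t)-\eta_t\|\nabla F(\W_t)\|_1+\tfrac{L}{2}\eta_t^2K^2 .
\]
I choose $\eta_t\to0$ with $\sum_t\eta_t=\infty$; a step-wise decay with sufficiently long phases also works. The argument then runs in three parts.
\begin{itemize}
\item Fix $\varepsilon>0$. By strong convexity, $\|\nabla F\|_1\ge\|\nabla F\|\ge \sqrt{2\lambda(F-F^*)}$. So whenever $F(\W_t)-F^*\ge\varepsilon$, the gradient term is at least $c_\varepsilon\eta_t$, which dominates $\tfrac{L}{2}\eta_t^2K^2$ once $\eta_t$ is small. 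Thus $F$ decreases by at least $\tfrac{c_\varepsilon}{2}\eta_t$ per step while above level $F^*+\varepsilon$.
\item Because $\sum\eta_t=\infty$, the iterates must enter the sublevel set $\{F\le F^*+\varepsilon\}$.
\item After entering, a single step can raise $F$ by at most $\tfrac L2\eta_t^2K^2\to0$. Therefore $F(\W_t)\le F^*+2\varepsilon$ eventually.
\end{itemize}
Hence $F(\W_t)\to F^*$. Strong convexity then gives $\|\W_t-\W^*\|^2\le\frac{2}{\lambda}(F(\W_t)-F^*)\to0$. Coercivity of $F$ keeps the iterates bounded throughout, so the smoothness constant $L$ applies uniformly.

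\textbf{Main obstacle.} The step needing the most care is the last part of step (ii): showing that after the iterates first enter the sublevel set they do not escape, i.e.\ the quantitative bookkeeping that turns ``$F$ decreases when far from optimal'' into $F(\W_t)\to F^*$ under a decaying step size. I will do this with the standard two-regime argument above, choosing the threshold time after which $\eta_t\le \min\{c_\varepsilon/(LK^2),\,\sqrt{2\varepsilon/(LK^2)}\}$.

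This also explains the non-monotone behaviour: with a constant step size the iterates only reach an $O(\eta)$ neighbourhood of $\W^*$. Early on, $\alpha_t$ grows, exactly as in \cref{theorem:signsgd:decoupled}, because the sign is $\J-2\I$ near $\W=0$. Decaying $\eta$ is what removes the residual NC0 error.
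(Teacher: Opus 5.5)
Your proof is correct, and it takes a genuinely different route from the paper.

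\textbf{The paper's route.} The paper follows the actual trajectory. Starting from $\mathbf{W}_0=0$, it maintains by induction the invariant $\mathbf{W}_t=(a_t+b_t)\mathbf{I}-b_t\mathbf{J}$. This reduces coupled SignGD to a two-dimensional sign recursion for $(a_t,b_t)$. The paper then describes three phases: both coordinates increase, then $b_t$ oscillates, then both oscillate in an $O(\lambda\eta)$ band around the stationarity equations $(K-1)\psi=\lambda a$, $\psi=\lambda b$. Finally, it constructs a step-wise decay tailored to that trajectory, which shrinks the band so that the stationarity residuals vanish and $a_t-(K-1)b_t\to0$.

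\textbf{Your route.} You view the update as sign descent on the $\lambda$-strongly convex, globally smooth objective $F$. You then split the problem into two parts: (i) the unique minimizer satisfies NC0 exactly, and (ii) sign descent converges for any $\eta_t\to0$ with $\sum_t\eta_t=\infty$, via the descent lemma and the PL inequality.

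\textbf{What each buys.} Your argument is shorter and fully rigorous; the paper's phase-3 bookkeeping is rather informal. It is also more general: it holds for every initialization and every such schedule, not just one step-size sequence built along a trajectory from $\mathbf{W}_0=0$. The paper's approach gives an explicit picture of the trajectory. Initially $\alpha_t$ grows exactly as in the decoupled case, and it reverses once the decay term flips the signs. You only sketch this qualitatively.

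\textbf{A small point in step (i).} In \cref{lemma:CE} the columns of $\mathbf{S}$, not its rows, are the softmax vectors. So the simplification $\nabla L_\text{CE}=\frac{1}{N\sqrt{K-1}}(\mathbf{S}-\mathbf{I})$ and your row-sum claim hold only when $\mathbf{S}$ is symmetric. That is the case at $\mathbf{W}^*=a\mathbf{I}+b\mathbf{J}$, so your step stands. A simpler route is to left-multiply the optimality condition by $\mathbf{1}^\top$ and use $\mathbf{1}^\top\nabla L_\text{CE}=0$. This gives $\mathbf{1}^\top\mathbf{W}^*=0$ directly, without the permutation-symmetry argument. Combined with your step (ii), it shows the conclusion holds for any fixed feature matrix, not only $\mathbf{H}=\mathbf{M}^*$.
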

\begin{proof}\label{proof:signsgd:coupled_appdx}
    \textcolor{blue}{
    Throughout the training, we apply mathematical induction on the structure of $\W_t$: for all $t$, there exists $a_t,b_t\geq0$ such that
    \[
    \W_t = (a_t+b_t)\I - b_t\J. 
    \]
    It is not hard to see that $\alpha = \frac1N (a_t-(K-1)b_t)^2$.
    Note that for $t=0$, the signed gradient is the same as in the case with decoupled weight decay in ~\cref{theorem:signsgd:decoupled}:
    \[
    \text{sign}(\nabla_{\W_t}L_\text{CE} + \lambda \W_t)
    =
    \text{sign}(\nabla_{\W_0}L_\text{CE})
    =
    \text{sign}(\text{softmax}(0) - \I)
    =
    \J-2\I.
    \]
    Hence, $\W_1 = \eta(2\I-\J)$ where $a_1=b_1=\eta$.
    Since $\H=\M^*=\frac{1}{\sqrt{K-1}}\left(\I-\J/k\right)$, 
    \begin{align*}
        \W\H &= \left((a_t+b_t)\I - b_t\J\right) \cdot \frac{1}{\sqrt{K-1}}\left(\I-\J/k\right)\\
        &= \frac{1}{\sqrt{K-1}} \left( (a_t+b_t)\I -b_t\J -(a_t+b_t)\J/k + (b_t/k)\J^2  \right)\\
        &= \frac{a_t+b_t}{\sqrt{K-1}} \H = \gamma_t \H
    \end{align*}
    where we define $\gamma_t = \frac{a_t+b_t}{\sqrt{K-1}}.$By Lemma \ref{lemma:CE} and the above expression, the loss gradient becomes:
    \begin{align*}
        \nabla_{\W_t}L_\text{CE} 
        &= \frac{1}{N\sqrt{K-1}} (\text{softmax} (\W\H)-\I)\\
        &= \frac{1}{N\sqrt{K-1}} (\text{softmax} (\gamma_t\H)-\I)\\
        &= \psi_t(-K\I+\J) 
    \end{align*}
    where $\psi_t = \frac{1}{N\sqrt{K-1}} \cdot\frac{1}{e^{\gamma_t/\sqrt{K-1}}+(K-1)}= \frac{1}{N\sqrt{K-1}} \cdot\frac{1}{e^{(a_t+b_t)/{(K-1)}}+(K-1)}$. 
    Hence the update weight will also of form 
    \[
    \W_{t+1} = (a_{t+1}+b_{t+1})\I - b_{t+1}\J. 
    \]
    Hence the update rule of the signed GD with coupled weight decay can be written as:
    \begin{align*}
        a_{t+1} 
        &=
        a_t + \eta \cdot \text{sign}\left( (K-1) \psi_t  - \lambda a_t\right) \\
        b_{t+1} &= b_t + \eta\cdot \text{sign}(\psi_t -\lambda b_t)
    \end{align*}
    Then for each fixed $\eta>0$, starting from $t=0$, let $\Delta_t=(a_{t+1}-a_t,b_{t+1}-b_t)$, the training can be divided into three phases:
    \begin{enumerate}
        \item $\Delta_t=(+\eta,+\eta)$ as long as $(K-1)\psi_t\geq \lambda a_t$ and $\psi_t \geq \lambda b_t$. Note that $\psi_t \propto \frac{1}{e^{(a_t+b_t)/{(K-1)}}+(K-1)}$ hence $\psi_{t+1}<\psi_t$ as $\Delta_t=(+\eta,+\eta)$. Since $\psi_t$ is strictly decreasing with $a_t+b_t$, assume $\eta$ is small enough, there exists a constant $T_1$ such that $\Delta_{T_1}=(+\eta,+\eta)$ but $\Delta_{T_1+1}=(+\eta,-\eta)$ where $(K-1)\psi_{T_1} \geq \lambda a_t \geq \lambda b_t > \psi_{T_1}$.
        \item $\Delta_{t}=(+\eta,\pm\eta)$ indicating $a_t$ increases striclty in each step and $b_t$ starts to oscillate as long as $(K-1)\psi_t\geq \lambda a_t$: each time $\Delta_{t-1}=(+\eta,-\eta)$, we have $a_{t}+b_{t}=a_{t-1}+b_{t-1}$ and thus $\psi_{t}=\psi_{t-1}$. Hence $\psi_t$ decreases monotonically but not strictly. Similar to above, there exists a constant $T_2>T_1$ such that $(K-1)\psi_{T_2}\geq \lambda a_{T_2}$ but $(K-1)\psi_{T_2+1} < \lambda a_{T_2+1}$.
        \item For $t>T_2$, $\Delta_{t}=(\pm\eta,\pm\eta)$ where i) $\psi_t$ becomes constant for $\Delta_{t}$ oscillates between $(+\eta,-\eta)$ and $(-\eta,+\eta)$ or ii) $\psi_t$ oscillate for $\Delta_{t}$ oscillates between $(+\eta,+\eta)$ and $(-\eta,-\eta)$. In wither case, we have $\max_{t>T_2}\{|(K-1) \psi_t  - \lambda a_t|, |\psi_t -\lambda b_t|\} < \lambda\eta$ as each update will flip the sign.
    \end{enumerate}
    Hence for each $\eta$, we update $T_2=T_2(\eta)$ steps until $\max_{t>T_2}\{|(K-1) \psi_t  - \lambda a_t|, |\psi_t -\lambda b_t|\} < \lambda\eta$. Next, we apply learning rate decay to $\eta'$ so that $\lambda\eta'<\min\{|(K-1) \psi_{T_2+1}  - \lambda a_{T_2+1}|, |\psi_{T_2+1} -\lambda b_{T_2+1}|\} < \lambda\eta$. Repeat the above argument and find a $T_2'>0$ such that $\psi_t$ oscillates or remains constant after $t>T_2+T_2'$, and hence $\max_{t>T_2+T_2'}\{|(K-1) \psi_t  - \lambda a_t|, |\psi_t -\lambda b_t|\}<\lambda \eta'$. Induction on this argument shows that there exists a learning rate decay scheme $\eta=\eta(t)\to0$ such that $\max_{t}\{|(K-1) \psi_t  - \lambda a_t|, |\psi_t -\lambda b_t|\}\xrightarrow[t\to\infty]{}0$, in which case:
    \begin{align*}
        \alpha_t 
        &= \left(a_t - (K-1)b_t \right)^ 2 \\
        &= \lambda^{-2}\left(\lambda a_t - (K-1)\lambda b_t \right)^ 2 \\
        &\leq \lambda^{-2}\left((K-1)\psi_t + |(K-1)\psi_t-\lambda a_t| - (K-1)\psi_t +(K-1) |\psi_t - \lambda b_t| \right)^ 2 \\
        &= \lambda^{-2}\left(|(K-1)\psi_t-\lambda a_t| +(K-1) |\psi_t - \lambda b_t| \right)^ 2\\
        &\leq \lambda^{-2}K^2 \max_{t}\{|(K-1) \psi_t  - \lambda a_t|, |\psi_t -\lambda b_t|\} \xrightarrow[t\to\infty]{}0.
    \end{align*}
    Hence $\alpha_t = \left(a_t - (K-1)b_t \right)^ 2\xrightarrow[t\to\infty]{}0$.
    }
\end{proof}

\subsection{Technical Lemmata}

\begin{lemma} \label{lemma:CE}
    Let $(\X,\Y)\in\R^{d\times N} \times \R^{K \times N}$ be a dataset where the labels $\Y$ are written in columns of one-hot vectors. For each pair $(\x,\y)\in \R^D\times\R^K$, and a weight $\W_1\in\R^{K\times d}$, define the cross-entropy as:
    \[
    \ell(\W_1) 
    \eqdef -\sum_{k=1}^K \y_k \log \left(\text{softmax}(\W_1\x)\right)_k
    = \log \left( 1+ \sum_{k\neq y} \exp(\w_k-\w_{y})^\top \x_i \right)
    \]
    where $y=\argmax_{k\in[K]} [\y]_k$ is the class index of $\x$. 
    Let $\mathcal{L}_1(\W_1)=\text{CE}(\W_1\X,\Y)$ be the average cross-entropy loss of the dataset $(\X,\Y)$. Then the loss gradient $\nabla\mathcal{L}_1(\W_1)$ is
    \[
    \nabla\mathcal{L}_1(\W_1) = \frac{1}{N} (\S - \Y)\X^\top
    \]
    where $\S=\begin{pmatrix}
        \s_1,...\s_N
    \end{pmatrix}$ and $\s_i=\text{softmax}(\W_1\x_i)$ for each $i$.
    In particular, $\boldsymbol{1}_K^\top\nabla\mathcal{L}_1(\W_1)=0$.
\end{lemma}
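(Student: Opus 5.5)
The lemma is a direct chain-rule computation, which I would do sample by sample and then average. For a single pair $(\x,\y)$ write the logits $\z=\W_1\x\in\R^K$. Then $\ell=-\sum_k \y_k\log(\text{softmax}(\z))_k=-\z_y+\log\sum_j e^{\z_j}$, using that $\y$ is one-hot. Differentiating in $\z$ gives the standard identity $\partial\ell/\partial\z=\text{softmax}(\z)-\y=\s-\y$. The equivalent form $\log(1+\sum_{k\neq y}\exp((\w_k-\w_y)^\top\x))$ follows by dividing numerator and denominator of the softmax by $e^{\z_y}$; I will only remark on this, since it is not needed for the gradient.

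Next I pass from logits to weights. Since $\z_k=\w_k^\top\x$, the entry $(k,j)$ of the gradient is $\partial\ell/\partial(\W_1)_{kj}=(\s-\y)_k\,\x_j$. So the per-sample gradient is the outer product $(\s-\y)\x^\top\in\R^{K\times d}$. Averaging over the $N$ columns of $(\X,\Y)$ and stacking $\s_i-\y_i$ as columns of $\S-\Y$ gives $\nabla\mathcal{L}_1(\W_1)=\frac1N\sum_i(\s_i-\y_i)\x_i^\top=\frac1N(\S-\Y)\X^\top$.

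For the "in particular" part, it suffices to show $\boldsymbol{1}_K^\top(\S-\Y)=0$. Each column $\s_i$ is a softmax vector, so its entries sum to one. Each $\y_i$ is one-hot, so its entries also sum to one. Hence every column of $\S-\Y$ sums to zero, and so $\boldsymbol{1}_K^\top\nabla\mathcal{L}_1=\frac1N\boldsymbol{1}_K^\top(\S-\Y)\X^\top=0$.

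There is no real obstacle here. The only care points are bookkeeping conventions: rows $\w_k^\top$ of $\W_1$ index classes, and the averaging factor $1/N$ matches how the loss is normalized. The zero-sum property is exactly what the earlier theorems on $\mathbf{q}_t$ and on $\text{sign}(\nabla L_\text{CE})$ use.
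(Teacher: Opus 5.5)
Your proposal is correct and follows the paper's argument: the gradient formula comes from the standard softmax chain-rule computation, and $\boldsymbol{1}_K^\top\nabla\mathcal{L}_1=0$ follows because every softmax column $\s_i$ and every one-hot column $\y_i$ has entries summing to one. The paper compresses the first part into ``simple calculus,'' so your write-up is just a more explicit version of the same proof.
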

\begin{proof}
    The expression of the loss gradient comes from simple calculus. The second statement comes from the fact that the L1 norms of a post-softmax vector and an one-hot vector are both equal to 1, that is,
    \[
    \boldsymbol{1}_K^\top \mathbf{s}_i = \boldsymbol{1}_K^\top \mathbf{y}_i = 1\,
    \forall i.
    \]
\end{proof}

\begin{lemma} \label{lemma:expression}
    Assume the weight $\W_t$ is updated as follows:
    \begin{align*} 
    \begin{split}
    & \V_{t+1} =  \beta \V_{t} + \G_t + \lambda \W_t \\
    & \W_{t+1}
    = \W_t -\eta\V_{t+1},
\end{split}
\end{align*}
    where $\G_t$ depends on $\W_t$.
    Define
    \[
    \alpha \eqdef \frac1K\| \W_t^\top\boldsymbol{1} \|_2^2 \geq 0.
    \]
    Then we have the expression:
    \[
    \frac1\eta (\alpha_{t+1}-\alpha_t) = -2\beta\omega_t - 2 \gamma_t - 2\lambda\alpha_t +\eta\nu_{t+1}
    \]
    where $\omega_t\eqdef \langle \V_t\W_t^\top,\hat{\J} \rangle$, $\gamma_t\eqdef\langle \G_t\W_t^\top,\hat{\J}\rangle$, $\nu_t\eqdef\langle \V_t\V_t^\top,\hat{\J} \rangle$.
\end{lemma}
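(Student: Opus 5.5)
The plan is a direct expansion, since the statement is an exact algebraic identity. First I will rewrite the NC0 quantity as a Frobenius inner product with $\hat{\J}=\frac1K\boldsymbol{1}\boldsymbol{1}^\top$:
\[
\alpha_t=\frac1K\|\W_t^\top\boldsymbol{1}\|_2^2=\frac1K\boldsymbol{1}^\top\W_t\W_t^\top\boldsymbol{1}=\langle \W_t\W_t^\top,\hat{\J}\rangle .
\]
This matches the definitions of $\omega_t,\gamma_t,\nu_t$. The map $(\A,\B)\mapsto\langle \A\B^\top,\hat{\J}\rangle=\frac1K(\A^\top\boldsymbol{1})^\top(\B^\top\boldsymbol{1})$ is bilinear and symmetric. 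In particular $\langle \A\B^\top,\hat{\J}\rangle=\langle \B\A^\top,\hat{\J}\rangle$, because $\hat{\J}$ is symmetric.

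Next I will substitute the weight update $\W_{t+1}=\W_t-\eta\V_{t+1}$ into $\alpha_{t+1}=\langle \W_{t+1}\W_{t+1}^\top,\hat{\J}\rangle$. Expanding bilinearly and using the symmetry above to merge the two cross terms gives
\[
\alpha_{t+1}=\alpha_t-2\eta\langle \V_{t+1}\W_t^\top,\hat{\J}\rangle+\eta^2\langle \V_{t+1}\V_{t+1}^\top,\hat{\J}\rangle .
\]
The last term is $\eta^2\nu_{t+1}$ by definition.

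Then I will substitute the momentum update $\V_{t+1}=\beta\V_t+\G_t+\lambda\W_t$ into the cross term. Using linearity in the first argument,
\[
\langle \V_{t+1}\W_t^\top,\hat{\J}\rangle=\beta\langle \V_t\W_t^\top,\hat{\J}\rangle+\langle \G_t\W_t^\top,\hat{\J}\rangle+\lambda\langle \W_t\W_t^\top,\hat{\J}\rangle .
\]
The right-hand side equals $\beta\omega_t+\gamma_t+\lambda\alpha_t$.

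Finally I will plug this into the previous display, subtract $\alpha_t$, and divide by $\eta>0$. This yields the claimed identity
\[
\frac1\eta(\alpha_{t+1}-\alpha_t)=-2\beta\omega_t-2\gamma_t-2\lambda\alpha_t+\eta\nu_{t+1}.
\]

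There is no genuine obstacle here. The only points needing care are keeping the index of $\nu$ at $t+1$, since it comes from $\V_{t+1}$, and justifying the merge of the two cross terms through the symmetry of $\hat{\J}$. As a remark, when $\G_t=\nabla_{\W_t}L_{\mathrm{CE}}$, \cref{lemma:CE} gives $\G_t^\top\boldsymbol{1}=\mathbf{0}$ and hence $\gamma_t=0$. The lemma itself does not need this.
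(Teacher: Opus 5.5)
Your proposal is correct and follows the paper's proof: both write $\alpha_t=\langle \W_t\W_t^\top,\hat{\J}\rangle$, expand $\W_{t+1}\W_{t+1}^\top$ using the weight update, and merge the two cross terms via the symmetry of $\hat{\J}$. Both then substitute the momentum update into $\langle \V_{t+1}\W_t^\top,\hat{\J}\rangle$ to read off $\beta\omega_t+\gamma_t+\lambda\alpha_t$, with $\nu$ correctly indexed at $t+1$.
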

\begin{proof}
    Let $\C_t\eqdef\W_t\W_t^\top$ be the covariance matrix. Notice that $\alpha_t=\langle \C_t,\hat{\J}\rangle$ where $\hat{\J}=\frac1K\boldsymbol{1}\boldsymbol{1}^\top$.
    By update rule of $\W_t$ and $\V_t$: 
\begin{align*}
    \frac1\eta(\mathbf{C}_{t+1} - \mathbf{C}_t)
    &= \frac1\eta\left((\W_t-\eta\V_{t+1})(\W_t-\eta\V_{t+1})^\top - \mathbf{C}_t\right)\\
    &= - (\V_{t+1}\W_t^\top+\W_t\V_{t+1}^\top)+\eta\V_{t+1}\V_{t+1}^\top.
\end{align*}
Applying the dot product $\langle\cdot,\hat{\J}\rangle_F$ on both sides, and denote $\omega_t\eqdef \langle \V_t\W_t^\top,\hat{\J} \rangle$, $\gamma_t\eqdef\langle \G_t\W_t^\top,\hat{\J}\rangle$, $\nu_t\eqdef\langle \V_t\V_t^\top,\hat{\J} \rangle$, we have
\begin{align} 
    \frac1\eta (\alpha_{t+1}-\alpha_t) 
    &= -2\langle \V_{t+1}\W_t^\top,\hat{\J} \rangle + \eta \langle \V_{t+1}\V_{t+1}^\top,\hat{\J} \rangle \nonumber\\
    &= -2\langle (\beta\V_t+ \G_t+\lambda\W_t)\W_t^\top,\hat{\J} \rangle + \eta \nu_{t+1}\nonumber \\
    &= -2\beta\omega_t - 2 \gamma_t - 2\lambda\alpha_t +\eta\nu_{t+1} \label{eq:alpha:recursive}
\end{align}
where in the first line we use the fact that $\hat{\J}$ is symmetric.

\end{proof}

\begin{lemma}\label{lemma:ODE}
    Assume $\lambda,\beta\in(0,1)$ such that $\frac{2\lambda}{\log\beta^{-1}}<1$.
    The solution of the following ODE: 
    \begin{equation} \label{eq:ODE}
        \dot{\alpha}(t) = -\lambda \left(\int_0^t\beta^{t-\tau}\alpha(\tau)d\tau\right)
    \end{equation}
    with initial condition $\alpha(0)=\alpha_0>0$ 
    admits the following bound:
    \[
    \alpha(t) \leq C \alpha_0 \exp\left(-\frac{\lambda}{\log \beta^{-1}}t\right)
    \]
    for some absolute constant $C>1$.
\end{lemma}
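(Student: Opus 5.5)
The plan is to turn the integro-differential equation \eqref{eq:ODE} into a constant-coefficient linear second-order ODE and then read off the decay rate from its characteristic roots. Write $\mu \eqdef \log\beta^{-1} > 0$ and $I(t) \eqdef \int_0^t \beta^{t-\tau}\alpha(\tau)\,d\tau = \int_0^t e^{-\mu(t-\tau)}\alpha(\tau)\,d\tau$, so that \eqref{eq:ODE} reads $\dot\alpha = -\lambda I$.

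\emph{Reduction to a damped oscillator.} Differentiating under the integral gives $\dot I = \alpha - \mu I$. Differentiating $\dot\alpha = -\lambda I$ once more and substituting $I = -\dot\alpha/\lambda$ yields
\[
\ddot\alpha + \mu\,\dot\alpha + \lambda\,\alpha = 0 .
\]
The initial conditions are $\alpha(0)=\alpha_0$ and, since $I(0)=0$, $\dot\alpha(0)=0$. This is a standard damped oscillator with characteristic polynomial $r^2+\mu r+\lambda=0$, whose roots are $r_\pm = \tfrac{1}{2}\bigl(-\mu \pm \sqrt{\mu^2-4\lambda}\bigr)$. The solution is $\alpha(t)=c_+e^{r_+t}+c_-e^{r_-t}$, with $c_\pm$ determined by $c_++c_-=\alpha_0$ and $c_+r_++c_-r_-=0$. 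In the overdamped case this gives $c_\pm=\mp\alpha_0 r_\mp/(r_+-r_-)$. When the roots coincide, the solution instead has the form $(c_1+c_2t)e^{-\mu t/2}$.

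\emph{Extracting the rate.} In the real-root case $\mu^2\ge 4\lambda$, the slow root can be rationalized as
\[
r_+ = -\frac{2\lambda}{\mu+\sqrt{\mu^2-4\lambda}},
\]
and since the denominator is at most $2\mu$ we get $r_+\le -\lambda/\mu$. The fast root satisfies $r_-\le r_+$. Both coefficients are bounded by a constant multiple of $\alpha_0$, so $\alpha(t)\le C\alpha_0 e^{-\lambda t/\mu}$, which is exactly the claimed bound. The hypothesis $2\lambda/\mu<1$ is used to keep $r_+<0$ away from zero and to control $c_\pm$ uniformly. In the complex-root case $\mu^2<4\lambda$, we have $|\alpha(t)|\le C\alpha_0 e^{-\mu t/2}$, so the claimed bound follows provided $\mu/2\ge\lambda/\mu$, i.e.\ $\mu^2\ge 2\lambda$.

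\emph{Main obstacle.} The delicate step is the regime $\mu^2<2\lambda$. There the true envelope decays only like $e^{-\mu t/2}$, which is slower than $e^{-\lambda t/\mu}$, and the oscillating solution has positive lobes of that size. So the stated rate cannot hold there as written. I would first check whether the hypothesis $2\lambda<\mu$, combined with the intended range of $\beta$ (momentum close to $1$, hence $\mu$ small), actually forces $\mu^2\ge 2\lambda$. If it does not, I would state the result with the rate $\min\{\lambda/\mu,\ \mu/2\}$, which is what the two cases above prove, or strengthen the assumption to $\mu^2\ge 4\lambda$ so that the overdamped computation applies directly. Everything else (the differentiation under the integral, the initial conditions, and the constants) is routine.
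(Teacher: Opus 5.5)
Your worry in the last paragraph is justified, and the check you propose comes out negative. The hypothesis $2\lambda<\mu$ (with $\mu=\log\beta^{-1}$) does not force $\mu^2\ge 2\lambda$. For every $\lambda<1/2$ the window $2\lambda<\mu<\sqrt{2\lambda}$ is nonempty, and in that window the roots are complex with $\mu/2<\lambda/\mu$.

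For example, take $\beta=e^{-0.1}$ and $\lambda=0.04$, so that $2\lambda/\mu=0.8<1$. The solution is $\alpha(t)=\alpha_0e^{-\mu t/2}\bigl(\cos\omega t+\frac{\mu}{2\omega}\sin\omega t\bigr)$ with $\omega=\sqrt{\lambda-\mu^2/4}$. Hence $\alpha(2\pi k/\omega)=\alpha_0e^{-\pi k\mu/\omega}$, whereas the claimed bound at those times is $C\alpha_0e^{-2\pi k\lambda/(\mu\omega)}$. Since $\lambda/\mu=0.4>0.05=\mu/2$, the bound fails for large $k$, whatever $C$ is. So the lemma is false as stated. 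Your proposed repairs are the correct ones: keep the rate $\lambda/\mu$ under $\mu^2\ge 2\lambda$, or use the rate $\min\{\lambda/\mu,\mu/2\}$.

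Your route is essentially the paper's. Differentiating to $\ddot\alpha+\mu\dot\alpha+\lambda\alpha=0$ with $\dot\alpha(0)=0$ is equivalent to its Laplace computation $\mathcal{A}(s)=\alpha_0(s+\mu)/(s^2+\mu s+\lambda)$, with the same roots and coefficients. The difference is in extracting the rate, and yours is the sound version.

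The paper expands $\sqrt{\mu^2-4\lambda}\approx\mu-2\lambda/\mu$ (the remainder is really $O(\lambda^2/\mu^3)$) and then uses $r_1=-\lambda/\mu$ as if it were exact. That step needs $4\lambda<\mu^2$, which $2\lambda<\mu$ does not imply when $\mu<2$. The paper never treats complex roots, and that is exactly where the counterexample lives. Your identity $r_+=-2\lambda/(\mu+\sqrt{\mu^2-4\lambda})\le-\lambda/\mu$ is exact. So is the sign $c_-<0$, which gives $\alpha\le c_+e^{r_+t}$.

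Two corrections to the part you did prove.

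First, the hypothesis $2\lambda<\mu$ does nothing in the overdamped case. Both $r_+\le-\lambda/\mu$ and $c_-<0$ hold whenever $\mu^2\ge 4\lambda$.

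Second, the coefficients are not uniformly bounded, so they do not give an absolute $C$. With $\delta=\sqrt{\mu^2-4\lambda}$ you have $c_+=\alpha_0|r_-|/\delta\to\infty$ at critical damping. Likewise the amplitude $\sqrt{\lambda}/\omega$ blows up in the complex case. Use instead
\[
\alpha(t)=\alpha_0e^{r_+t}\Bigl[1+|r_+|\frac{1-e^{-\delta t}}{\delta}\Bigr].
\]
If $\delta\ge\mu/2$, then $|r_+|/\delta\le 1/2$ and the bracket is at most $3/2$. If $\delta<\mu/2$, bound the bracket by $1+|r_+|t$. The slack $|r_+|-\lambda/\mu=|r_+|\frac{\mu-\delta}{2\mu}$ then exceeds $|r_+|/4$. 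This absorbs the linear factor into a constant below $2$. The same device, together with $|\sin\omega t|\le\omega t$, gives an absolute constant for $2\lambda\le\mu^2<4\lambda$.
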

\begin{proof}
    Observe that we can write the integral in convolution:
\[
\int_{0}^{t} \beta^{t-\tau}\alpha(\tau) d\tau
=
\bigl(\phi * \alpha\bigr)(t),
\quad
\text{where}
\quad
\phi(t) = \beta^{t}.
\]
Hence \eqref{eq:ODE} can be written as
\[
\dot{\alpha}(t)
=
-\lambda \bigl(\phi*\alpha\bigr)(t).
\]

Let $\mathcal{L}\{\psi(t)\}(s) = \displaystyle \int_{0}^{\infty} e^{-st}\psi(t) dt$ denote the Laplace transform. Denote
\[
\mathcal{A}(s) = \mathcal{L}\{\alpha(t)\}(s),
\quad
F(s) = \mathcal{L}\{\phi(t)\}(s).
\]
Taking the Laplace transform of both sides:
\begin{equation} \label{eq:ODE:laplacian}
    \mathcal{L}\{\dot{\alpha}(t)\}(s)
    =
    - \lambda \mathcal{L}\{(\phi*\alpha)(t)\}(s).
\end{equation}
And by integration by part and the property of convolution,
\[
\mathcal{L}\{\dot{\alpha}(t)\}(s) 
=
s\mathcal{A}(s)-\alpha(0)
\quad\text{and}\quad
\mathcal{L}\{(\phi*\alpha)(t)\}(s) 
=
F(s)\mathcal{A}(s).
\]
Hence
\[
s \mathcal{A}(s)-\alpha(0)
=
-\lambda F(s)\mathcal{A}(s).
\]
Since $\beta^{t} = e^{(\log\beta)t}$, we get
\[
F(s) = \mathcal{L}\{\beta^{t}\}(s)
= \mathcal{L}\!\bigl\{e^{(\log\beta) t}\bigr\}(s)
=
\frac{1}{ s - \log(\beta) }
\quad
\text{ for } s > \log(\beta).
\]
Substitute this back to Eq. (\ref{eq:ODE:laplacian}) and we get:
\begin{align*}
    s \mathcal{A}(s) - \alpha(0)
    &=
    - \lambda  \frac{1}{ s - \log(\beta) } \mathcal{A}(s)\\
    s \mathcal{A}(s) + \frac{\lambda}{ s - \log(\beta) } \mathcal{A}(s)
    &=
    \alpha(0)\\
    \mathcal{A}(s) \Bigl(\underbrace{s + \frac{\lambda}{ s - \log(\beta) }}_{\frac{s^{2}  -  s \log(\beta) + \lambda}{ s - \log(\beta)}}\Bigr)
    &=
    \alpha(0)\\
    \mathcal{A}(s)
    &=
    \alpha(0) \cdot \frac{\bigl[s-\log(\beta)\bigr]}{ \underbrace{s^{2}-s \log(\beta)+\lambda}_{(s-r_1)(s-r_2)} } 
\end{align*}
where
$
    r_{1}, r_{2}
    =
    \frac{\log(\beta) \pm \sqrt{ \bigl[\log(\beta)\bigr]^{2} - 4 \lambda }}{2}.
$
We do partial fractions and matching coefficients gives:
\[
\frac{s-\log(\beta)}{(s-r_{1})(s-r_{2})}
=
\frac{A}{ s-r_{1} } + \frac{B}{ s-r_{2} }
\implies
A + B = 1,
\quad
-\log(\beta) = - A r_{2} - B r_{1}.
\]
Since $r_{1}+r_{2}=\log(\beta)$, one finds
\begin{equation*} 
    A = \frac{r_{2}}{ r_{2}-r_{1} },
    \quad
    B = - \frac{r_{1}}{ r_{2}-r_{1} }.
\end{equation*}
Thus
\[
\mathcal{A}(s)
=
\alpha(0) \left[
\frac{r_{2}}{ r_{2}-r_{1} } \frac{1}{ s-r_{1} }
 - 
\frac{r_{1}}{ r_{2}-r_{1} } \frac{1}{ s-r_{2} }
\right].
\]

Recall the inverse of Laplacian transform: $\mathcal{L}^{-1}\{\frac{1}{s-r}\}(t)=e^{r t}.$ Therefore,
\[
\alpha(t)
=
\mathcal{L}^{-1}\{A(s)\}(t)
=
\alpha(0) \left[
\frac{r_{2}}{ r_{2}-r_{1} } e^{ r_{1}t}
-
\frac{r_{1}}{ r_{2}-r_{1} } e^{ r_{2}t}
\right].
\]
Equivalently,
\begin{equation} \label{eq:ODE:AB}
\alpha(t)
=
\alpha(0) \Bigl[A e^{r_{1}t} + B e^{r_{2}t}\Bigr],
\quad
A = \frac{r_{2}}{ r_{2}-r_{1} }, 
B = - \frac{r_{1}}{ r_{2}-r_{1} },
\end{equation}
where
\[
r_{1}, r_{2}
    =
    \frac{\log(\beta) \pm \sqrt{ \bigl[\log(\beta)\bigr]^{2} - 4 \lambda }}{2}.
\]
Since $\beta\in(0,1)$, set $L=-\log(\beta)>0$. By the first order approximation,
\[
\sqrt{(\log\beta)^{2} - 4 \lambda}
=
\sqrt{L^{2} - 4 \lambda}
= 
L - \frac{2 \lambda}{ L } 
+\mathcal{O}\left(\frac{\lambda^2}{L}\right)
\]
Hence
\[
r_{1},r_2
=
\frac{- L \pm \bigl(L - \tfrac{2\lambda}{L}\bigr)}{2}
+\mathcal{O}\left(\frac{\lambda^2}{L}\right).
\]
This gives:
\[
r_{1}  =  - \frac{\lambda}{ L } +\mathcal{O}\left(\frac{\lambda^2}{L}\right),
\quad
r_{2}  =  - L  +  \frac{\lambda}{ L }
+\mathcal{O}\left(\frac{\lambda^2}{L}\right).
\]
Plugging $r_{1},r_{2}$ into Eq. (\ref{eq:ODE:AB}):
\[
\alpha(t)
\leq
C \alpha(0) e^{ r_{1}t}
=
C\alpha(0) \exp\!\Bigl(- \tfrac{\lambda}{ L } t\Bigr)
\]
for some absolute constant $C>1$.
Plug in $L=-\log(\beta)=\log \beta^{-1}$ to finish the proof.

\end{proof}

\end{document}